\ifcvprfinal\pagestyle{empty}\fi
\newtheorem{theorem}{Theorem}
\newtheorem{problem}{Problem}
\newtheorem{definition}[theorem]{Definition}
\newtheorem{proposition}[theorem]{Proposition}
\newtheorem{remark}[theorem]{Remark}
\newtheorem{example}[theorem]{Example}
\renewcommand{\cf}{\emph{cf.}\xspace}
\newcommand{\bdmath}{\begin{dmath}}
\newcommand{\edmath}{\end{dmath}}
\newcommand{\beq}{\begin{equation}}
\newcommand{\eeq}{\end{equation}}
\newcommand{\bdm}{\begin{displaymath}}
\newcommand{\edm}{\end{displaymath}}
\newcommand{\bea}{\begin{eqnarray}}
\newcommand{\eea}{\end{eqnarray}}
\newcommand{\beal}{\beq \begin{array}{ll}}
\newcommand{\eeal}{\end{array} \eeq}
\newcommand{\beas}{\begin{eqnarray*}}
\newcommand{\eeas}{\end{eqnarray*}}
\newcommand{\ba}{\begin{array}}
\newcommand{\ea}{\end{array}}
\newcommand{\bit}{\begin{itemize}}
\newcommand{\eit}{\end{itemize}}
\newcommand{\ben}{\begin{enumerate}}
\newcommand{\een}{\end{enumerate}}
\newcommand{\calB}{{\cal B}}
\newcommand{\calC}{{\cal C}}
\newcommand{\calF}{{\cal F}}
\newcommand{\calL}{{\cal L}}
\newcommand{\calS}{{\cal S}}
\newcommand{\calX}{{\cal X}}
\renewcommand{\etal}{\emph{et~al.}\xspace}
\newcommand{\M}[1]{{\bm #1}} % Face for matrices
\newcommand{\hide}[1]{}
\renewcommand{\wrt}{w.r.t.\xspace}
\newcommand{\hiddenText}{{\color{gray} hidden text.}}
\newcommand{\hideWithText}[1]{\hiddenText}
\newcommand{\dist}{\mathbf{dist}}
\newcommand{\subject}{\text{ subject to }}
\DeclareMathOperator*{\argmin}{arg\,min}
\newcommand{\norm}[1]{\left\| #1 \right\|}
\newcommand{\tran}{^{\mathsf{T}}}
\newcommand{\inv}{^{-1}}
\newcommand{\Real}[1]{ { {\mathbb R}^{#1} } }
\newcommand{\SOthree}{\ensuremath{\mathrm{SO}(3)}\xspace}
\newcommand{\ME}{\M{E}}
\newcommand{\MK}{\M{K}}
\newcommand{\MR}{\M{R}}
\newcommand{\MF}{\M{F}}
\newcommand{\va}{\boldsymbol{a}} 
\newcommand{\vb}{\boldsymbol{b}}
\newcommand{\vp}{\boldsymbol{p}}
\newcommand{\vq}{\boldsymbol{q}}
\newcommand{\vt}{\boldsymbol{t}}
\newcommand{\vxx}{\boldsymbol{x}}
\newcommand{\vtheta}{\boldsymbol{\theta}}
\newcommand{\scenario}[1]{{\smaller \sf#1}\xspace}
\newcommand{\blue}[1]{{\color{blue}#1}}
\newcommand{\linkToPdf}[1]{\href{#1}{\blue{(pdf)}}}
\newcommand{\linkToPpt}[1]{\href{#1}{\blue{(ppt)}}}
\newcommand{\linkToCode}[1]{\href{#1}{\blue{(code)}}}
\newcommand{\linkToWeb}[1]{\href{#1}{\blue{(web)}}}
\newcommand{\linkToVideo}[1]{\href{#1}{\blue{(video)}}}
\newcommand{\award}[1]{\xspace} % {{\red{#1}}} % omit awards
\newcommand{\aka}{\emph{a.k.a.}}
\newcommand{\gray}[1]{{\color{gray}#1}}
\newcommand{\teaser}{\scenario{TEASER++}}
\newcommand{\gnc}{\scenario{GNC}}
\newcommand{\fgr}{\scenario{FGR}}
\newcommand{\magsac}{\scenario{MAGSAC}}
\newcommand{\gcransac}{\scenario{GCRANSAC}}
\newcommand{\nameshort}{\scenario{SGP}}
\newcommand{\topic}{Geometric Perception\xspace}
\newcommand{\topiclc}{Geometric perception\xspace}
\newcommand{\topicllc}{geometric perception\xspace}
\newcommand{\threedmatch}{\scenario{3DMatch}}
\newcommand{\megadepth}{\scenario{MegaDepth}}
\newcommand{\fpfh}{\scenario{FPFH}}
\newcommand{\fcgf}{\scenario{FCGF}}
\newcommand{\sfcgf}{\scenario{S-FCGF}}
\newcommand{\caps}{\scenario{CAPS}}
\newcommand{\scaps}{\scenario{S-CAPS}}
\newcommand{\dthreefeat}{\scenario{D3Feat}}
\newcommand{\kpconv}{\scenario{KPConv}}
\newcommand{\ucn}{\scenario{UCN}}
\newcommand{\cgf}{\scenario{CGF}}
\newcommand{\ltwonet}{\scenario{L2-Net}}
\newcommand{\ransac}{\scenario{RANSAC}}
\newcommand{\ransactenk}{\scenario{RANSAC10K}}
\newcommand{\icp}{\scenario{ICP}}
\newcommand{\threedsmooth}{\scenario{3DSmoothNet}}
\newcommand{\tls}{\scenario{TLS}}
\newcommand{\usip}{\scenario{USIP}}
\newcommand{\iss}{\scenario{ISS3D}}
\newcommand{\pvnet}{\scenario{PVNet}}
\newcommand{\sfm}{\scenario{SfM}}
\newcommand{\slam}{\scenario{SLAM}}
\newcommand{\colmap}{\scenario{COLMAP}}
\newcommand{\openthreed}{\scenario{Open3D}}
\newcommand{\yolosixd}{\scenario{YOLO6D}}
\newcommand{\finetune}{\scenario{finetune}}
\newcommand{\pnp}{\scenario{PnP}}
\newcommand{\dpod}{\scenario{DPOD}}
\newcommand{\pthreep}{\scenario{P3P}}
\newcommand{\dgr}{\scenario{DGR}}
\newcommand{\ransaceightyk}{\scenario{RANSAC80K}}
\newcommand{\fcgfgt}{$\text{\fcgf}^{\circ}$}
\newcommand{\capsgt}{$\text{\caps}^{\circ}$}
\newcommand{\finalfcgf}{$\text{\sfcgf}^T$}
\newcommand{\starfcgf}{$\text{\sfcgf}^\star$}
\newcommand{\finalcaps}{$\text{\scaps}^T$}
\newcommand{\starcaps}{$\text{\scaps}^\star$}
\newcommand{\plsr}{\scenario{PLSR}}
\newcommand{\plir}{\scenario{PLIR}}
\newcommand{\bs}{\scenario{BS}}
\newcommand{\multiwayname}{\scenario{Stanford RGBD}}
\newcommand{\tum}{\scenario{TUM RGBD}}
\newcommand{\indoorlidarrgbd}{\scenario{Indoor LIDAR RGBD}}
\newcommand{\redwoodobjs}{\scenario{Redwood Objects}}
\newcommand{\opencv}{\scenario{OpenCV}}
\newcommand{\scannet}{\scenario{ScanNet}}
\newcommand{\horn}{\scenario{Horn}}
\newcommand{\parentheses}[1]{\left( #1 \right)}
\newcommand{\cbrace}[1]{\left\{#1\right\}}
\newcommand{\nrmeasurements}{M}
\newcommand{\measone}{\va}
\newcommand{\meastwo}{\vb}
\newcommand{\model}{\vxx}
\newcommand{\modelset}{\calX}
\newcommand{\modelgt}{\model^{\circ}}
\newcommand{\kpt}{\vp}
\newcommand{\kptone}{\kpt^a}
\newcommand{\kpttwo}{\kpt^b}
\newcommand{\dimone}{d_a}
\newcommand{\dimtwo}{d_b}
\newcommand{\nrkpt}{N}
\newcommand{\barcsq}{\bar{c}^2}
\newcommand{\barc}{\bar{c}}
\newcommand{\embedding}{\calC}
\renewcommand{\norm}[1]{\left\|#1\right\|}
\renewcommand{\subject}{s.t.}
\newcommand{\nnparam}{\vtheta}
\newcommand{\dimnn}{N_\embedding}
\newcommand{\dimdescriptor}{d_{\describe}}
\newcommand{\abs}[1]{\left|#1\right|}
\newcommand{\pmargin}{m_p}
\newcommand{\nmargin}{m_n}
\newcommand{\pnmargin}{m}
\newcommand{\pcoeff}{\lambda_p}
\newcommand{\ncoeff}{\lambda_n}
\newcommand{\pncoeff}{\lambda}
\newcommand{\alm}{\scenario{ALM}}
\newcommand{\pslack}{s_p}
\newcommand{\nslack}{s_n}
\newcommand{\pnslack}{s}
\newcommand{\teach}{\scenario{teach}}
\newcommand{\learn}{\scenario{learn}}
\newcommand{\initembedding}{\calB}
\newcommand{\estnnparam}{\hat{\nnparam}}
\newcommand{\estmodel}{\hat{\model}}
\newcommand{\filterset}{\calS}
\newcommand{\filter}{\scenario{verify}}
\newcommand{\sift}{\scenario{SIFT}}
\newcommand{\superpoint}{\scenario{SuperPoint}}
\newcommand{\usphere}[1]{\mathbb{S}^{#1}}
\newcommand{\nrmodels}{O}
\newcommand{\kptpredict}{\vq^b}
\newcommand{\kptpredicthomo}{\tilde{\vq}^b}
\newcommand{\kptonehomo}{\tilde{\kpt}}
\newcommand{\softmax}{\scenario{softmax}}
\newcommand{\describe}{\calF}
\newcommand{\MRgt}{\MR^{\circ}}
\newcommand{\vtgt}{\vt^{\circ}}
\newcommand{\MEgt}{\ME^{\circ}}
\newcommand{\MFgt}{\MF^{\circ}}
\newcommand{\hatmap}[1]{[#1]_{\times}}
\newcommand{\supp}{Supplementary Material}
\newcommand{\Kone}{\MK^a}
\newcommand{\Ktwo}{\MK^b}
\newcommand{\filterlabel}{\scenario{verifyLabel}}
\newcommand{\retrain}{\scenario{retrain}}
\newcommand{\true}{\scenario{True}}
\newcommand{\false}{\scenario{False}}
\newcommand{\randomparam}{\scenario{randParam}}
\newcommand{\namefilter}{verifier}
\newcommand{\Namefilter}{Verifier}
\begin{document}

%%%%%%%%% TITLE
\title{\vspace{-16mm} Self-supervised \topic \vspace{-2mm}}

\author{Heng Yang\thanks{Equal contribution. Work performed during internship at Intel Labs.}\\
MIT LIDS
% For a paper whose authors are all at the same institution,
% omit the following lines up until the closing ``}''.
% Additional authors and addresses can be added with ``\and'',
% just like the second author.
% To save space, use either the email address or home page, not both
\and
Wei Dong${}^*$\\
CMU RI
\and
Luca Carlone \\
MIT LIDS
\and
Vladlen Koltun\\
Intel Labs
\vspace{-2mm}}

\maketitle
%\thispagestyle{empty}

%!TEX root = main.tex

\begin{abstract}
We present \emph{self-supervised \topicllc} (\nameshort), the first general framework to learn a feature descriptor for correspondence matching without any ground-truth geometric model labels (\eg,~camera poses, rigid transformations). Our first contribution is to formulate \topicllc as an optimization problem that \emph{jointly} optimizes the feature descriptor and the geometric models given a large corpus of visual measurements (\eg,~images, point clouds). Under this optimization formulation, we show that two important streams of research in vision, namely \emph{robust model fitting} and \emph{deep feature learning}, correspond to optimizing one block of the unknown variables while fixing the other block. This analysis naturally leads to our second contribution -- the \nameshort algorithm that performs \emph{alternating minimization} to solve the joint optimization. \nameshort iteratively executes two meta-algorithms: a \emph{teacher} that performs robust model fitting given learned features to generate \emph{geometric pseudo-labels}, and a \emph{student} that performs deep feature learning under noisy supervision of the pseudo-labels. As a third contribution, we apply \nameshort to two perception problems on large-scale real datasets, namely relative camera pose estimation on \megadepth and point cloud registration on \threedmatch. We demonstrate that \nameshort achieves state-of-the-art performance that is on-par or superior to the supervised oracles trained using ground-truth labels.\footnote{Code available at \url{https://github.com/theNded/SGP}.}
\end{abstract}

\section{Introduction}
\label{sec:introduction}
\emph{\topiclc} is the task of estimating geometric models (\eg,~camera poses, rigid transformations, and 3D structures) from visual measurements (\eg,~images or point clouds). It is a fundamental class of problems in computer vision that has extensive applications in object detection and pose estimation~\cite{Yang19rss-teaser,Zakharov2019dpod}, motion estimation and 3D reconstruction~\cite{Choi15cvpr-robustrecon,Dong19iros-gpuRpbustScene}, simultaneous localization and mapping (\slam)~\cite{Cadena16tro-slam}, structure from motion (\sfm)~\cite{Schonberger16cvpr-sfm}, and virtual and augmented reality~\cite{Klein07-ptam}, to name a few.

Modern \topicllc typically consists of a \emph{front-end} that detects, represents, and associates (sparse or dense) keypoints to establish \emph{putative correspondences}, and a \emph{back-end} that performs estimation of the geometric models while being robust to \emph{outliers} (\ie,~incorrect correspondences). Traditionally, hand-crafted keypoint detectors and feature descriptors, such as \sift~\cite{lowe2004ijcv-distinctive} and \fpfh~\cite{rusu2009icra}, have been used for feature matching in 2D images and 3D point clouds. Despite being general and efficient to compute, hand-crafted features typically lead to an overwhelming number of outliers so that robust estimation algorithms struggle to return accurate estimates of the geometric models. For example, it is not uncommon to have over $95\%$ of the correspondences estimated from \fpfh be outliers in point cloud registration~\cite{Para18pami-GORE,Yang20arXiv-teaser}. As a result, learning feature descriptors from data, particularly using deep neural networks, has become increasingly popular. Learned feature descriptors have been shown to consistently and significantly outperform their hand-crafted counterparts across applications such as relative camera pose estimation~\cite{wang20eccv-caps,schmidt16ral-slamvisualdescriptorlearning}, 3D point cloud registration~\cite{Choy19iccv-FCGF,gojcic2019cvpr}, and object detection and pose estimation~\cite{Peng19cvpr-pvnet,Zakharov2019dpod,Tekin18cvpr-yolo6d,Xiang17RSS-posecnn}. 

However, existing feature learning approaches have several major shortcomings. First, a large number of \emph{ground-truth} geometric model labels are required for training. For example, ground-truth relative camera poses are needed for training image keypoint descriptors~\cite{wang20eccv-caps,Melekhov17ICACIVS-relativepose,En18eccvW-rpnet}, pairwise rigid transformations are required for training point cloud descriptors~\cite{Choy19iccv-FCGF,gojcic2019cvpr,Xie20eccv-pointcontrast,yuan20eccv-deepgmr,Wang19iccv-DCP}, and object poses are used to train image keypoint predictors~\cite{Peng19cvpr-pvnet,Zakharov2019dpod}. Second, although obtaining ground-truth geometric labels is trivial in some controlled settings such as robotic manipulation~\cite{florence18corl-denseobjectnets}, in general the  labels come from full 3D reconstruction pipelines (\eg,~\colmap~\cite{Schonberger16cvpr-sfm}, \openthreed~\cite{Zhou18arxiv-open3D}) 
% and \slam~\cite{schmidt16ral-slamvisualdescriptorlearning}) 
that require delicate parameter tuning, partial human supervision, and extra sensory information such as IMU and GPS. As a result, the success of feature learning is limited to a handful of datasets with ground-truth annotations~\cite{Zeng17cvpr-3dmatch,Dai17cvpr-scannet,Li18cvpr-megadepth,Xiang17RSS-posecnn,Brachmann14eccv-occulinemod}. 

In this paper, we ask the key question: \emph{Can we design a general framework for feature learning that requires no ground-truth geometric labels or sophisticated reconstruction pipelines?} Our answer is affirmative.

{\bf Contributions}. We formulate \topicllc as an optimization problem that jointly searches for the best feature descriptor (for correspondence matching) and the best geometric models given a large corpus of visual measurements. This formulation incorporates robust model fitting and deep feature learning as two \emph{subproblems}: (i) \emph{robust estimation} only searches for the geometric models, while consuming putative correspondences established from a given feature descriptor; (ii) \emph{feature learning} searches purely for the feature descriptor, while relying on full supervision from the ground-truth geometric models. This generalization naturally endows \topicllc with an iterative algorithm that solves the joint optimization based on alternating minimization, which we name as \emph{self-supervised \topicllc} (\nameshort). At each iteration, \nameshort alternates two meta-algorithms: a \emph{teacher}, that generates geometric pseudo-labels using correspondences established from the learned features, and a \emph{student}, that refines the learned features under the \emph{noisy} supervision from the updated geometric models. \nameshort is initialized by generating geometric pseudo-labels using a bootstrap descriptor, \eg,~a descriptor that is hand-crafted or is trained using synthetic data. We apply \nameshort to solve two perception problems -- relative camera pose estimation and 3D point cloud registration -- and demonstrate that (i) \nameshort achieves on-par or superior performance compared to the supervised oracles; (ii) \nameshort sets the new state of the art on the \megadepth~\cite{Li18cvpr-megadepth} and \threedmatch~\cite{Zeng17cvpr-3dmatch} benchmarks.

% The rest of the paper is organized as follows. Section~\ref{sec:relatedwork} reviews related work. Section~\ref{sec:mathematical-framework} introduces the joint optimization formulation and Section~\ref{sec:altmin} presents the \nameshort algorithm. Section~\ref{sec:experiments} provides experimental results while Section~\ref{sec:conclusion} concludes the paper.
% in both cases \ransac is used as the teacher for robust model fitting, two state-of-the-art neural networks are used as the student for representation learning on images and point clouds, and hand-crafted descriptors are used for initialization. 
 % Due to this decoupling of correspondence learning and robust estimation, research in \topicllc is also divided into two strands. On one hand, research in robust estimation tries to push the robustness~\cite{Yang20arXiv-teaser,Para18pami-GORE}, efficiency~\cite{Barath20cvpr-magsac++} and theoretical guarantees~\cite{Yang20nips-certifiablePerception} of estimation algorithms to mitigate the adversarial effects of outliers on the estimated geometric models. On the other hand, research in correspondence matching~\cite{Choy19iccv-FCGF,gojcic2019cvpr,wang20eccv-caps,Zakharov2019dpod,Peng19cvpr-pvnet} aims to learn a discriminative representation of the original visual data to create more reliable correspondences and decrease the fraction of outliers to a level where robust estimation easily succeeds (\eg,~below $50\%$). Although each research line, separately, has made significant progress, little work has been done to seemlessly connect them 

%!TEX root = main.tex
\section{Related Work}
\label{sec:relatedwork}
%\red{Please define shortcuts in shortcuts.tex for algorithms, methods and datasets, for example, \ransac, \fgr, \magsac, \fcgf, \caps, \threedsmooth.}
%\red{3DFeat-Net \url{https://arxiv.org/pdf/1807.09413.pdf}}

%\red{Performance Evaluation of 3D Keypoint Detectors \url{https://link.springer.com/article/10.1007\%2Fs11263-012-0545-4}}

%\red{End-to-End Learnable Geometric Vision by Backpropagating PnP Optimization}

%\red{https://arxiv.org/pdf/2011.02785.pdf}

{\bf Deep feature learning}.
With the recent advance of deep learning, a plethora of deep features have been developed to replace classical hand-crafted feature descriptors such as \sift~\cite{lowe2004ijcv-distinctive} and \fpfh~\cite{rusu2009icra} for correspondence matching, and boost the performance of \topicllc tasks. 
For 2D features, Choy~\etal~\cite{Choy16neurips-UCN} develop Universal Correspondence Network (\ucn) for visual correspondence estimation with metric contrastive learning. Tian \etal~\cite{tian2017cvpr} introduce \ltwonet to extract patch descriptors for keypoints. While these methods require direct correspondence supervision, Wang \etal~\cite{wang20eccv-caps} only use 2D-2D camera poses to supervise the learning of feature descriptors.
%
% 3D feature learning has been deeply inspired by 2D.
The success of 2D feature learning extends to 3D.
Khoury \etal~\cite{Khoury17iccv-CGF} created Compact Geometric Features (\cgf) by optimizing deep networks that map high-dimensional histograms into low-dimensional Euclidean spaces. Gojcic \etal~\cite{gojcic2019cvpr} propose \threedsmooth for 3D keypoint descriptor generation with its network structure based on \ltwonet. Choy \etal~\cite{Choy19iccv-FCGF} developed fully convolutional geometric features (\fcgf) based on sparse convolutions. Bai \etal~\cite{bai2020cvpr} build \dthreefeat on kernel point convolution (\kpconv)~\cite{thomas2019iccv} and emphasize 3D keypoint detection. Since ground-truth 3D correspondences are non-trivial to obtain, nearest neighbor search using known 3D transformations is the standard supervision signal.

{\bf Robust estimation}.
Robust estimation ensures reliable geometric model estimation in the presence of outlier correspondences. \emph{Consensus maximization}~\cite{Chin17slcv-maximumConsensusAdvances} and \emph{M-estimation}~\cite{Bosse17fnt} are the two popular formulations. Algorithms for solving both formulations can be divided into \emph{fast heuristics}, \emph{global solvers}, and \emph{certifiable algorithms}. Fast heuristics, such as \ransac~\cite{Fischler81,Barath18cvpr-gcransac,Barath20cvpr-magsac++} and \gnc~\cite{Zhou16eccv-fastGlobalRegistration,Yang20ral-GNC,Antonante20arxiv-outlierrobust}, are efficient but offer few performance guarantees. Global solvers, typically based on branch-and-bound~\cite{Bazin12accv-globalRotSearch,Para18pami-GORE,Izatt17isrr-MIPregistration,Yang2014ECCV-optimalEssentialEstimationBnBConsensusMax,Bazin12pami-BnBGrouping,Li09iccv-consensusMax} or exhaustive search~\cite{Enqvist12eccv-robustFitting,Ask13-optimalTruncatedL2,Chin15-CMTreeAstar,Cai19ICCV-CMtreeSearch}, are globally optimal but often run in exponential time. Recently proposed certifiable algorithms~\cite{Yang20nips-certifiablePerception,Yang20arXiv-teaser,Yang20cvpr-shapeStar,yang19iccv-quasar} combine fast heuristics with scalable optimality certification. Outlier-pruning methods~\cite{Para18pami-GORE,Yang19rss-teaser,Shi20arxiv-robin} can significantly boost the robustness and efficiency of estimation algorithms. In this paper, we use robust estimation to \emph{teach} feature learning.

% \ransac is fairly resilient to outliers and adaptive to various geometric tasks from 2D essential matrix estimation~\cite{hartley2004book} to 3D registration~\cite{rusu2009icra}. Despite its effectiveness, \ransac is limited by its low efficiency due to a lack of convergence guarantee owning to random sampling. 
% To address the performance issue, a family of M-estimators have been introduced to iteratively minimize robust loss functions and reject outlier correspondences that tend to result in high loss. Zhou \etal proposed Fast Global Registration (\fgr)~\cite{Zhou16eccv-fastGlobalRegistration} and resorted to \textit{Geman-McClure} loss optimized via Black-Rangarajan duality~\cite{Black96ijcv-unification}. While being significantly faster than \ransac, \fgr is more sensitive to outliers and does not have a convergence guarantee. Recently, Yang \etal exploited the \textit{Truncated Least Squares}s (\tls) and derived a fast and certifiable 3D registration method \teaser~\cite{Yang20arXiv-teaser}.

{\bf Self-supervision}.
Self-supervision has been widely adopted in visual learning~\cite{jing2020pami} to avoid massive human annotation. In such tasks, labels can be automatically generated by standard image operations~\cite{ledig2017photo, zhang2016colorful}, classical vision algorithms~\cite{li2016cvpr,jiang2018eccv}, or simulation~\cite{dosovitskiy2017arxiv,richter2017iccv}. In real-world setups, geometric vision has actively employed self-supervision in optical flow~\cite{liu2019cvpr}, depth prediction~\cite{wang2018cvpr,godard2019iccv}, visual odometry~\cite{zhou2017cvpr, yang2020cvpr}, and registration~\cite{yew2018eccv, Choy19iccv-FCGF, bai2020cvpr}. These tasks rely on the supervision from camera poses or relative rigid transformations for image warping and correspondence generation, and thus benefit from well-established  \slam~\cite{mur2017tro}, 3D reconstruction~\cite{Zhou18arxiv-open3D}, and \sfm~\cite{Schonberger16cvpr-sfm} pipelines. Although these systems are off-the-shelf, they usually require long execution times, delicate parameter tuning, and human supervision to safeguard their correctness. In this paper, we show how to perform self-supervised feature learning without 3D reconstruction pipelines and  ground-truth geometric labels.

{\bf Self-training}.
Self-training~\cite{yarowsky1995unsupervised,grandvalet2005semi}, as a special case of semi-supervised learning, has gained popularity in visual learning due to its potential to adapt to large-scale unlabeled data. Self-training first trains a model on a labeled dataset, then applies it on a larger unlabeled dataset to obtain \textit{pseudo-labels}~\cite{lee2013pseudo} for further training.
Although pseudo-labels can be noisy, recent studies have shown that SOTA performance can be achieved on image classification~\cite{xie2020cvpr,zoph2020arxiv}, and initial theoretical analyses have been proposed~\cite{wei2020theoretical}. Our work uses robust estimation to generate pseudo-labels without initial supervised training, the first work to showcase the effectiveness of pseudo-labels in training feature descriptors for \topicllc.

%Xie \etal proposed ~\cite{xie2020cvpr}

%!TEX root = main.tex
\section{The \nameshort Formulation}
\label{sec:mathematical-framework}
In this section, we first formulate \topicllc as a problem that \emph{jointly} optimizes a correspondence matching function (\ie,~learning a descriptor) and the geometric models given a corpus of visual data (Section~\ref{sec:jointfeatlearnmodelest}). Then we show that two of the most important research lines in computer vision, namely \emph{robust estimation} and \emph{feature learning}, correspond to fixing one part of the joint problem while optimizing the other part (Sections~\ref{sec:robustestimation} and~\ref{sec:metriclearning}).

\subsection{Joint Feature Learning and Model Estimation}
\label{sec:jointfeatlearnmodelest}
% In this section, we formulate \topicllc as an optimization problem and show that \nameshort can be considered as solving \topicllc through \emph{alternating minimization}. 
We focus on \topicllc with pairwise correspondences between visual measurements.

% given \emph{pairwise} visual measurements, but it can be easily extended to single or multiple visual measurements.
% Towards this goal, let us first present the formulation for \topicllc.
\begin{problem}[\topic]
\label{problem:perception} 
Consider a corpus of $\nrmeasurements$ \emph{pairwise} visual measurements $\cbrace{\measone_i, \meastwo_i}_{i=1}^{\nrmeasurements}$, such as images or point clouds, and assume $\measone_i$ and $\meastwo_i$ are related through a \emph{geometric model} with unknown parameters ${\model_i \in \modelset}$, where $\calX$ is the domain of the geometric models such as 3D poses. Suppose there is a preprocessing module $\phi$ that can extract a sparse or dense set of keypoint locations for each measurement,~\ie,
\bea
\kptone_i = \phi\parentheses{\measone_i} \in \Real{\dimone \times \nrkpt_{a_i}} , \ \ 
\kpttwo_i = \phi\parentheses{\meastwo_i} \in \Real{\dimtwo \times \nrkpt_{b_i}},
\eea
for all $i=1,\dots,\nrmeasurements$, where $\dimone,\dimtwo$ are the dimensions of the keypoint locations (\eg,~$2$ for images keypoints and $3$ for point cloud keypoints), and $\nrkpt_{a_i},\nrkpt_{b_i}$ are the number of keypoints in $\measone_i$ and $\meastwo_i$ (w.l.o.g., assume $ \nrkpt_{a_i} \leq \nrkpt_{b_i}$), then the problem of \topicllc seeks to jointly learn a correspondence function $\embedding$ and estimate the unknown geometric models $\model_i$ by solving the following optimization:
% \bea
% \hspace{-4mm} \min_{\embedding, \cbrace{\model_i}_{i=1}^\nrmeasurements \in \modelset^\nrmeasurements } & \displaystyle  \sum_{i=1}^\nrmeasurements \sum_{k=1}^{\nrkpt_{a_i}} \rho\parentheses{r\parentheses{\vxx_i, \kptone_{i,k}, \kptpredict_{i,k}}} \label{eq:geometricperception}\\
% \subject\quad\quad & \hspace{-6mm} \displaystyle \embedding_k = \argmin_{j\in \bracket{\nrkpt_{b_i}}} \norm{\embedding\parentheses{\kptone_{i,k}, \measone_i} - \embedding\parentheses{\kpttwo_{i,j},\meastwo_i}}, \label{eq:correspondence}
% \eea
\bea
\min_{\embedding, \cbrace{\model_i}_{i=1}^\nrmeasurements \in \modelset^\nrmeasurements } & \displaystyle  \sum_{i=1}^\nrmeasurements \sum_{k=1}^{\nrkpt_{a_i}} \rho\parentheses{r\parentheses{\vxx_i, \kptone_{i,k}, \kptpredict_{i,k}}} \label{eq:geometricperception}\\
\subject\quad\quad &  \displaystyle \kptpredict_{i,k} = \embedding(\kptone_{i,k}, \measone_i, \kpttwo_{i}, \meastwo_i),
\label{eq:correspondence}
\eea
where $\kptone_{i,k} \in \Real{\dimone}$ denotes the location of the $k$-th keypoint in $\measone_i$, $\kptpredict_{i,k} \in \Real{\dimtwo}$ denotes the location of the \emph{corresponding} keypoint in $\meastwo_i$, $r\parentheses{\cdot}$ is the residual function that quantifies the mismatch between the two keypoints $\kptone_{i,k}$ and $\kptpredict_{i,k}$ under the geometric model $\model_i$, $\rho\parentheses{\cdot}$ is a robust cost function that penalizes the residuals, and $\embedding\parentheses{\cdot}$ is a function that takes each keypoint in $\measone_i$ as input and predicts the corresponding keypoint in $\meastwo_i$, by learning features from the visual data. 
% \footnote{Alternatively, one can establish correspondences through \emph{cross check}~\cite{Zhou16eccv-fastGlobalRegistration} or \emph{ratio test}~\cite{lowe2004ijcv-distinctive}. In addition to~\eqref{eq:correspondence}, cross check asks $\kptone_{i,k}$ is also the closest keypoint to $\kpttwo_{i,\embedding_k}$ among $\kptone_i$, while ratio test asks the ratio $\| \embedding(\kptone_{i,k}, \measone_i) - \embedding(\kpttwo_{i,\embedding_k},\meastwo_i) \| / \| \embedding(\kptone_{i,k}, \measone_i) - \embedding(\kpttwo_{i,j},\meastwo_i) \|$ is below a predefined threshold $\eta < 1$ for all $j \neq \embedding_k$. } 
% Finally, we assume the feature embedding $\embedding$ is parametrized by the weights of a deep neural network,~\ie,~$\embedding = \embedding_\nnparam$, where $\nnparam \in $
\end{problem}
To the best of our knowledge, Problem~\ref{problem:perception} is the first formulation that considers \emph{joint} feature learning and model estimation in \topicllc. The correspondence function $\embedding$ typically contains a \emph{learnable} feature descriptor (\eg,~parametrized by a deep neural network) and a matching function (\eg,~soft or hard nearest neighbor search) that generates correspondences using the learned descriptor. We now give two examples of Problem~\ref{problem:perception}.

\setcounter{theorem}{0}
\begin{example}[Relative Pose Estimation]
\label{ex:relativepose}
Consider a corpus of image pairs $\{\measone_i,\meastwo_i \}_{i=1}^{\nrmeasurements}$ with known camera intrinsics, where $\measone_i, \meastwo_i$ are RGB images, let $\phi(\cdot)$ be a keypoint detector,~\eg,~\sift~\cite{lowe2004ijcv-distinctive}, \superpoint~\cite{Detone18cvprw-superpoint}, or a dense random pixel location sampler~\cite{wang20eccv-caps}, such that $\kptone_i = \phi(\measone_i) \in \Real{2 \times \nrkpt_{a_i}}$ and $\kpttwo_i = \phi(\meastwo_i) \in \Real{2\times \nrkpt_{b_i}}$ are two sets of 2D keypoint locations. Relative pose estimation seeks to jointly learn a correspondence prediction function $\embedding$ and estimate the relative poses $\model_i = (\MR_i,\vt_i) \in \SOthree \times \usphere{2}$ between images.\footnote{The translation $\vt \in \usphere{2} \doteq \{\vt \in \Real{3}| \norm{\vt}=1 \}$ is up to scale.} In particular, following~\cite{wang20eccv-caps}, let $\embedding$ be a composition of a deep feature descriptor $\describe(\cdot)$, a \softmax function~\cite{Goodfellow16book-deeplearning}, and a weighted average:
\bea
\kptpredict_{i,k} = \sum_{j=1}^{\nrkpt_{b_i}} \kpttwo_{i,j} \frac{\exp\parentheses{\describe(\kptone_{i,k},\measone_i)\tran \describe(\kpttwo_{i,j},\meastwo_i)}}{\sum_{j=1}^{\nrkpt_{b_i}} \exp\parentheses{\describe(\kptone_{i,k},\measone_i)\tran \describe(\kpttwo_{i,j},\meastwo_i)} }, \label{eq:correspondececaps}
\eea
where the descriptor $\describe$ takes the image and the keypoint location as input and outputs a high-dimensional feature vector for each keypoint,~\ie,~$\describe(\kptone_{i,k},\measone_i) \in \Real{\dimdescriptor}$, where $\dimdescriptor$ denotes the dimension of the descriptor, the \softmax function computes the probability of $\kpttwo_{i,j}$ being a match to $\kptone_{i,k}$ according to their inner product in the descriptor space, and the weighted average function returns the keypoint location as a weighted sum of all keypoint locations discounted by their matching probabilities.
\end{example} 

\begin{example}[Point Cloud Registration]
\label{ex:pointcloudregistration}
Consider a corpus of point cloud pairs $\{\measone_i,\meastwo_i\}_{i=1}^{\nrmeasurements}$, where $\measone_i, \meastwo_i$ are 3D point clouds, let $\phi(\cdot)$ be a 3D keypoint detector,~\eg,~\iss~\cite{Zhong09iccvw-ISS},~\usip~\cite{L19cvpr-usip}, or a dense uniform voxel downsampler~\cite{Choy19iccv-FCGF}, such that $\kptone_i = \phi(\measone_i) \in \Real{3\times \nrkpt_{a_i}}$, and $\phi(\meastwo_i) \in \Real{3\times \nrkpt_{b_i}}$ are two sets of 3D keypoints. Point cloud registration seeks to jointly learn a correspondence function $\embedding$ and estimate the rigid transformation $\model_i = (\MR_i,\vt_i) \in \SOthree \times \Real{3}$ between point clouds. In particular, following~\cite{Choy19iccv-FCGF,gojcic2019cvpr}, let $\embedding$ be a composition of a deep feature descriptor $\describe(\cdot)$ and nearest neighbor search:
\bea
\kptpredict_{i,k} = \argmin_{\kpttwo_{i,j} \in \kpttwo_i} \norm{\describe(\kpttwo_{i,j}, \meastwo_i) - \describe(\kptone_{i,k}, \measone_i)}, \label{eq:nns}
\eea
where the descriptor $\describe$ takes the point cloud and the keypoint location as input and outputs a high-dimensional feature vector for each keypoint,~\ie,~$\describe(\kptone_{i,k},\measone_i) \in \Real{\dimdescriptor}$, with $\dimdescriptor$ denoting the descriptor dimension, and condition~\eqref{eq:nns} asks that the corresponding keypoint $\kptpredict_{i,k}$ is the keypoint among $\kpttwo_i$ that achieves the shortest distance to $\kptone_{i,k}$ in  descriptor space.\footnote{Alternatively, one can establish correspondences through \emph{cross check}~\cite{Zhou16eccv-fastGlobalRegistration} or \emph{ratio test}~\cite{lowe2004ijcv-distinctive}. In addition to~\eqref{eq:nns}, cross check asks $\kptone_{i,k}$ is also the closest keypoint to $\kptpredict_{i,k}$ among $\kptone_i$, while ratio test asks the ratio $\| \describe(\kptone_{i,k}, \measone_i) - \describe(\kptpredict_{i,k},\meastwo_i) \| / \| \describe(\kptone_{i,k}, \measone_i) - \describe(\kpttwo_{i,j},\meastwo_i) \|$ is below a predefined threshold $\zeta < 1$ for all $\kpttwo_{i,j} \neq \kptpredict_{i,k}$.}
\end{example}

% \begin{example}[Object Detection and Pose Estimation]
% \label{ex:detectpose}
% Given a collection of 3D models $\{\measone_i\}_{i=1}^{\nrmodels}$, where each model $\measone_i \in \Real{3\times \nrkpt_{a_i}}$ consists of a set of 3D keypoints, let $\measone \in \Real{3\times \nrkpt_a}, \nrkpt_a = \sum_{i=1}^{\nrmodels} \nrkpt_{a_i}$, be the concatenation of all 3D keypoints. In addition, given a corpus of 2D images $\{\meastwo_i\}_{i=1}^{\nrmeasurements}$, where each $\meastwo_i$ is an RGB image that contains the (partial, occluded) projections of the 3D models plus some background. Object detection and pose estimation seeks to jointly learn a keypoint prediction function $\embedding$ and estimate the poses of the 3D models $\model_i = \{ (\MR_{i,j}, \vt_{i,j}) \}_{j \in \calS \subset [\nrmodels]} \in (\SOthree \times \Real{3})^{|\calS|}$, where $\calS \subset [\nrmodels]$ is the subset of 3D models observed by the $i$-th 2D image ($|\calS|$ denotes the cardinality of the set). In particular, following~\cite{Zakharov2019dpod}, let $\embedding$ be a combination of UVW mapping and semantic ID masking,~\ie,~for each pixel in $\meastwo_i$, $\embedding$ predicts which 3D model it belongs to (from $1$ to $\nrmodels$, and $0$ for background), and what is the corresponding 3D coordinates in the specific model, thus deciding which point in $\measone$ is the corresponding 3D point.\footnote{There are many different ways to establish 2D-3D correspondences, see \pvnet~\cite{Peng19cvpr-pvnet}, \yolosixd~\cite{Tekin18cvpr-yolo6d} and references therein.}
% \end{example}

Examples~\ref{ex:relativepose}-\ref{ex:pointcloudregistration} represent two key problems in vision that concern pose estimation from 2D-2D and 3D-3D measurements, all of which involve the coupling of correspondence matching (\aka~data association) and geometric model estimation. Interestingly, although little is known about how to solve Problem~\ref{problem:perception} directly, significant efforts have been made to solve its two subproblems.

%%%%%%%%%%%%%%%%%%%%%% robust estimation %%%%%%%%%%%%%%%%%%%%%

\subsection{Robust Estimation}
\label{sec:robustestimation}
\begin{problem}[Robust Estimation]
\label{problem:robustestimation}
In Problem~\ref{problem:perception}, assuming the correspondence matching function $\embedding$ is known, robust estimation seeks to estimate the unknown parameters of the geometric models given putative correspondences (corrupted by \emph{outliers}), by optimizing the following objective:
\bea
\min_{\cbrace{\model_i}_{i=1}^\nrmeasurements \in \modelset^\nrmeasurements } & \displaystyle  \sum_{i=1}^\nrmeasurements \sum_{k=1}^{\nrkpt_{a_i}} \rho\parentheses{r\parentheses{\vxx_i, \kptone_{i,k}, \kptpredict_{i,k}}}. \label{eq:robustestimation}
\eea
\end{problem}
% Robust estimation has been a central topic of computer vision for decades, with \emph{consensus maximization}~\cite{Chin17slcv-maximumConsensusAdvances,Antonante20arxiv-outlierrobust} and \emph{M-estimation}~\cite{Black96ijcv-unification,Yang20ral-GNC} being the most popular paradigms. 
Problem~\ref{problem:robustestimation} shows that robust estimation is a subproblem of Problem~\ref{problem:perception} with a known and fixed correspondence function. 
Despite the nonconvexity of problem~\eqref{eq:robustestimation} (\eg,~due to a nonconvex $\calX$ or a nonconvex $\rho$), research in robust estimation has focused on improving the robustness~\cite{Yang20arXiv-teaser,Para18pami-GORE}, efficiency~\cite{Barath20cvpr-magsac++} and theoretical guarantees~\cite{Yang20nips-certifiablePerception} of estimation algorithms to mitigate the adversarial effects of outliers on the estimated geometric models. 

% Although modern meta-heuristics (\eg,~\ransac~\cite{Fischler81,Barath20cvpr-magsac++} and~\gnc~\cite{Yang20ral-GNC}) solve problem~\eqref{eq:robustestimation} well when a moderate amount (\eg,~below $70\%$) of putative correspondences are wrong, they are doomed to fail when the number of outliers becomes so large that problem~\eqref{eq:robustestimation} is uninformative towards estimating the geometric model. Therefore, establishing high-quality correspondences becomes increasingly popular, particularly using metric learning. 

\subsection{Supervised Feature Learning}
\label{sec:metriclearning}

\begin{problem}[Supervised Feature Learning]
\label{problem:featurelearning}
In Problem~\ref{problem:perception}, assuming the parameters of the geometric models are known and denoting them as $\modelgt_i,i=1,\dots,\nrmeasurements$, feature learning seeks to find the best correspondence matching function $\embedding_{\nnparam}$ by solving the following optimization problem:
% an embedding $\embeddingnn$, where $\nnparam \in \Real{\dimnn}$ is the set of parameters of a neural network, under which \emph{geometrically-matched} (resp. \emph{geometrically-unmatched}) keypoints are close to (resp. distant from) each other:
\bea \label{eq:featurelearning}
\min_{\nnparam \in \Real{\dimnn}} & \displaystyle \sum_{i=1}^M \sum_{k=1}^{\nrkpt_{a_i}} \rho(r(\modelgt_i,\kptone_{i,k},\kptpredict_{i,k})) \label{eq:featurelearning} \\
\subject & \displaystyle \kptpredict_{i,k} = \embedding_{\nnparam}(\kptone_{i,k}, \measone_i, \kpttwo_{i}, \meastwo_i), \label{eq:featurelearningcon}
\eea
where the correspondence function is parametrized by the weights $\nnparam \in \Real{\dimnn}$ of a deep (descriptor) neural network and $\dimnn$ is the number of weight parameters in the network.
\end{problem}

At first glance, the optimization~\eqref{eq:featurelearning} is different from the loss functions designed in the supervised feature learning literature~\cite{Choy19iccv-FCGF,wang20eccv-caps,Zakharov2019dpod}. However, the next proposition states that, if we take $\rho(\cdot)$ to be the truncated least squares (\tls) cost function, then common loss functions can be designed using the \emph{Augmented Lagrangian Method} (\alm)~\cite{Bertsekas99book-nonlinearprogramming}.

\setcounter{theorem}{0}
\begin{proposition}[Feature Learning as \alm]
\label{prop:augmentlagrangian}
Let $\rho(r) = \min\cbrace{r^2, \barcsq}$ be the \tls cost function~\cite{Yang20nips-certifiablePerception}, where $\barc>0$ sets the maximum allowed inlier residual, supervised feature learning~\cite{wang20eccv-caps,Choy19iccv-FCGF} in Examples~\ref{ex:relativepose}-\ref{ex:pointcloudregistration} can solve the optimization~\eqref{eq:featurelearning}. In particular, the loss functions in~\cite{wang20eccv-caps,Choy19iccv-FCGF} can be interpreted as the Augmented Lagrangian of problem~\eqref{eq:featurelearning}.
\end{proposition}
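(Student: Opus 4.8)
The plan is to exhibit an explicit dictionary between the ingredients of the constrained program~\eqref{eq:featurelearning}--\eqref{eq:featurelearningcon} and the published losses of~\cite{wang20eccv-caps,Choy19iccv-FCGF}, and then to verify, for one representative term of each loss, that the Augmented Lagrangian of~\eqref{eq:featurelearning} reproduces it up to constants absorbed into the penalty weight and the truncation radius $\barc$. First I would recast~\eqref{eq:featurelearning} by treating the predicted keypoints $\cbrace{\kptpredict_{i,k}}$ as auxiliary primal variables and~\eqref{eq:featurelearningcon} as equality constraints $\kptpredict_{i,k}-\embedding_{\nnparam}\parentheses{\kptone_{i,k},\measone_i,\kpttwo_i,\meastwo_i}=\zero$. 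Attaching multipliers $\vlambda_{i,k}$ and a quadratic penalty of weight $\mu>0$ gives the Augmented Lagrangian; and, using the standard variational (half-quadratic) form of the truncated quadratic, $\min\cbrace{r^2,\barcsq}=\min_{\beta_{i,k}\in\bracket{0,1}}\beta_{i,k}r^2+(1-\beta_{i,k})\barcsq$, introduces a per-correspondence inlier weight $\beta_{i,k}$. With these two moves, an \alm iteration on~\eqref{eq:featurelearning} alternates (a) minimization over $\cbrace{\kptpredict_{i,k},\beta_{i,k}}$ with $\nnparam$ fixed, which is a robust re-weighting / correspondence-selection step, and (b) a gradient step on $\nnparam$ with the weights and pseudo-correspondences fixed, which is one epoch of descriptor training. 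It remains to match the $\nnparam$-subproblem to each loss.

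For Example~\ref{ex:relativepose} I would take $r\parentheses{\modelgt_i,\kptone_{i,k},\kptpredict_{i,k}}$ to be the point-to-epipolar-line distance induced by the ground-truth fundamental matrix $\MFgt_i$ built from $\modelgt_i$, and $\embedding_{\nnparam}$ the soft match~\eqref{eq:correspondececaps}. Eliminating $\kptpredict_{i,k}$ through the constraint and $\beta_{i,k}$ by partial minimization, the $\nnparam$-subproblem becomes the sum over $i,k$ of $\min\cbrace{r\parentheses{\modelgt_i,\kptone_{i,k},\kptpredict_{i,k}}^2,\barcsq}$ with $\kptpredict_{i,k}$ given by~\eqref{eq:correspondececaps}, i.e.\ exactly the truncated epipolar loss of~\cite{wang20eccv-caps}; the auxiliary weighting terms appearing there (the coarse/fine two-level prediction and the match-uncertainty weight) are recovered, respectively, from a two-block auxiliary variable coupled by the quadratic penalty $\frac{\mu}{2}\norm{\kptpredict_{i,k}-\embedding_{\nnparam}\parentheses{\cdot}}^2$ and from the inlier weights $\beta_{i,k}$, with the multiplier update playing the role of the stage-to-stage refinement.

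For Example~\ref{ex:pointcloudregistration} I would take $r\parentheses{\modelgt_i,\kptone_{i,k},\kptpredict_{i,k}}=\norm{\MRgt_i\kptone_{i,k}+\vtgt_i-\kptpredict_{i,k}}$ and $\embedding_{\nnparam}$ the nearest-neighbor map~\eqref{eq:nns}. Since the $\argmin$ in~\eqref{eq:nns} is combinatorial, I would first lift it to a relaxed assignment over $\kpttwo_i$ (a \softmax as in~\eqref{eq:correspondececaps} with vanishing temperature); the penalty/multiplier terms of the \alm for that assignment constraint, combined with the inlier indicator $\beta_{i,k}$, collapse to two families of terms: for the geometric inlier index $\jplus$ (the point of $\kpttwo_i$ closest to $\MRgt_i\kptone_{i,k}+\vtgt_i$) a term pushing $\featdist{\kptone_{i,k}}{\kpttwo_{i,\jplus}}$ below a margin so that $\jplus$ wins the match, i.e.\ $\bracket{\featdist{\kptone_{i,k}}{\kpttwo_{i,\jplus}}-\pmargin}_+^2$, and, for the hardest competing index $\jminus$, a term penalizing small $\featdist{\kptone_{i,k}}{\kpttwo_{i,\jminus}}$ when $\jminus$ is a geometric outlier, i.e.\ $\bracket{\nmargin-\featdist{\kptone_{i,k}}{\kpttwo_{i,\jminus}}}_+^2$; the hardest-negative mining of~\cite{Choy19iccv-FCGF} is exactly the inner optimization over the relaxed assignment, the margins $\pmargin,\nmargin$ and the hinges come out of $\barc$ and $\mu$, and the batch normalization $1/\abs{\match}$ is absorbed into $\mu$.

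The main obstacle is step~(b) together with the lifting in the point-cloud case: because the correspondence maps~\eqref{eq:correspondececaps}--\eqref{eq:nns} are non-smooth or combinatorial, the object one forms is not literally the textbook Augmented Lagrangian but its relaxed (lifted-assignment) version, and the delicate point is to track precisely which primal/dual variables are eliminated by partial minimization and which survive, and then to check that the surviving terms match the published formulas quantitatively rather than merely in spirit (in particular that $\bracket{\featdist{\cdot}{\cdot}-\pmargin}_+^2$ is exactly one \alm term after eliminating the right multiplier). Producing this dictionary and verifying the two representative terms is the crux; the rest is bookkeeping. Since the claim is an interpretation rather than a uniqueness statement, it suffices to exhibit one valid choice of $r$, of $\rho=\tls$, of the multipliers, and of $\mu$ under which the \alm subproblem coincides with each loss.
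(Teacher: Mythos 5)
Your overall strategy differs from the paper's: you lift the truncated cost via its half-quadratic form and attach multipliers to the prediction equation $\kptpredict_{i,k}=\embedding_{\nnparam}(\cdot)$, whereas the paper first characterizes the \emph{global minimizer set} of the \tls objective explicitly (for Example~\ref{ex:relativepose}: the predicted point must lie on the epipolar line $\MFgt_i\kptonehomo_{i,k}$ whenever that line meets $\meastwo_i$; for Example~\ref{ex:pointcloudregistration}: $\embedding$ must return the nearest neighbor of $\MRgt_i\kptone_{i,k}+\vtgt_i$ when within $\barc$), rewrites problem~\eqref{eq:featurelearning} as a feasibility problem over those conditions, and only then penalizes the derived constraints. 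That difference matters, and it exposes a genuine gap in your plan. For Example~\ref{ex:relativepose}, the \caps loss contains a cycle-consistency term, $\dist\parentheses{\embedding(\kptpredict_{i,k},\meastwo_i,\kptone_i,\measone_i),\kptone_{i,k}}$, which is \emph{not} a consequence of the \tls objective or of the constraint~\eqref{eq:featurelearningcon}; the paper must import it as an extra designed constraint (their eq.~\eqref{eq:mapcycleconsistent}) before forming the penalty. Your dictionary has no primal or dual variable whose elimination produces that term, so partial minimization of your Augmented Lagrangian cannot yield "exactly the truncated epipolar loss of~\cite{wang20eccv-caps}" --- and indeed the \caps epipolar term is an untruncated distance, not the truncated residual your $\beta_{i,k}$-elimination would leave behind.

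For Example~\ref{ex:pointcloudregistration} the gap is of a different kind: you defer precisely the step that constitutes the proof. The paper gets from the nearest-neighbor optimality condition to the \fcgf loss by (i) translating it into the feature-space inequality $\dist(\describe(\kptone_{i,k}),\describe(\kptpredict_{i,k}))\leq\dist(\describe(\kptone_{i,k}),\describe(\kpttwo_{i,j}))$ for all competitors, (ii) replacing it with the three \emph{sufficient} margin conditions~\eqref{eq:pmargin}--\eqref{eq:pnmargin}, and (iii) writing the Augmented Lagrangian of that system with nonnegative slacks, which is literally the contrastive-plus-triplet loss. Your vanishing-temperature relaxation of the $\argmin$ is asserted to "collapse" to the two hinge families, with the margins "coming out of $\barc$ and $\mu$", but no mechanism is given for how a single truncation radius $\barc$ generates three independent margins $\pmargin$, $\nmargin$, $\pnmargin$, and you yourself flag this verification as the crux and leave it undone. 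As written, the proposal is a plausible program rather than a proof; to close it you would either need to carry out the lifted-assignment calculation quantitatively, or switch to the paper's route of stating sufficient feature-space conditions and penalizing those.
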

\begin{proof}
See the \supp.
\end{proof}
Proposition~\ref{prop:augmentlagrangian} states that, just as robust estimation algorithms optimize geometric models given a fixed correspondence matching function, supervised feature learning methods optimize the feature descriptor given known geometric models. 
% However, in order to learn a good descriptor, metric learning requires a large number of ground-truth geometric model labels, which are difficult to acquire in practical problems.
In the next section, we show that this framework naturally allows us to solve Problem~\ref{problem:perception} by alternating the execution of robust estimation and feature learning.

% \subsection{Alternating Minimization}

%!TEX root = main.tex
\section{The \nameshort Algorithm}
\label{sec:altmin}
We first give an overview of the \nameshort algorithm (Section~\ref{sec:overview}), then  discuss its applications (Section~\ref{sec:applications}).

\subsection{Overview}
\label{sec:overview}
%!TEX root = main.tex
\begin{figure}[h]
	\vspace{-5mm}
	\begin{center}
	\begin{tabular}{c}%
	\begin{minipage}{\columnwidth}%
	\centering%
	\includegraphics[width=\columnwidth]{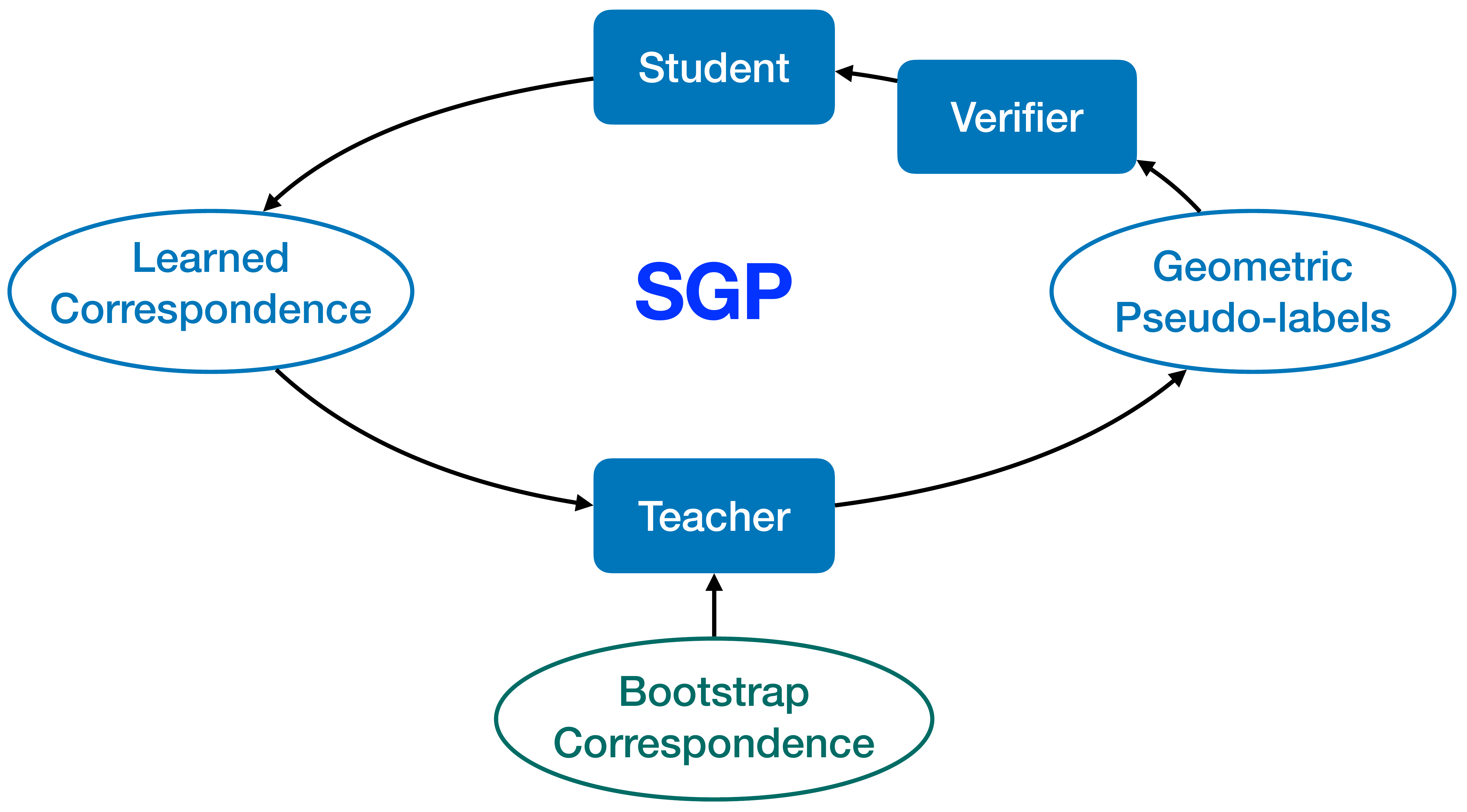} \\
	\end{minipage}
	\end{tabular}
	\end{center}
	\caption{Algorithmic overview of \nameshort.
	\label{fig:overview}}
	\vspace{-3mm}
\end{figure}
%!TEX root = main.tex
\setlength{\textfloatsep}{5pt}%
\begin{algorithm}[t]
% {\footnotesize
\SetAlgoLined
\textbf{Input:} A corpus of visual measurements: $\{ \measone_i,\meastwo_i \}_{i=1}^{\nrmeasurements}$; a preprocessing module: $\phi$; an initial correspondence matching method: $\initembedding$; an architecture for a learned correspondence prediction function: $\embedding$, with initial weights: $\nnparam^{(0)}$ (default: \randomparam); Number of iterations: $T$; boolean: \filterlabel (default \true); boolean: \retrain (default \false); \\
\textbf{Output:} final weights of $\embedding$: $\estnnparam$; estimated geometric models: $\{\estmodel_i \}_{i=1}^{\nrmeasurements}$; \\

\gray{ \% Compute keypoint locations}  \\
$\kptone_i = \phi(\measone_i)$, $\kpttwo_i = \phi(\meastwo_i)$,$\ \ \forall i \in [\nrmeasurements]$; \label{line:computekeypoint} \\

\gray{\% Bootstrap (Initialize pseudo-labels)} \\
$\model_i^{(0)} = \teach(\measone_i,\meastwo_i,\kptone_i,\kpttwo_i,\initembedding),\ \  \forall i \in [\nrmeasurements]$; \label{line:bootstrap}

\gray{\% Alternating minimization} \\
\For{ $\tau = 1:T$ \label{line:maxIters}}{ 

\uIf{\filterlabel = \true}{
	\gray{\% Verify correctness of labels} \\
$\filterset = \filter(\{\model_i^{(\tau-1)},\measone_i,\meastwo_i,\kptone_i,\kpttwo_i\}_{i=1}^{\nrmeasurements})$; \label{line:filter}
}
\Else{
	$\filterset = [\nrmeasurements]$; \label{line:nofilter}
}
\gray{\% Feature learning (problem~\eqref{eq:featurelearning})} \\
\uIf{\retrain = \true}{
	$\nnparam = \randomparam$; \ \gray{\% \retrain} \label{line:randinit}
}
\Else{
	$\nnparam = \nnparam^{(\tau-1)}$; \ \gray{\% \finetune} \label{line:finetune}
}
$\nnparam^{(\tau)} = \learn(\{\model_i^{(\tau-1)},\measone_i,\meastwo_i,\kptone_i,\kpttwo_i\}_{i \in \filterset}, \nnparam)$; \label{line:learn}

\gray{\% Robust estimation (problem~\eqref{eq:robustestimation})}\\

$\model_i^{(\tau)} = \teach(\measone_i,\meastwo_i,\kptone_i,\kpttwo_i,\embedding_{\nnparam^{(\tau)}}),\ \ \forall i \in [\nrmeasurements]$; \label{line:teach}
}

 \textbf{return:} $\estnnparam = \nnparam^{(T)}$, $\estmodel_i = \model_i^{(T)},i=1,\dots,\nrmeasurements$.  \label{line:return}
 \caption{\nameshort \label{alg:sgp}}
% }
\end{algorithm}

An overview of \nameshort is shown in Fig.~\ref{fig:overview}, and details of \nameshort are summarized in Algorithm~\ref{alg:sgp}. \nameshort does not have access to the ground-truth geometric models and internally creates \emph{geometric pseudo-labels}. \nameshort contains three key components: a \emph{teacher}, a \emph{student} and (optionally) a \emph{\namefilter}.
\setcounter{theorem}{0}
\begin{definition}[Teacher]
\label{def:teacher}
An algorithm that estimates geometric pseudo-labels given a correspondence matcher.
\end{definition}

\begin{definition}[Student]
\label{def:student}
An algorithm that estimates the parameters of a correspondence matching function under the supervision of geometric models.
\end{definition}

\begin{definition}[\Namefilter]
\label{def:filter}
An algorithm that verifies if a geometric model estimated by the teacher is correct.
\end{definition}

From the definitions above, one can see that a teacher is a solver for the robust estimation problem~\eqref{eq:robustestimation}, while a student is a solver for the supervised feature learning problem~\eqref{eq:featurelearning}. Because problems~\eqref{eq:robustestimation} and~\eqref{eq:featurelearning} are the two subproblems of the joint \topicllc problem~\eqref{eq:geometricperception}, the \nameshort algorithm~\ref{alg:sgp} alternates in executing the teacher and the student (\cf~line~\ref{line:learn}-\ref{line:teach}), referred to as the \emph{teacher-student loop}, to perform alternating minimization for the joint problem~\eqref{eq:geometricperception}. 

In particular, at the $\tau$-th iteration of the teacher-student loop, the student initializes the network parameters at $\nnparam$, and updates the parameters to $\nnparam^{(\tau)}$, by minimizing problem~\eqref{eq:featurelearning} (using stochastic gradient descent) under the noisy ``supervision'' of the geometric pseudo-labels estimated from iteration $\tau-1$ (line~\ref{line:learn}). The student either initializes $\nnparam$ at random (line~\ref{line:randinit}, referred to as \retrain), or initializes $\nnparam$ from the weights of the last iteration $\nnparam^{(\tau-1)}$ (line~\ref{line:finetune}, referred to as \finetune). Then, using the correspondence function with updated parameters, denoted by $\embedding_{\nnparam^{(\tau)}}$, the teacher solves robust estimation~\eqref{eq:robustestimation} to update the models (line~\ref{line:teach}). 

Throughout the teach-student loop, neither the correspondence matcher nor the teacher are perfect, leading to a significant fraction of the geometric pseudo-labels being incorrect, which can potentially bias the student. Therefore, \nameshort optionally uses a \namefilter~to generate a verified set of pseudo-labels, denoted by $\filterset$, that are more likely to be correct (line~\ref{line:filter}). If the flag \filterlabel is \false, then $\filterset = [\nrmeasurements]$ is the full set of pseudo-labels (line~\ref{line:nofilter}). The \namefilter~design is application dependent, as discussed in Section~\ref{sec:applications}.

An \emph{initialization} is required to start the iterative updates in alternating minimization. To do so, we initialize the geometric models by performing model estimation using a \emph{bootstrap matcher} $\initembedding$ (line~\ref{line:bootstrap}). Based on the specific application, the bootstrap matcher can be designed from a hand-crafted feature descriptor that requires no learning, or a descriptor that is trained with a small amount of data, or a descriptor that is trained on synthetic datasets. On the other hand, since we typically do not have prior information about the weights of $\embedding$, $\nnparam^{(0)}$ is initialized at random.

\setcounter{theorem}{0}
\begin{remark}[Implementation Considerations]
\label{remark:implementation}
(i) \emph{Convergence}: In the current \nameshort implementation, we execute the teacher-student loops for a fixed number of iterations $T$. However, one can stop \nameshort if the difference between $\model_i^{(\tau)}$ and $\model_i^{(\tau-1)}$, or between $\nnparam^{(\tau)}$ and $\nnparam^{(\tau-1)}$ is below some threshold. One can also choose the best $\embedding$ from \nameshort by using a validation dataset if available. (ii) \emph{Speedup}: When running the teacher to generate pseudo-labels (line~\ref{line:teach}) at each iteration, one can skip the updates for some labels that are already ``stable''. For example, if a label $\model_i$ remains unchanged for consecutively 3 iterations, or the robust solver achieves high confidence about $\model_i$ (\eg,~\ransac has inlier rate over $80\%$), then the teacher can skip the update for $\model_i$.
\end{remark}

\subsection{Applications}
\label{sec:applications}
We now discuss the application of \nameshort to Examples~\ref{ex:relativepose}-\ref{ex:pointcloudregistration}.

{\bf \nameshort	for Example~\ref{ex:relativepose}}. The teacher performs robust relative pose estimation~\cite{hartley2004book}. Therefore, a good candidate for a teacher is \ransac~\cite{Fischler81} (with Nister's 5-point method~\cite{Nister04pami-fivepoint}) and its variants, such as \gcransac~\cite{Barath18cvpr-gcransac} and \magsac~\cite{Barath20cvpr-magsac++}. The student performs descriptor learning using relative camera pose supervision. Recent work \caps~\cite{wang20eccv-caps} is able to learn a descriptor under the supervision of fundamental matrices, which can be computed from relative pose and camera intrinsics~\cite{hartley2004book}. Therefore, \caps is the student network. The \namefilter~can be designed based on the \emph{inlier rate} estimated by \ransac,~\ie,~the number of inlier matches divided by the total number of putative matches. Intuitively, the higher the inlier rate is, the more likely it is that \ransac has found a correct solution. To initialize \nameshort, we use the hand-crafted \sift descriptor (with ratio test)~\cite{lowe2004ijcv-distinctive}.

{\bf \nameshort for Example~\ref{ex:pointcloudregistration}}. The teacher performs robust registration. Many robust registration algorithms can serve as the teacher: \ransac (with Horn's 3-point method~\cite{horn87josa}) and its variants, \fgr~\cite{Zhou16eccv-fastGlobalRegistration}, and \teaser~\cite{Yang20arXiv-teaser}. As for the student, methods such as \fcgf~\cite{Choy19iccv-FCGF}, \threedsmooth~\cite{gojcic2019cvpr}, and \dthreefeat~\cite{bai2020cvpr} can learn point cloud descriptors under the supervision of rigid transformations. The \namefilter~can be designed based on the \emph{overlap ratio} computed from the estimated pose,~\ie,~the number of point pairs that are close to each other after transformation, divided by the total number of points in the point cloud. One can also use the certifier in \teaser~\cite{Yang20arXiv-teaser}. To initialize \nameshort, we can use the hand-crafted \fpfh descriptor (with cross check)~\cite{rusu2009icra}.

% {\bf \nameshort for Example~\ref{ex:detectpose}}. The teacher performs robust absolute pose estimation, \aka~\emph{perspective-$n$-point} (\pnp)~\cite{hartley2004book,kneip2014ECCV-UPnP}. A good candidate for the teacher is \ransac and its variants (\eg,~using \pthreep~\cite{gao03PAMI-P3P}). The student trains a 2D keypoint predictor under the supervision of camera poses. Recent works such as \yolosixd~\cite{Tekin18cvpr-yolo6d}, \pvnet~\cite{Peng19cvpr-pvnet}, and \dpod~\cite{Zakharov2019dpod} can all serve as the student network, despite using different methodologies. As for the filter, similar to Example~\ref{ex:relativepose}, it can be designed based on the estimated inlier rate. Alternatively, one can project the 3D models onto the 2D image using the estimated absolute poses and compute the overlap ratio (in terms of pixels) between the 2D projection and the estimated semantic ID mask. To initialize \nameshort, we can train a bootstrap predictor using synthetic datasets,~\ie,~by rendering synthetic projections of the 3D models under different simulated poses, which is common in~\cite{Zakharov2019dpod,Peng19cvpr-pvnet,Tekin18cvpr-yolo6d,chen19ICCVW-satellitePoseEstimation}.

\begin{remark}[Novelty]
\label{remark:novelty}
Hand-crafted descriptors, robust estimation and feature learning are mature areas in computer vision. In this paper, instead of creating new techniques in each area, we show that combining existing techniques from each field in the \nameshort framework can tackle self-supervised \topicllc in full generality.
\end{remark}

\begin{remark}[Generality]
\label{remark:generality}
Although we only provide experimental results for relative pose estimation and point cloud registration, the joint optimization formulation in Problem~\ref{problem:perception} is general and the \nameshort algorithm~\ref{alg:sgp} can be applied in any perception problem where a robust solver and a supervised feature learning method is available. For example, we also present the formulation for \emph{object detection and pose estimation}~\cite{Tekin18cvpr-yolo6d,Peng19cvpr-pvnet,Zakharov2019dpod,chen19ICCVW-satellitePoseEstimation}, and discuss the application of \nameshort in the \supp.
\end{remark}

%!TEX root = main.tex
\section{Experiments}
\label{sec:experiments}

%!TEX root = main.tex
\newcommand{\mpwthree}{5.9cm}
\newcommand{\mpwthreetwo}{11.4cm}
\newcommand{\myhspace}{\hspace{-6mm}}
\newcommand{\myvspace}{\vspace{2mm}}
\newcommand{\subfigheight}{1.9cm}

\begin{figure*}[h]
	\begin{center}
	\begin{minipage}{\textwidth}
	\hspace{-4mm}
	\begin{tabular}{ccc}%
		\begin{minipage}{\mpwthree}%
			\centering%
			\includegraphics[width=\columnwidth]{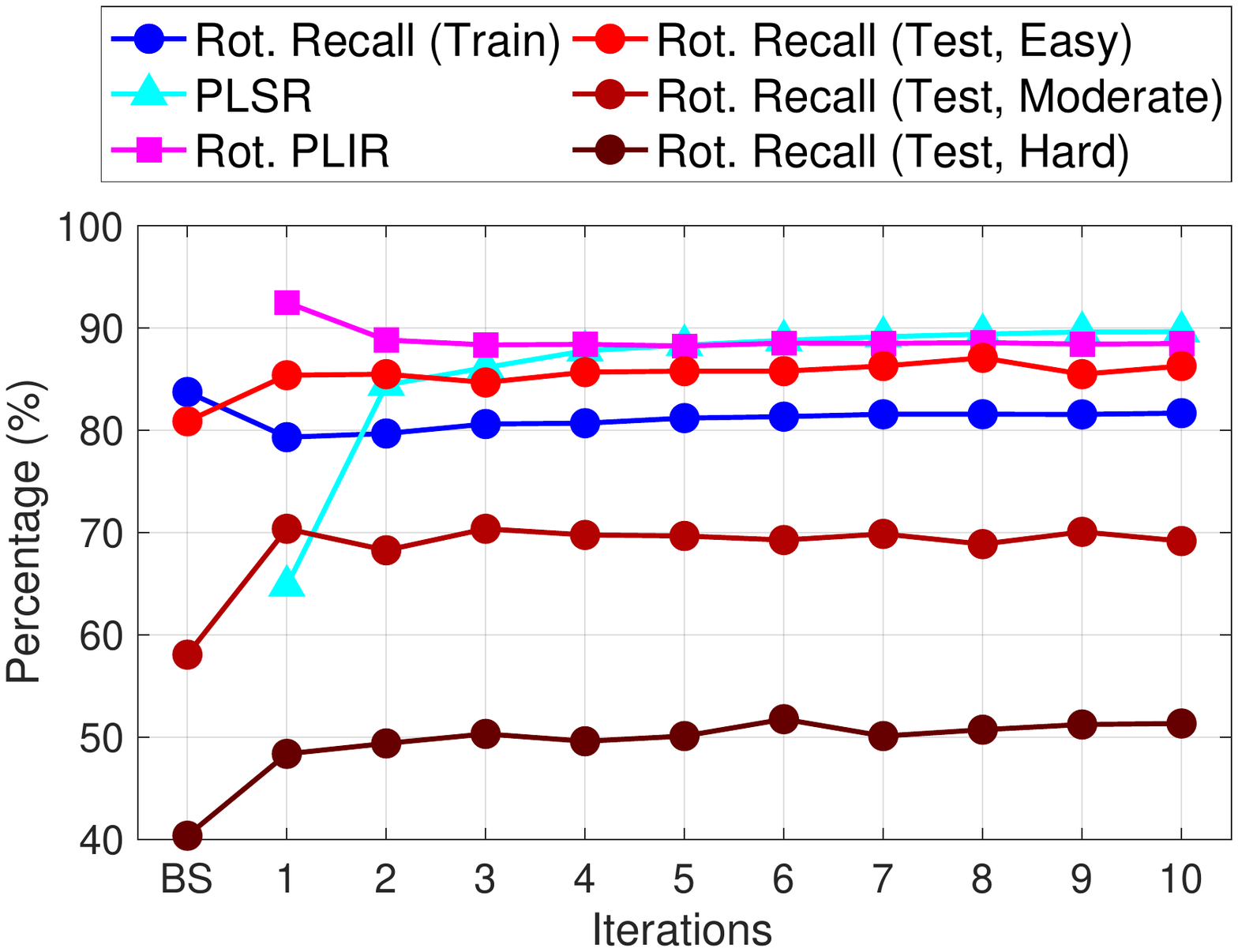} \\
			\vspace{-1mm}
			\caption{Dynamics of \nameshort on \megadepth~\cite{Li18cvpr-megadepth}. \plsr: \emph{Pseudo-Label Survival Rate}. \plir: \emph{Pseudo-Label Inlier Rate}. \bs: Boostrap.}
			\label{fig:relativepose-line-plot}
		  \end{minipage}
	  &
		\multicolumn{2}{c}{%
		\hspace{-4mm}
			\begin{minipage}{\mpwthreetwo}%
			  	%!TEX root = main.tex

\renewcommand{\arraystretch}{1.3}

\adjustbox{max width=11.5cm}{%
\hspace{3mm}
\begin{tabular}{ccccccc}
 & \multicolumn{2}{c}{\emph{Easy} (\%)} & \multicolumn{2}{c}{\emph{Moderate} (\%)} & \multicolumn{2}{c}{\emph{Hard} (\%)} \\
Methods     & Rotation & Translation  & Rotation & Translation & Rotation & Translation  \\
\hline
\sift\!+\ransactenk~\cite{lowe2004ijcv-distinctive}\footnote{\smaller \sift and \ransac implemented in \opencv~\cite{bradski08book-opencv}. \sift uses $0.75$ ratio test. All \ransac use $99.9\%$ confidence.} & $80.9$ & $48.8$ & $58.1$ & $43.5$ & $40.4$ & $34.0$ \\ 
\sift\!+\scenario{Wang-}\caps~\cite{wang20eccv-caps}\footnote{\smaller \label{ft:caps}Recall statistics adapted from the original \caps paper~\cite{wang20eccv-caps}.} & $70.0$ & $30.5$ & $50.2$ & $24.8$ & $36.8$ & $16.1$ \\
\superpoint\!+\scenario{Wang-}\caps~\cite{wang20eccv-caps}\textsuperscript{\ref{ft:caps}} & $72.9$ & $30.5$ & $53.5$ & $27.9$ & $38.1$ & $19.2$ \\
\hline 
\sift\!+\capsgt+\ransactenk\footnote{\smaller Recall computed by using \ransactenk with the pretrained \capsgt (\ie,~the supervised oracle).} & $\bm{87.1}$ & $52.7$ & $\bm{72.5}$ & $\bm{53.8}$ & $\bm{52.7}$ & $45.6$ \\
\hline
\hline
\sift\!+\finalcaps+\ransactenk & $86.3$ & $53.1$ & $69.2$ & $50.3$ & $51.3$ & $\bm{47.1}$ \\
\sift\!+\starcaps+\ransactenk & $\bm{87.1}$ & $\bm{53.5}$ & $70.4$ & $53.3$ & $51.8$ & $\bm{47.1}$ \\
\end{tabular}
}
\\
\captionof{table}{Rotation and translation recalls on the \megadepth~\cite{Li18cvpr-megadepth} test dataset using different methods. \finalcaps: last \caps trained by \nameshort. \starcaps: best \caps trained by \nameshort.
\label{tab:megadepth}}
% \caption
			\end{minipage}
		} \\
%%%%%%%%%%%%% Qualitative results %%%%%%%%%%%%%

	  \begin{minipage}{\mpwthree}%
			\centering%
			% \includegraphics[height=\subfigheight]{}
			% \includegraphics[height=\subfigheight]{} \\
			% \vspace{3mm}
			% \includegraphics[height=\subfigheight]{}
			% \includegraphics[height=\subfigheight]{} \\
			\includegraphics[width=\columnwidth]{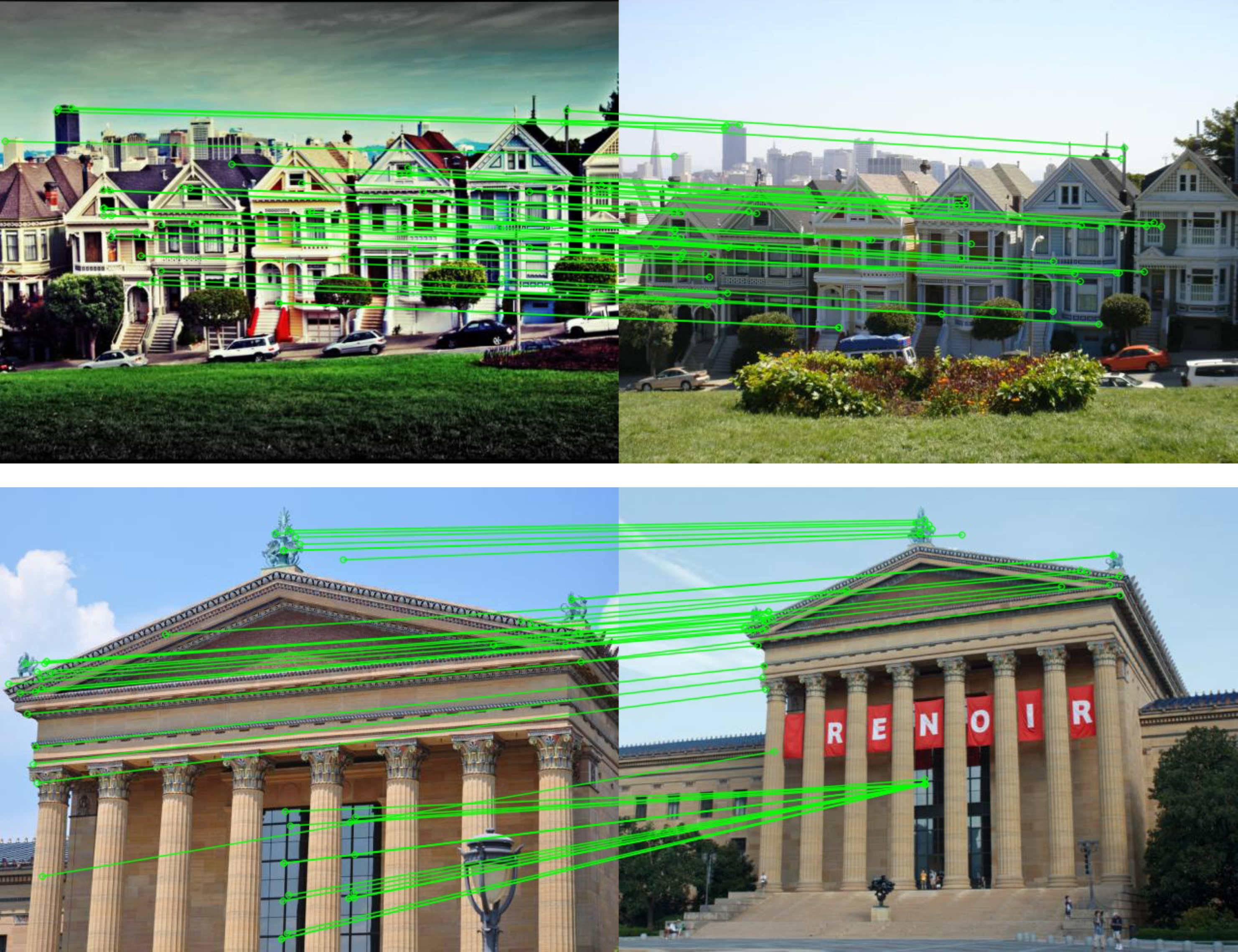}\\
			{\smaller (a) Success (top) and failure (bottom) by \sift.}
			\myvspace
		  \end{minipage}
	  &
	  \hspace{-4mm}
		\begin{minipage}{\mpwthree}%
			\centering%
			% \includegraphics[height=\subfigheight]{}
			% \includegraphics[height=\subfigheight]{} \\
			% \vspace{3mm}
			% \includegraphics[height=\subfigheight]{}
			% \includegraphics[height=\subfigheight]{} \\
			\includegraphics[width=\columnwidth]{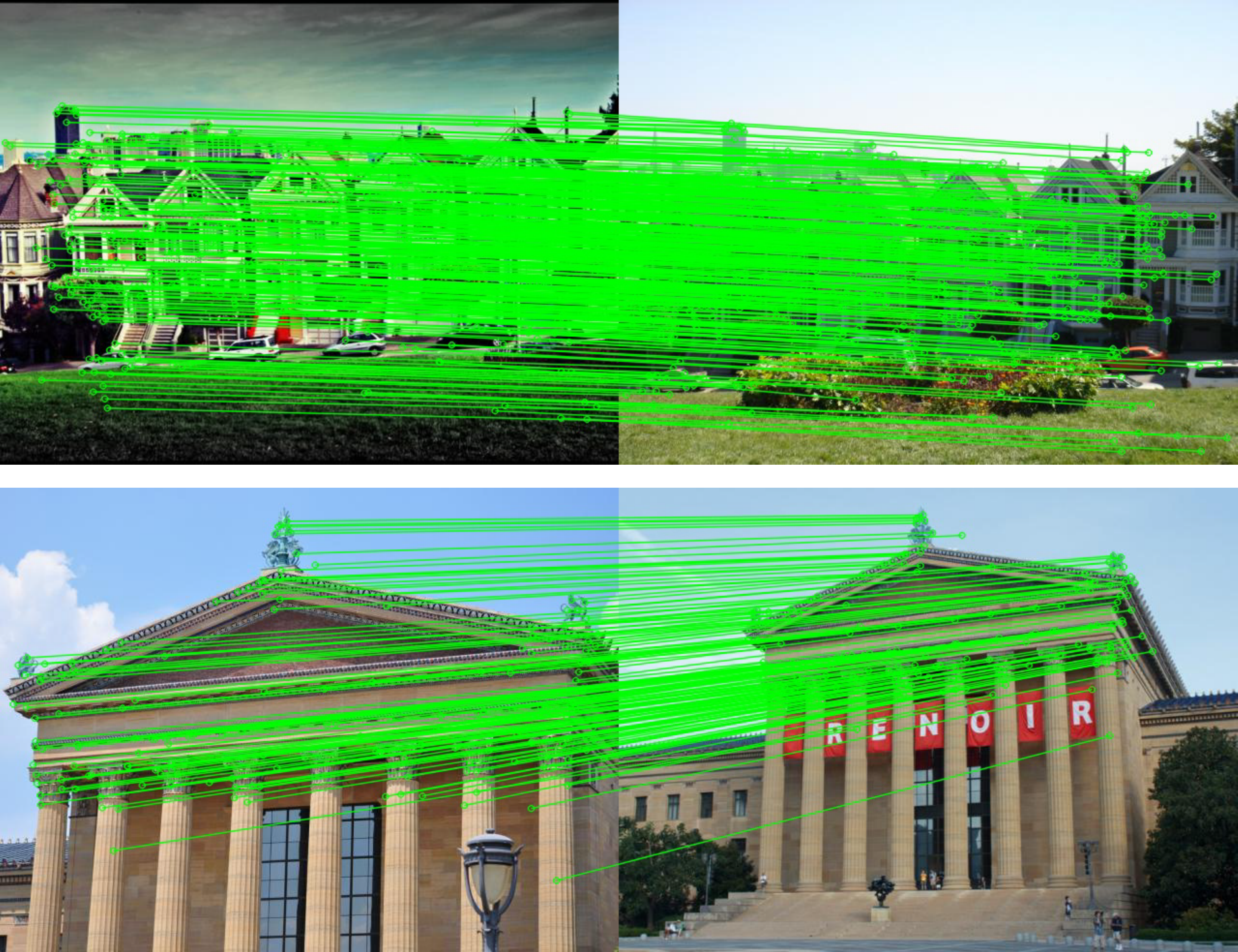}\\
			{\smaller (b) Successes by \finalcaps.}
			\myvspace
		  \end{minipage}
	   &
	   \hspace{-4mm}
	   \begin{minipage}{\mpwthree}%
			\centering%
			% \includegraphics[width=\columnwidth]{} \\
			% \vspace{1mm}
			% \includegraphics[width=\columnwidth]{} \\
			\includegraphics[width=\columnwidth]{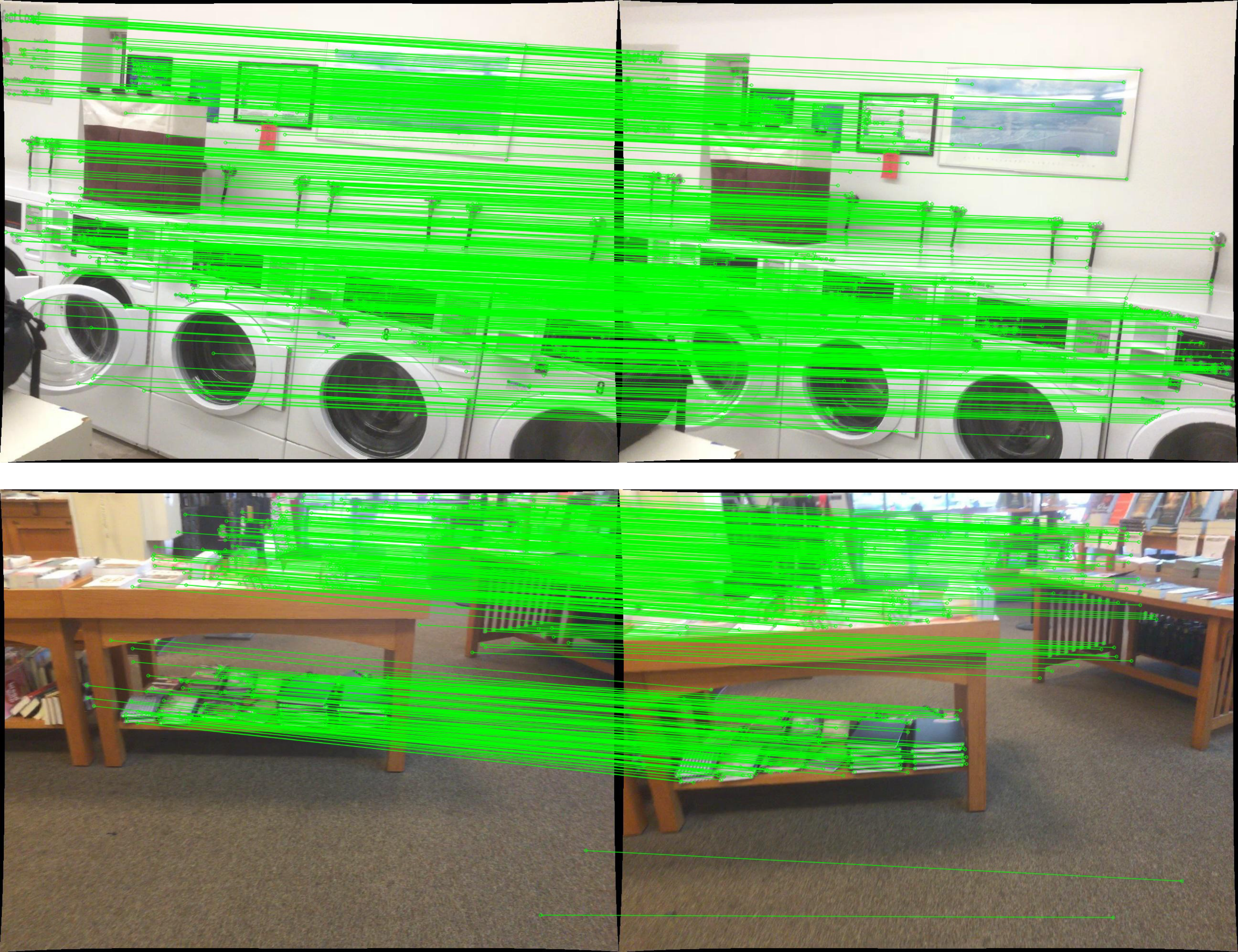}\\
			{\smaller (c) Cross-dataset generalization of \finalcaps.}
			\myvspace
		  \end{minipage} \\

		\multicolumn{3}{c}{
		\hspace{-3mm}
			\begin{minipage}{\textwidth}%
			  	\caption{Qualitative results showing the improved performance of (b) \finalcaps~over the bootstrap descriptor (a) \sift for relative pose estimation on \megadepth~\cite{Li18cvpr-megadepth}, and (c) cross-dataset generalization of \finalcaps~for relative pose estimation on the \scannet dataset~\cite{Dai17cvpr-scannet}. Green lines are inlier correspondences estimated by \ransactenk. \finalcaps~outputs reliable and dense matches. [Best viewed digitally]} \label{fig:relativepose-qualitative}
			\end{minipage}
		}
		\end{tabular}
	\end{minipage}
	\vspace{-8mm}
	\end{center}
\end{figure*}

We first provide results demonstrating successful applications of \nameshort to relative pose estimation (Section~\ref{sec:exp-relativepose}) and point cloud registration (Section~\ref{sec:exp-registration}), then report ablation studies on point cloud registration where we vary the algorithmic settings of \nameshort (Section~\ref{sec:exp-ablation}). \emph{Detailed experimental data are tabulated in the \supp}.

\subsection{Relative Pose Estimation}
\label{sec:exp-relativepose}
{\bf Setup}.
We first showcase \nameshort for Example~\ref{ex:relativepose} on the \megadepth~\cite{Li18cvpr-megadepth} benchmark containing a large collection of Internet images for the task of relative pose estimation. We adopted \ransactenk (\ie,~\ransac with maximum $10,000$ iterations) with $99.9\%$ confidence and $0.001$ inlier threshold as the teacher. We used the recently proposed \caps~\cite{wang20eccv-caps} feature learning framework as the student.\footnote{We assumed known camera intrinsics so the fundamental matrix can be computed from the essential matrix to supervise \caps.} To bootstrap \nameshort, we performed \ransactenk~with \sift~detector, \sift~descriptor, and $0.75$ ratio test to initialize the geometric pseudo-labels (\ie,~relative poses).

To speed up the training of \nameshort, we sampled $10\%$ of the original \megadepth~training set used in~\cite{wang20eccv-caps} uniformly at random, resulting in $78,836$ pairs of images \textit{without} relative pose labels. To train \caps, we modified the publicly available \caps implementation\footnote{\smaller \url{https://github.com/qianqianwang68/caps}}, adopted a smaller batch size 5, and kept the Adam optimizer with initial learning rate $10^{-4}$. We used \finetune (\cf~line~\ref{line:finetune}) for the teacher-student loop, and in every iteration, we trained \caps for $40,000$ steps. We trained \nameshort for a fixed number of $T=10$ iterations. 

In the teacher-student loop, we designed a \namefilter~that prunes pseudo-labels according to the results of \ransactenk~--~we only pass to the student pairs whose number of putative matches (either from \sift~with ratio test or \caps~with cross check) is above $100$ and whose \ransac estimated inlier rate is over $10\%$. Intuitively, pseudo-labels satisfying these two conditions are more likely to be correct. 

We name the \caps~descriptor learned from \nameshort~without ground-truth supervision as \scaps. We evaluated the performance of \scaps on (i) the \megadepth~test set, provided in~\cite{wang20eccv-caps}, including $3,000$ image pairs equally divided into \emph{easy}, \emph{moderate}, and \emph{hard} categories; (ii) the \scannet~\cite{Dai17cvpr-scannet} dataset to test cross-dataset generalization.

{\bf Results}. Fig.~\ref{fig:relativepose-line-plot} plots the dynamics of \nameshort~on \megadepth. \plsr~stands for \emph{Pseudo-Label Survival Rate} and is computed as $\abs{\calS}/\nrmeasurements \times 100\%$,~\ie,~the percentage of pseudo-labels that survived the \namefilter~(\cf~line~\ref{line:filter}). \plir stands for \emph{Pseudo-Label Inlier Rate} and denotes the percentage of correct labels in $\calS$, a number that is not used by \nameshort~but computed \emph{a posteriori} using the ground-truth labels to show that \nameshort is robust to partially incorrect labels. Besides \plsr~and \plir, Fig.~\ref{fig:relativepose-line-plot} plots the rotation recalls on both the training and the test sets (the translation recalls exhibit a similar trend and are shown in the \supp).\footnote{Recall is defined as the percentage of correctly estimated models divided by the total number of pairs. Following~\cite{wang20eccv-caps}, we say a rotation or a translation is estimated correctly if it has angular error less than $10^{\circ}$ \wrt to the groundtruth (note that translation is estimated up to scale).} The \bs~(bootstrap) iteration plots the training and test recalls using \sift. We make the following observations from Fig.~\ref{fig:relativepose-line-plot}: (i) \plsr gradually increases and approaches $90\%$ \wrt iterations, indicating that the \scaps~descriptor establishes dense correspondences with high inlier ratio, encouraged by the \namefilter; (ii) \plir remains close to $90\%$, and is always higher than the recall, indicating that the \namefilter~is effective in removing wrong labels; (iii) \scaps gradually improves itself on both the training and the test sets. (iv) While \sift~works better than \scaps~on the training set, \scaps~significantly outperforms \sift~on the test set.

Table~\ref{tab:megadepth} compares the performance of two versions of \scaps~to other SOTA methods. \finalcaps~is the \scaps~descriptor at the last iteration, while \starcaps~is the \scaps~descriptor that performs best on the \megadepth test set. We see that both versions of \scaps outperform the strong baseline using \sift with ratio test and \ransactenk, as well as the two SOTA results from the original \caps~\cite{wang20eccv-caps} using both \sift~detector and \superpoint~detector~\cite{Detone18cvprw-superpoint}.\footnote{We suspect the \ransac~in~\cite{wang20eccv-caps} is not carefully tuned.} Moreover, we report the performance of \ransactenk plus the \emph{supervised oracle}, \capsgt,~that is trained using full ground-truth supervision, on the test set. One can see that \scaps, trained using only $10\%$ of the unlabeled training set, performs on par compared with the supervised oracle. 

Fig.~\ref{fig:relativepose-qualitative} provides qualitative examples of correspondence matching results on both \megadepth and \scannet. More examples are provided in the \supp.

\subsection{Point Cloud Registration}
\label{sec:exp-registration}
%!TEX root = main.tex
% \renewcommand{\mpwthree}{5.9cm}
% \renewcommand{\mpwthreetwo}{11.4cm}
% \newcommand{\myhspace}{\hspace{-6mm}}
% \newcommand{\myvspace}{\vspace{2mm}}
% \newcommand{\subfigheight}{1.9cm}

\begin{figure*}[h]
	\begin{center}
	\begin{minipage}{\textwidth}
	\hspace{-4mm}
	\begin{tabular}{ccc}%
		\begin{minipage}{\mpwthree}%
			\centering%
			\includegraphics[width=\columnwidth]{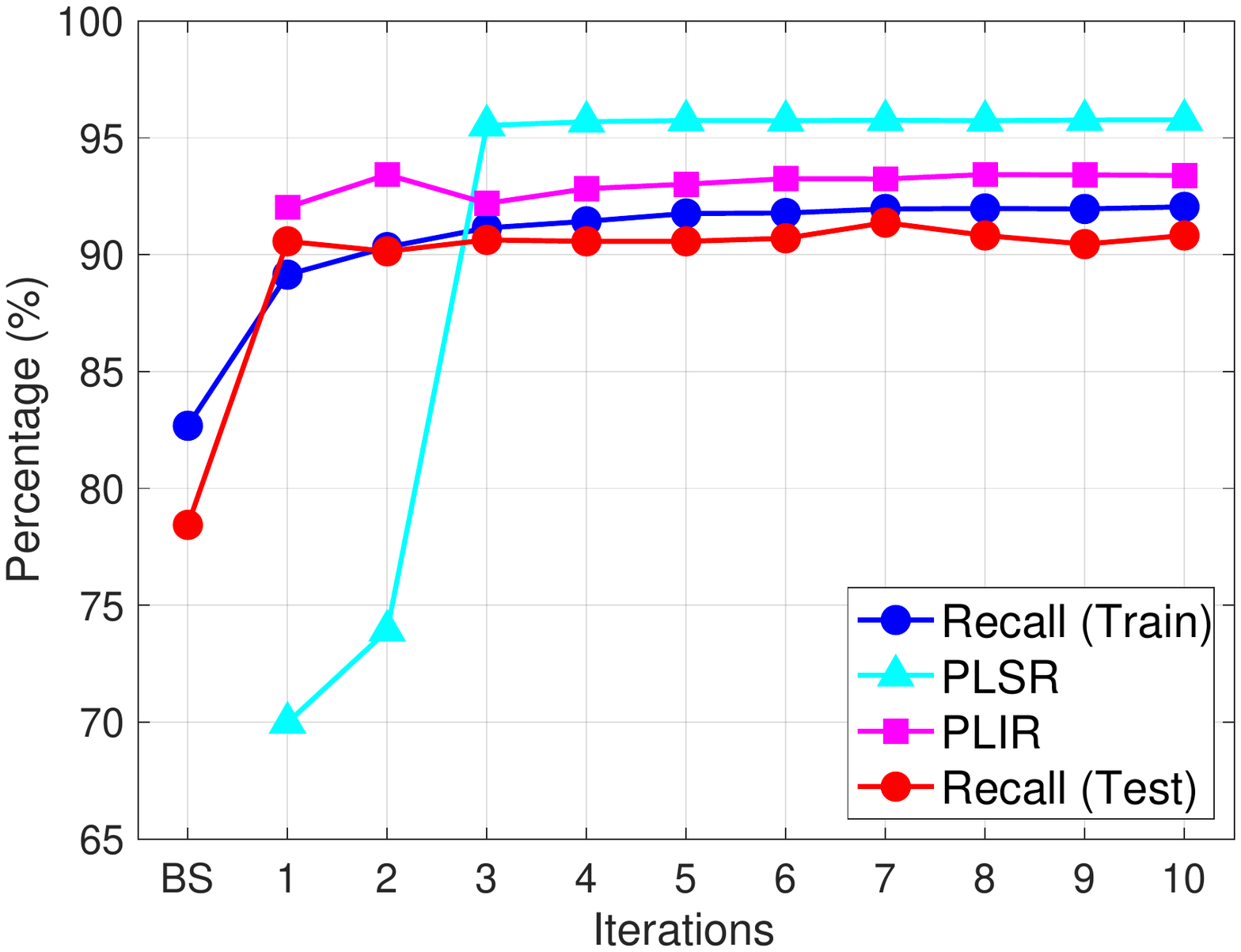} \\
			\vspace{-1mm}
			\caption{Dynamics of \nameshort on \threedmatch~\cite{Zeng17cvpr-3dmatch}. \plsr: \emph{Pseudo-Label Survival Rate}. \plir: \emph{Pseudo-Label Inlier Rate}. \bs: Boostrap.}
			\label{fig:registration-line-plot}
		  \end{minipage}
	  &
		\multicolumn{2}{c}{%
		\hspace{-10mm}
			\begin{minipage}{\mpwthreetwo}%
			  	%!TEX root = main.tex

\renewcommand{\arraystretch}{1.4}

\adjustbox{max width=11.5cm}{%
\hspace{3mm}
\begin{tabular}{ccccccccc|c}
Methods     & \shortstack{Kitchen \\(\%)} & \shortstack{Home 1 \\ (\%)}  & \shortstack{ Home 2\\ (\%)} & \shortstack{Hotel 1\\ (\%)} & \shortstack{Hotel 2 \\  (\%)} & \shortstack{ Hotel 3\\ (\%)} & \shortstack{ Study \\ (\%)} & \shortstack{ MIT \\(\%)} & \shortstack{Overall\\ (\%)} \\
\hline
%RANSAC-1K  & 90.9         & 91.0        & 73.1      & 88.1       & 80.8       & 87.0       & 79.1          & 81.8   & 9.7                   \\
%\hline
%RANSAC-10K & 96.4         & 92.3        & 73.1      & 92.0       & 84.6       & 90.7       & 82.2          & 81.8   & 96.5                  \\
% \hline
% RANSAC-55K & 97.0         & 92.9        & 75.5      & 92.5       & 85.6       & 92.6       & 84.2          & 79.2   & 372.6           \\
%\hline
\!\!\fpfh\!+\ransactenk~\cite{rusu2009icra}\footnote{\smaller \fpfh implemented in \openthreed~\cite{Zhou18arxiv-open3D}. All \ransac use $99.9\%$ confidence.} \!\!& $80.6$         & $84.6$      & $69.2$     & $88.1$      & $76.9$       & $\bm{88.9}$       & $71.2$        & $70.1$   & $78.4$                \\
% \hline
\!\!\fcgf~\cite{Choy19iccv-FCGF}\footnote{\smaller Recall statistics adapted from the original \fcgf paper~\cite{Choy19iccv-FCGF} evaluated with the criteria defined by \threedmatch.} \!\!& $93.0$         & $91.0$        & $71.0$      & $91.0$       & $87.0$       & $69.0$      & $75.0$         & $80.0$   & $82.0$               \\
\!\!\dgr~\cite{Choy20cvpr-deepGlobalRegistration}\!\! & $94.5$        & $89.7$       & $77.9$      & $92.9$       & $85.6$      & $79.6$      & $69.9$         & $72.7$   & $85.2$                  \\
% \hline
\!\!\dgr\!+\ransaceightyk~\cite{Choy20cvpr-deepGlobalRegistration} \!\! & $\bm{98.8}$         & $96.2$        & $\bm{81.7}$     & $97.3$       & $91.2$       & $87.0$       & $81.9$         & $79.2$   &  $91.3$              \\
\hline
\!\!\fcgfgt+\ransactenk\footnote{\smaller Recall computed by using \ransactenk with the pretrained \fcgfgt (\ie,~the supervised oracle).}\!\! & $97.2$         & $\bm{97.4}$        & $77.9$      & $97.8$       & $\bm{91.3}$      & $83.3$      & $86.3$         & $76.6$   & $91.1$                   \\
\hline
\hline
\!\!\finalfcgf+\ransactenk\!\! & $98.4$         & $94.2$       & $75.0$     & $\bm{98.7}$       & $89.4$       & $79.6$       & $87.3$         & $76.6$   & $90.8$                 \\
% \hline
\!\!\starfcgf+\ransactenk\!\! & $98.0$        & $94.2$        & $76.0$     & $\bm{98.7}$      & $90.4$       & $85.2$       & $\bm{88.0}$          & $\bm{80.5}$   &  $\bm{91.4}$                  \\
\end{tabular}
}
\\
\captionof{table}{Scene-wise and overall recalls on the \threedmatch~\cite{Zeng17cvpr-3dmatch} test dataset using different methods. \finalfcgf: last \fcgf trained by \nameshort. \starfcgf: best \fcgf trained by \nameshort.
\label{tab:threedmatch}}
% \caption
			\end{minipage}
		} \\

%%%%%%%%%%%%% Qualitative results %%%%%%%%%%%%%

	  \begin{minipage}{\mpwthree}%
			\centering%
			% \includegraphics[height=\subfigheight]{success-8882-8948/fpfh_corres.png}
			% \includegraphics[height=\subfigheight]{success-8882-8948/fpfh.png} \\
			% \vspace{3mm}
			% \includegraphics[height=\subfigheight]{failure-5037-2226/fpfh_corres.png}
			% \includegraphics[height=\subfigheight]{failure-5037-2226/fpfh.png} \\
			\includegraphics[width=\columnwidth]{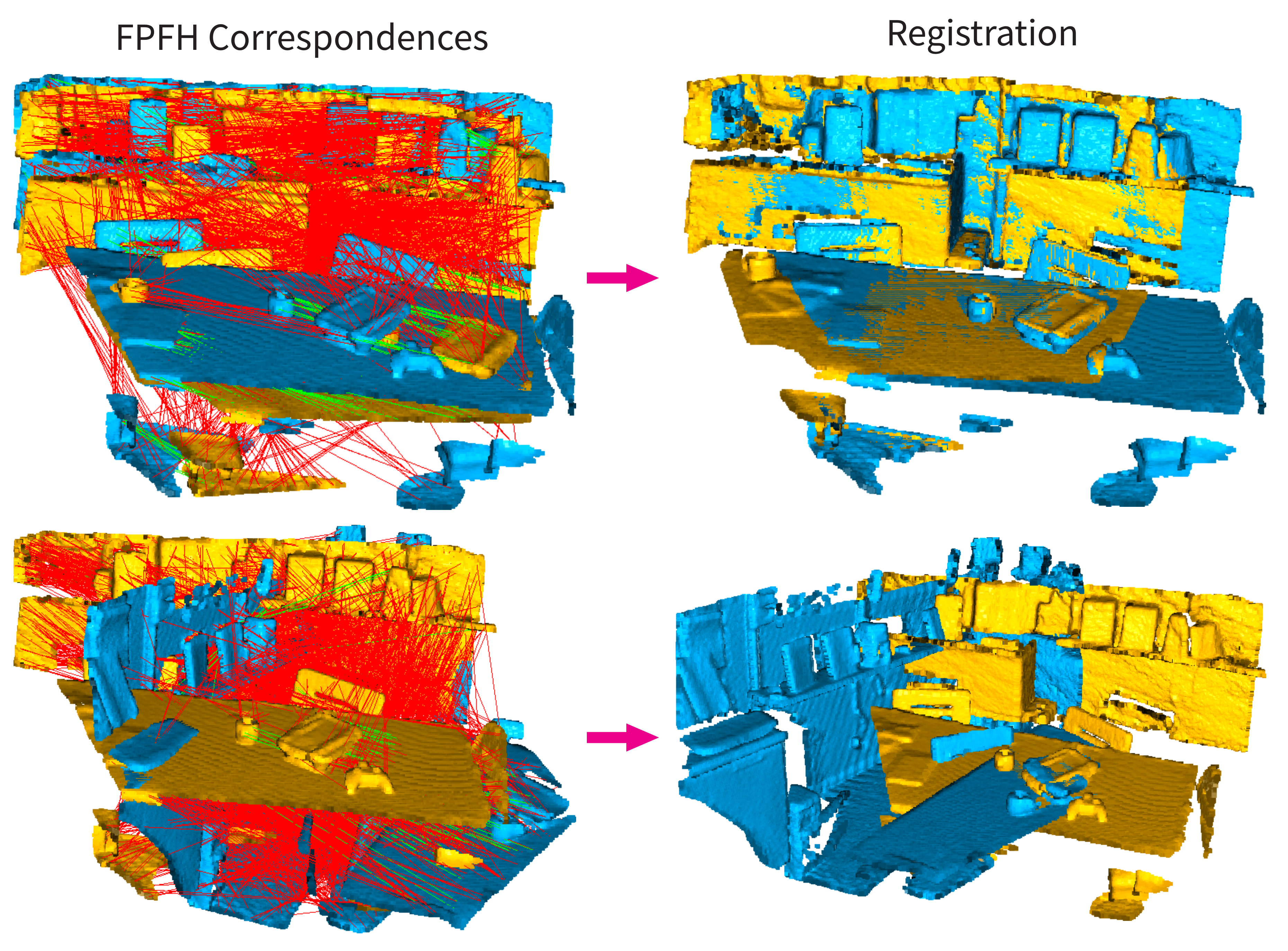}\\
			{\smaller (a) Success (top) and failure (bottom) by \fpfh.}
			\myvspace
		  \end{minipage}
	  &
		\begin{minipage}{\mpwthree}%
			\centering%
			% \includegraphics[height=\subfigheight]{success-8882-8948/fcgf_corres.png}
			% \includegraphics[height=\subfigheight]{success-8882-8948/fcgf.png} \\
			% \vspace{3mm}
			% \includegraphics[height=\subfigheight]{failure-5037-2226/fcgf_corres.png}
			% \includegraphics[height=\subfigheight]{failure-5037-2226/fcgf.png} \\
			\includegraphics[width=\columnwidth]{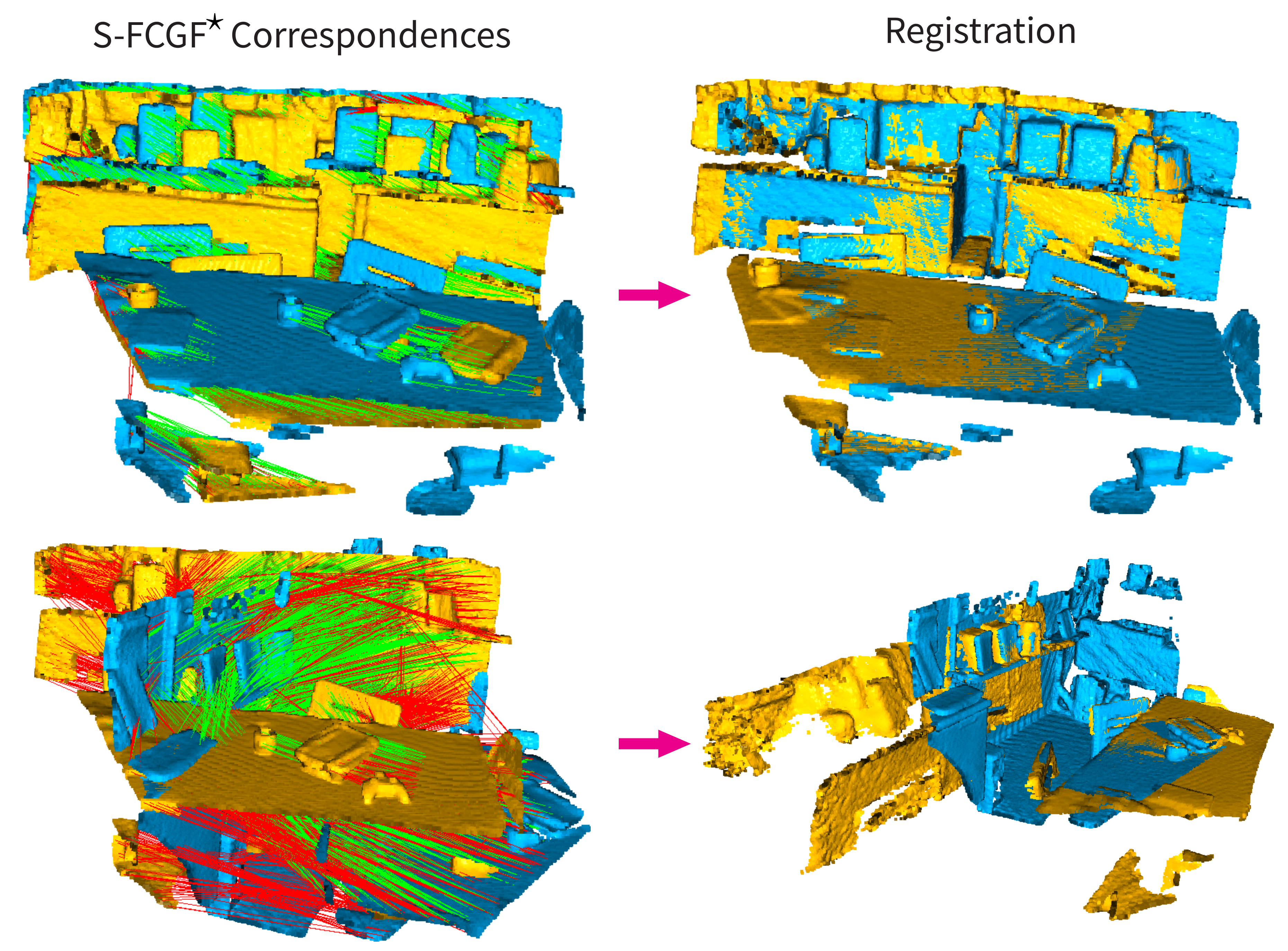}\\
			{\smaller (b) Successes by \starfcgf.}
			\myvspace
		  \end{minipage}
	   &
	   \hspace{-5mm}
	   \begin{minipage}{\mpwthree}%
			\centering%
			% \includegraphics[width=\columnwidth]{multiway_lounge.png} \\
			% \vspace{1mm}
			% \includegraphics[width=\columnwidth]{multiway_burghers_front.png} \\
			\includegraphics[width=\columnwidth]{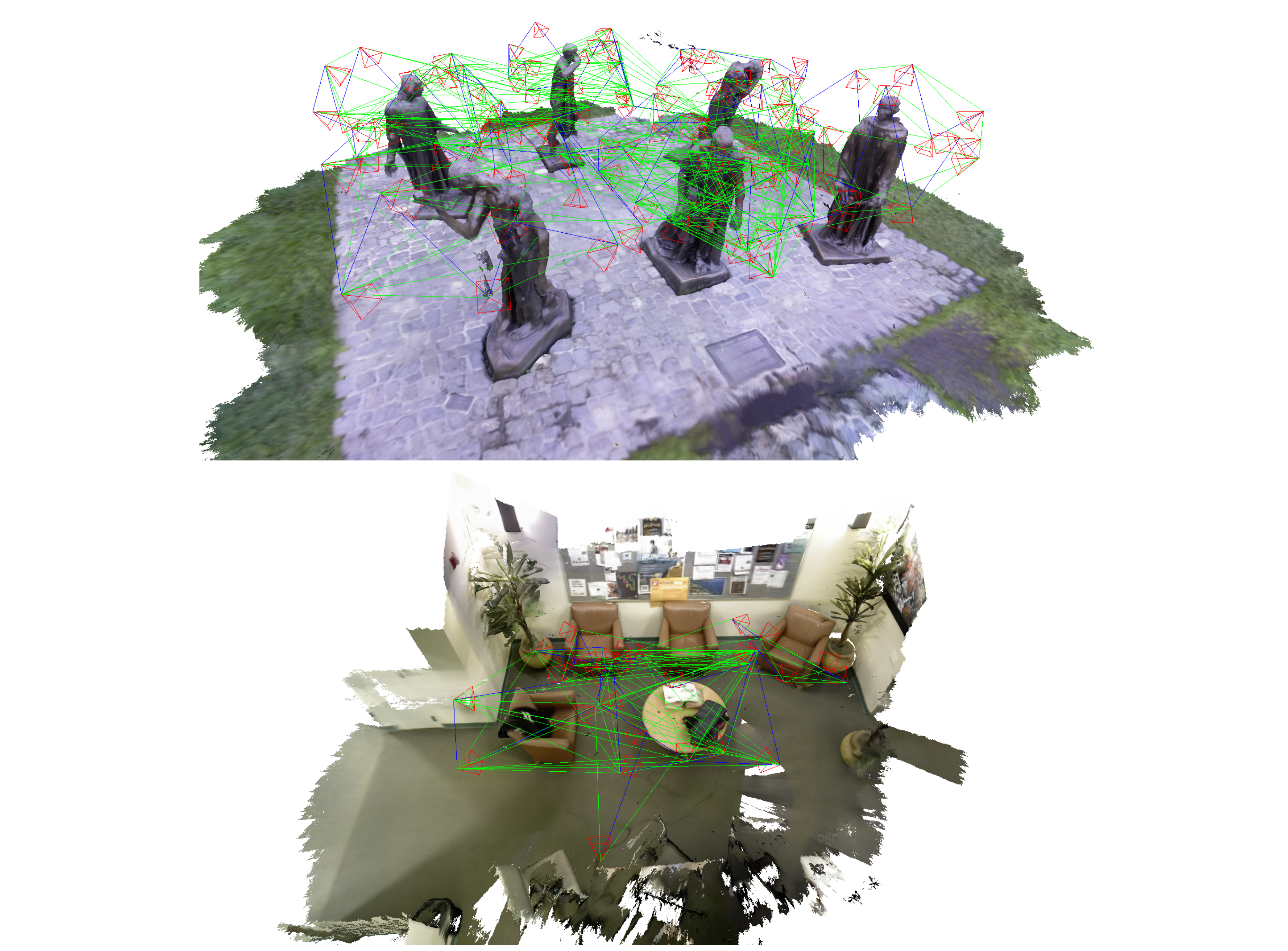}\\
			{\smaller (c) Multi-way registration using \starfcgf.}
			\myvspace
		  \end{minipage} \\

		\multicolumn{3}{c}{
		\hspace{-5mm}
			\begin{minipage}{\textwidth}%
			  	\caption{Qualitative results showing the improved performance of (b) \starfcgf~over the bootstrap descriptor (a) \fpfh for pairwise registration on \threedmatch~\cite{Zeng17cvpr-3dmatch}, and (c) cross-dataset generalization of \starfcgf~for multi-way registration on the \multiwayname dataset~\cite{Choi15cvpr-robustrecon}. In (a)-(b), the top pair has overlap ratio $89\%$, the bottom pair has overlap ratio $50\%$. Green lines: inlier correspondences. Red lines: outlier correspondences. In (c), top: \emph{Lounge}, bottom: \emph{Burghers}. Blue lines: odometry. Green lines: loop closures. [Best viewed digitally]} \label{fig:registration-qualitative}
			\end{minipage}
		}

		\end{tabular}
	\end{minipage}
	\vspace{-8mm}
	\end{center}
\end{figure*}

{\bf Setup}.
To demonstrate \nameshort for Example~\ref{ex:pointcloudregistration}, we conducted experiments on \threedmatch~\cite{Zeng17cvpr-3dmatch}, a benchmark containing point clouds of real-world indoor scenes. 
We used \ransactenk~(with 7cm inlier threshold) plus \icp~\cite{Besl92pami-icp} as the teacher, \fcgf~\cite{Choy19iccv-FCGF} as the student, and \fpfh~\cite{rusu2009icra} as the bootstrap descriptor to initialize transformation labels.
% We chose \fpfh~\cite{rusu2009icra} implemented by Open3D~\cite{Zhou18arxiv-open3D} as the handcrafted bootstrap feature and learned \fcgf~\cite{Choy19iccv-FCGF} in the teacher-student loops with \ransactenk as the robust solver. 
% In the following experiments, 

\nameshort was trained on the training set provided by \dgr~\cite{Choy20cvpr-deepGlobalRegistration} containing $9,856$ pairs of scans, \textit{without} ground-truth transformation labels. Input point clouds were all voxelized with 5cm resolution before feature extraction (both \fpfh and \fcgf) and registration.
To train \fcgf, we followed the configuration of the original \fcgf and used SGD with initial learning rate 0.1.\footnote{\smaller \url{https://github.com/chrischoy/FCGF}} In the teacher-student loop, we used \finetune, where we train \fcgf for $100$ epochs at iteration 1 and $50$ epochs for the rest of the iterations. We designed a \namefilter~based on estimated overlap ratio,~\ie,~only pairs with estimated overlap ratio over $\eta$ are passed to \fcgf. We set $\eta=30\%$ for the first two iterations and $\eta=10\%$ for the rest. \nameshort is trained for $T=10$ iterations. 
% We implemented two variants of teacher-student loops: (i) \finetune, where we trained \fcgf for 100 epochs in iteration 1, and finetuned 50 epochs for every teacher-student iteration afterward; 
% (ii) \retrain, where we trained from scratch for 100 epochs per iteration. 

We name the \fcgf descriptor learned from \nameshort without ground-truth supervision as \sfcgf. We evaluated the performance of \sfcgf on (i) the \threedmatch test set including $1,623$ pairs; and (ii) the unseen \multiwayname dataset~\cite{Choi15cvpr-robustrecon} for multi-way registration~\cite{Zhou18arxiv-open3D}.
% For pairwise registration, we verified \nameshort on \threedmatch 's test set of 1623 pairs. In addition, we replaced \fgr with \nameshort in the multi-way reconstruction pipeline~\cite{Zhou18arxiv-open3D} and demonstrated its generalization on the unseen Stanford RGBD dataset~\cite{Choi15cvpr-robustrecon}.

{\bf Results}. Fig.~\ref{fig:registration-line-plot} plots the dynamics of \nameshort on \threedmatch. We observe that: (i) \plsr increases and approaches $96\%$, indicating that more pairs enter the noisy student training; (ii) \plir remains close to $93\%$, and is always higher than the recall, showing the effect of the \namefilter; (iii) \sfcgf gradually improves itself on both training and test sets.

Table~\ref{tab:threedmatch} compares the performance of \finalfcgf and \starfcgf~to other SOTA methods.\footnote{Following~\cite{Choy20cvpr-deepGlobalRegistration}, we say a registration is successful if rotation error is below $15^{\circ}$ and translation error is below 30cm.} We see that \starfcgf~outperforms the baseline \fpfh, \fcgf~\cite{Choy19iccv-FCGF}, and the recently proposed \dgr (even with \ransaceightyk)~\cite{Choy20cvpr-deepGlobalRegistration}. We also provide results using \ransactenk plus the \emph{supervised oracle}, \fcgfgt,~that is trained using full ground-truth supervision. \starfcgf~outperforms the supervised oracle, while \finalfcgf~achieves similar performance.

Fig.~\ref{fig:registration-qualitative} shows qualitative results using \sfcgf for pairwise registration on \threedmatch and for multi-way registration on \multiwayname. More qualitative results are shown in the \supp.

\subsection{Ablation Study}
\label{sec:exp-ablation}
%!TEX root = main.tex
\newcommand{\mpwtwo}{4.5cm}
\renewcommand{\myvspace}{\vspace{2mm}}
\newcommand{\myvspaceone}{\vspace{-1mm}}

\begin{figure}[h]
% \vspace{-2mm}
	\begin{center}
	\begin{minipage}{\columnwidth}
	\hspace{-6mm}
	\begin{tabular}{cc}%
	  \begin{minipage}{\mpwtwo}%
			\centering%
			\includegraphics[width=\columnwidth]{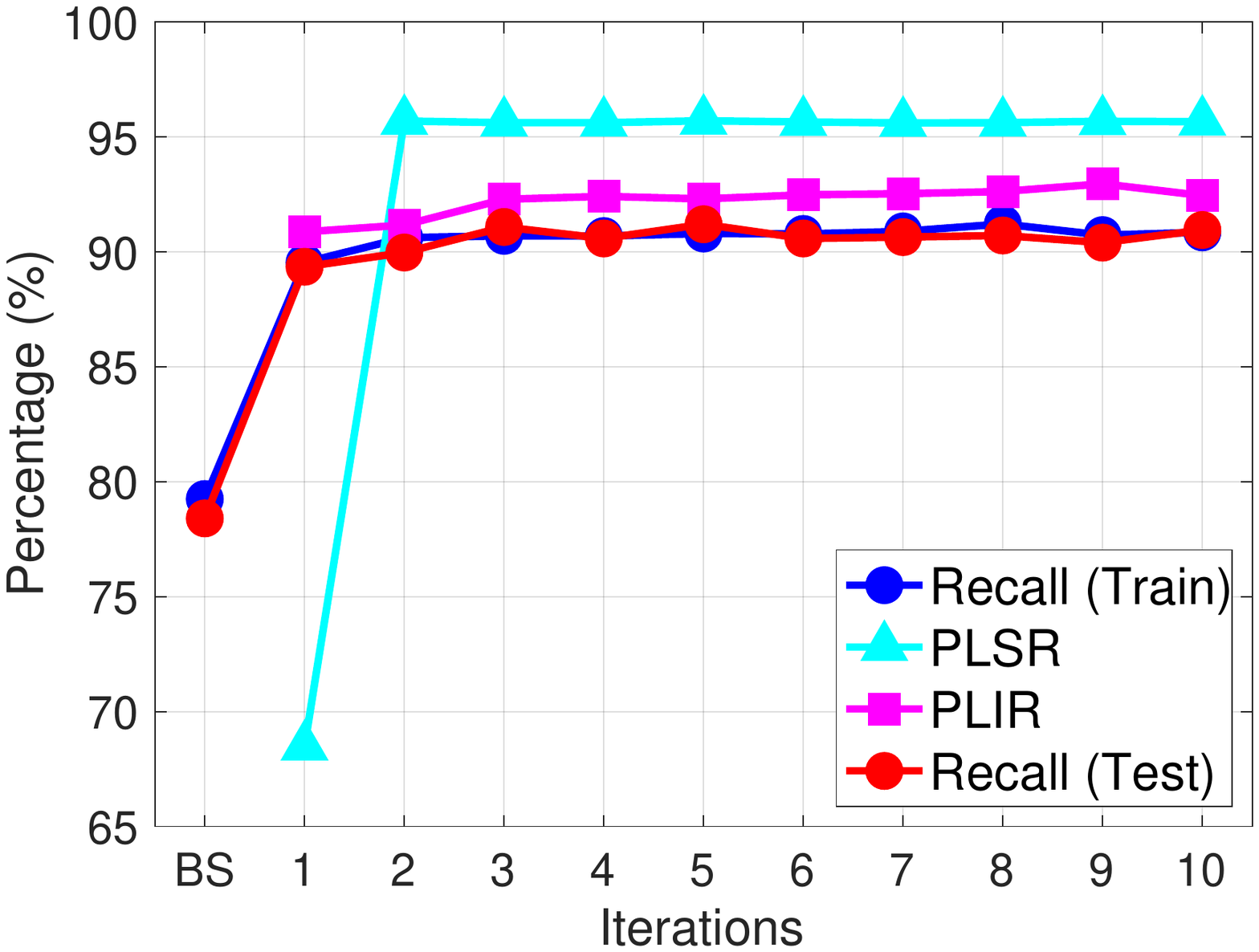}\\
			\myvspaceone
			{\smaller (a) \retrain = \true.}
			\myvspace
		  \end{minipage}
	  &
	  \hspace{-5mm}
		\begin{minipage}{\mpwtwo}%
			\centering%
			\includegraphics[width=\columnwidth]{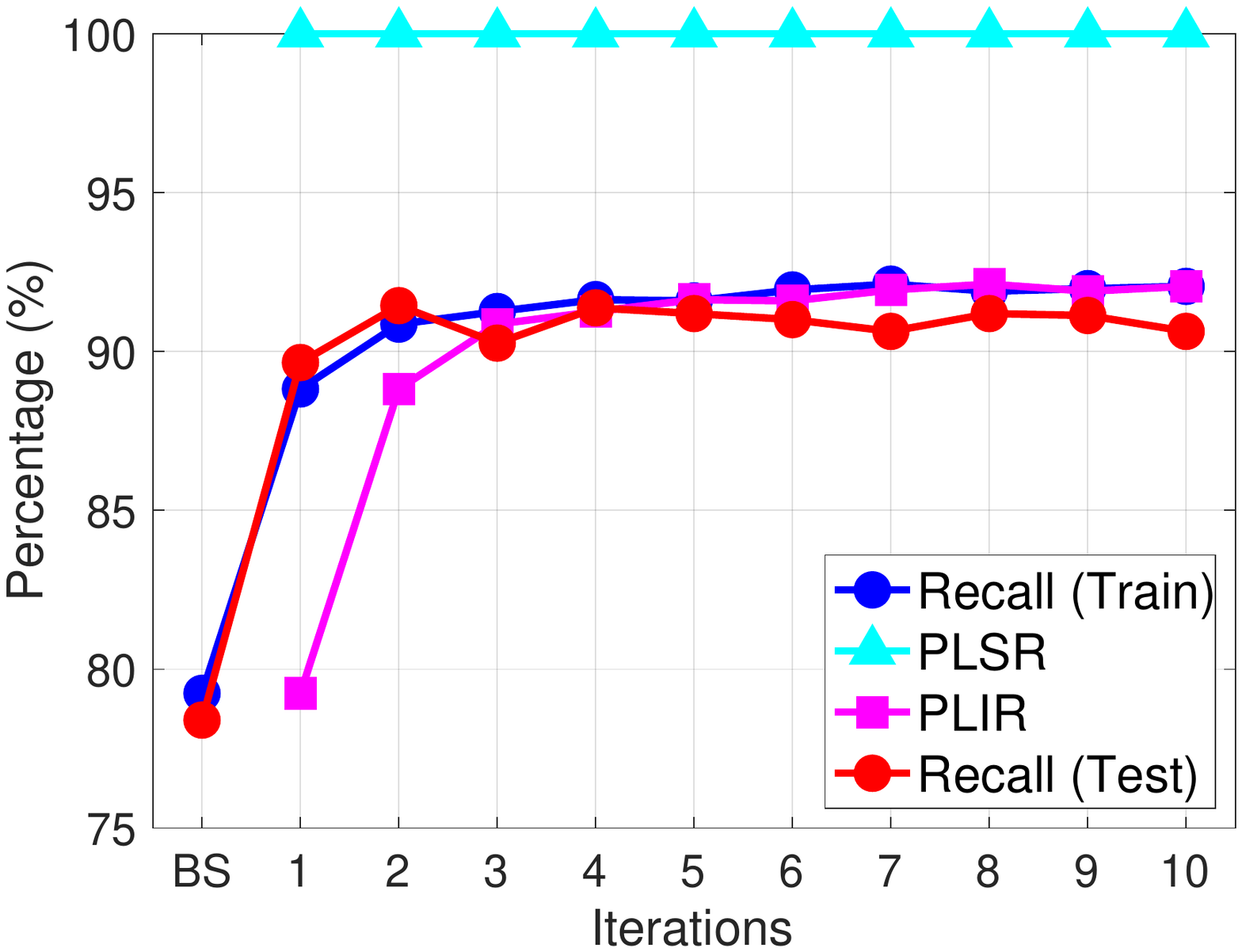}\\
			\myvspaceone
			{\smaller (b) \filterlabel = \false.}
			\myvspace
		  \end{minipage} \\
	   % &
	  %  \hspace{-4mm}
	  %  \begin{minipage}{\mpwthree}%
			% \centering%
			% \includegraphics[width=\columnwidth]{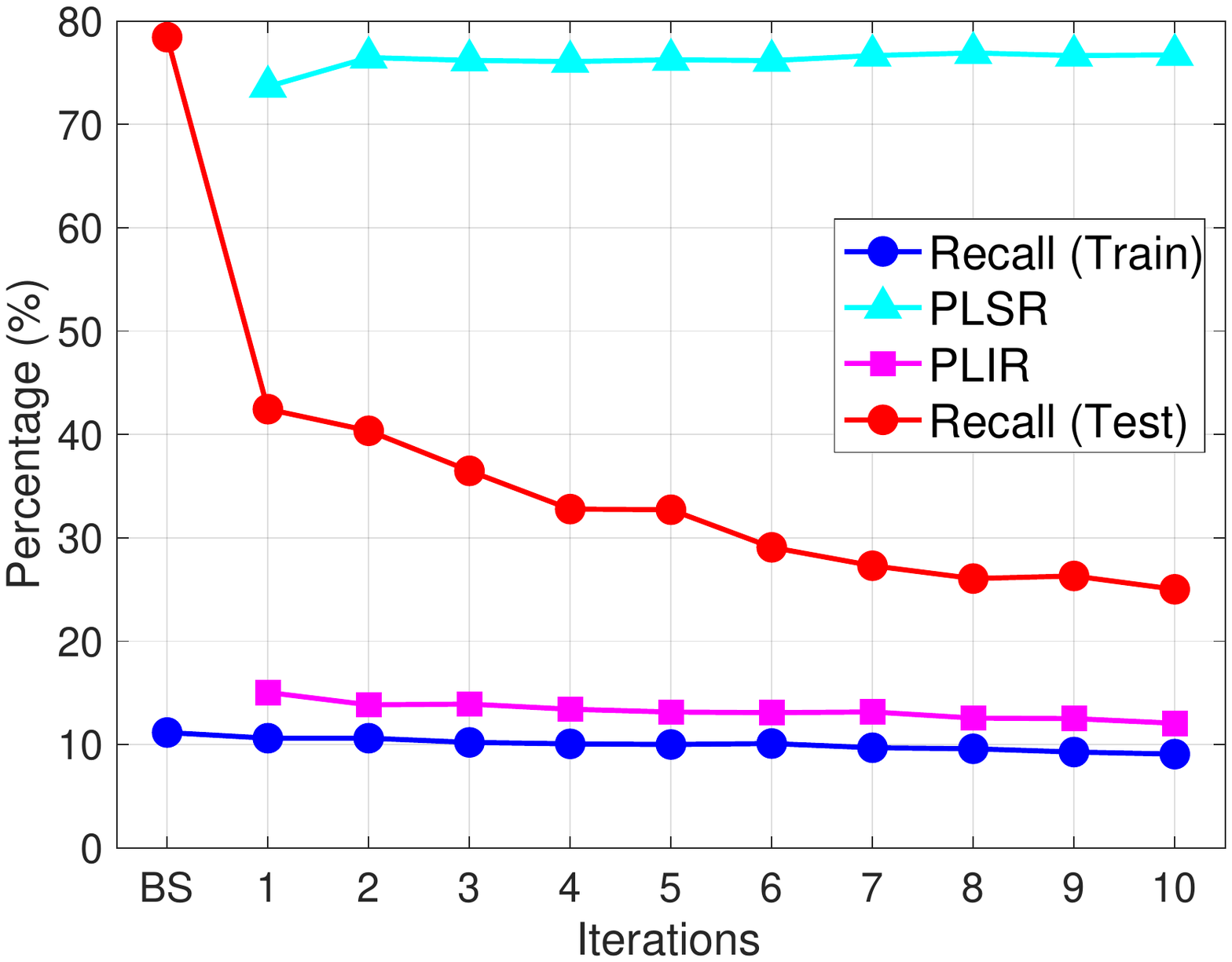}\\
			% \myvspaceone
			% {\smaller (c) \horn as teacher.}
			% \myvspace
		 %  \end{minipage} \\

		\multicolumn{2}{c}{
			\begin{minipage}{\textwidth}%
			  	\caption{ Dynamics of \nameshort on \threedmatch~\cite{Zeng17cvpr-3dmatch} with (a) \retrain instead of \finetune (line~\ref{line:randinit}); (b) the \filter (line~\ref{line:nofilter}) turned off. \nameshort still achieves over $91\%$ overall recall on the test set. \label{fig:ablation-dynamics} }
			\end{minipage}
		}
		\end{tabular}
	\end{minipage}
	\vspace{-2mm}
	\end{center}
\end{figure}
We first study the effect of using \retrain vs \finetune in \nameshort for point cloud registration. We used the same setup as in Section~\ref{sec:exp-registration}, except that we changed from \finetune to \retrain, where in each iteration, we initialized the weights of \fcgf at random and trained it for $100$ epochs. We also set the \namefilter~overlap ratio $\eta = 10\%$ for all iterations. Fig.~\ref{fig:ablation-dynamics}(a) plots the corresponding dynamics, which overall looks similar to Fig.~\ref{fig:registration-line-plot}. The \finetune train recall is slightly higher and more stable than the \retrain train recall, due to the ``continuous'' weight update nature of \finetune. \nameshort with \retrain achieves similar performance on the test set: \starfcgf~has overall recall {$91.2\%$} and \finalfcgf~has overall recall {$90.9\%$}. 

We then study the effect of the \namefilter~by running \nameshort on \threedmatch without verification,~\ie,~we set $\eta = 0$. As shown in Fig.~\ref{fig:ablation-dynamics}(b), \plsr is always $100\%$. Despite higher noise in the pseudo-labels, the performance of \nameshort remains unaffected on the test set: \starfcgf~has overall recall $91.4\%$ and \finalfcgf~has overall recall $90.6\%$.

In the \supp, we provide two more ablation studies on \threedmatch: (i) we trained \nameshort on the small test set and tested \sfcgf on the large training set, to show better generalization of a large training set; (ii) we replaced \ransactenk with a non-robust registration solver as the teacher to show the importance of a robust solver.
%!TEX root = main.tex
\section{Conclusion}
\label{sec:conclusion}
We proposed \nameshort, the first general framework for feature learning in \topicllc without any supervision from ground-truth geometric labels. \nameshort iteratively performs robust estimation of the geometric models to generate pseudo-labels, and feature learning under the supervision of the noisy pseudo-labels.
%\nameshort is initialized using a bootstrap descriptor.
We applied \nameshort to camera pose estimation and point cloud registration, demonstrating performance that is on par or even superior to supervised oracles in large-scale real datasets.

Future research includes (i) increasing the training recall towards 100\%; (ii) differentiating the robust estimation layer~\cite{Gould19arXiv-DDN}; (iii) designing an optimality-based~\cite{Yang20nips-certifiablePerception} and learnable verifier based on \emph{cycle consistency}~\cite{Huang19cvpr-transformationsync,Gojcic20cvpr-multiviewRegistration,Mangelson18icra-pcm}; (iv) speeding up the teacher-student loop; (iv) forming image and point cloud pairs using \emph{image retrieval}~\cite{Toft20PAMI-LongtermLocalization,Cummins08ijrr-fabmap}.

% {\bf Future research}. (i) Although \nameshort can learn to register challenging pairs from easier ones, there seems to exist an upper bound on how challenging the pairs can be. It is exciting to explore whether and how training recall can be increased towards 100\% (e.g., a better student, a better teacher, and a better teacher-student interaction scheme). (ii) A fully unsupervised end-to-end pipeline where the robust estimation layer can be differentiated, is an appealing direction. (iii) A mathematically rigorous verifier based on global optimality would be preferred, and connects to the line of research in \emph{certifiably robust perception}~\cite{Yang20nips-certifiablePerception}. We also believe a learnable verifier, such as those in~\cite{Huang19cvpr-transformationsync,Gojcic20cvpr-multiviewRegistration} is an interesting direction. (iv) Current implementation requires training the student and sweeping the teacher across the entire dataset multiple times. We believe that the teacher-student loop can be sped up (some strategies already mentioned in Remark~\ref{remark:implementation}). (v) Forming image and point cloud pairs fully \emph{automatically} using, for example, \emph{image retrieval}~\cite{Toft20PAMI-LongtermLocalization,Cummins08ijrr-fabmap} is worth exploring.

% \input{acknowledgement}

% \clearpage
{\small
\bibliographystyle{ieee_fullname}
\bibliography{refs}
}

\clearpage
\onecolumn
\begin{center}
\large{\bf \emph{Supplementary Material}}
\end{center}

\renewcommand{\thesection}{A\arabic{section}}
\renewcommand{\theequation}{A\arabic{equation}}
\renewcommand{\thetheorem}{A\arabic{theorem}}
\renewcommand{\thefigure}{A\arabic{figure}}
\renewcommand{\thetable}{A\arabic{table}}
\setcounter{equation}{0}
\setcounter{section}{0}
\setcounter{theorem}{0}
\setcounter{figure}{0}
\setcounter{table}{0}

%!TEX root = main.tex
\section{Proof of Proposition~\ref{prop:augmentlagrangian}}
\label{sec:supp-proof-alm}
\begin{proof}
We prove Proposition~\ref{prop:augmentlagrangian} for Examples~\ref{ex:relativepose}-\ref{ex:pointcloudregistration} separately.

{\bf Example~\ref{ex:relativepose}: Relative Pose Estimation.} In relative pose estimation, the known geometric model for the $i$-th measurement pair is $\MRgt_i \in \SOthree$ and $\vtgt_i \in \usphere{2}$, where $\MRgt_i$ is the relative rotation, and $\vtgt_i$ is the up-to-scale relative translation between two images $\measone_i$ and $\meastwo_i$. Using $(\MRgt,\vtgt)$, we can form the \emph{essential matrix} $\MEgt_i \doteq \hatmap{\vtgt_i} \MRgt_i$, from which we further compute the \emph{fundamental matrix} $\MFgt_i \doteq (\Ktwo_i)^{-T} \MEgt_i (\Kone_i)\inv$, where $\Kone_i$, $\Ktwo_i$ are the camera intrinsics for the two images $\measone_i$ and $\meastwo_i$~\cite{hartley2004book}. Now we let the residual function $r(\cdot)$ be the algebraic error~\cite{hartley2004book}:
\bea
r(\MFgt_i, \kptonehomo_{i,k}, \kptpredicthomo_{i,k}) = (\kptpredicthomo_{i,k})\tran \MFgt_i \kptonehomo_{i,k}, \label{eq:2viewconstraint}
\eea
which should vanish if there is no measurement noise, and $\kptonehomo, \kptpredicthomo \in \Real{3}$ denotes the homogeneous coordinates of the keypoint locations. In eq.~\eqref{eq:2viewconstraint}, $\MFgt_i \kptonehomo_{i,k}$ is called the \emph{epipolar line} (in fact, $\MFgt_i \kptonehomo_{i,k}$ represents the normal vector of the plane formed by the epipolar line and the camera optical center). 

Because we have adopted a \tls cost function,~\ie~$\rho(r) = \min \cbrace{r^2, \barcsq}$ (and assume $\barcsq$ is small), obviously, the global minimizer of problem~\eqref{eq:featurelearning} is the following:
\bea
\kptpredict_{i,k} = \embedding(\kptone_{i,k}, \measone_i, \kpttwo_i, \meastwo_i) \in \begin{cases}
\text{the epipolar line } \MFgt_i \kptonehomo_{i,k} & \text{if} \text{ the epipolar line intersects } \meastwo_i \\
\meastwo_i & \text{otherwise}
\end{cases}, \label{eq:maptoepipolarline}
\eea
which says that the predicted keypoint $\kptpredict_{i,k}$ should lie precisely on the epipolar line if the epipolar line has a nonempty intersection with the image $\meastwo_i$ (so that the residual~\eqref{eq:2viewconstraint} is zero and $\rho(r)=0$), or it can be an arbitrary point on the image otherwise (so that the residual~\eqref{eq:2viewconstraint} is nonzero and $\rho(r) = \barcsq$ is very small). In~\cite{wang20eccv-caps}, the authors designed another constraint that enforces cycle consistency,~\ie,~the back-predicted keypoint of the predicted keypoint should be the original keypoint:
\bea
\embedding(\kptpredict_{i,k},\meastwo_i,\kptone_i,\measone_i) = \kptone_{i,k}. \label{eq:mapcycleconsistent}
\eea
Combining eq.~\ref{eq:maptoepipolarline} and~\eqref{eq:mapcycleconsistent}, we can reformulate the original feature learning problem~\eqref{eq:featurelearning} as:
\bea
\text{find} & \embedding_{\nnparam} \label{eq:reformulaterelativepose}\\
\subject & \embedding \text{ satisfies}~\eqref{eq:maptoepipolarline} \text{ and}~\eqref{eq:mapcycleconsistent},
\eea
which enforces the correspondence function $\embedding$ (parametrized by $\nnparam \in \Real{\dimnn}$) to map keypoints in $\measone_i$ to their corresponding epipolar lines (if the epipolar line exists) in $\meastwo_i$, and to map the predicted keypoints in $\meastwo_i$ back to their original keypoints, which is connected to the cross check criteria mentioned in the main text. 

The reformulated problem~\eqref{eq:reformulaterelativepose} is a constrained optimization problem that is not suitable for training neural networks. Therefore, the last step we do is to move the constraints to the cost function and penalize the \emph{violation} of the constraints, which is commonly referred to as the \emph{Augmented Lagrangian Method} (\alm), or the \emph{penalty method}:
\bea
\hspace{-4mm} \min_{\nnparam \in \Real{\dimnn}} \sum_{i=1}^{\nrmeasurements} \sum_{k=1}^{\nrkpt_{a_i}} \lambda_{\text{epipolar}} \cdot \dist\parentheses{\underbrace{\embedding(\kptone_{i,k}, \measone_i, \kpttwo_i, \meastwo_i)}_{\kptpredict_{i,k}}, \MFgt_i \kptonehomo_{i,k}}^2 + \lambda_{\text{cycle}} \cdot \dist\parentheses{\embedding\left( \underbrace{\embedding(\kptone_{i,k}, \measone_i, \kpttwo_i, \meastwo_i)}_{\kptpredict_{i,k}},\meastwo_i,\kptone_i,\measone_i\right), \kptone_{i,k}}^2, \label{eq:capsalm}
\eea
where $\lambda_{\text{epipolar}}, \lambda_{\text{cycle}} > 0$ are constants chosen by the user. Finally, let the correspondence function be the form in~\eqref{eq:correspondececaps}, we recover the loss function in the \caps paper~\cite{wang20eccv-caps}.\footnote{The $\dist\parentheses{\cdot}$ function in~\eqref{eq:capsalm} is equivalent to the $\ell_2$ norm $\norm{\cdot}$. \cite{wang20eccv-caps} used the $\dist\parentheses{\cdot}$ instead of $\dist\parentheses{\cdot}^2$. This can be easily seen as the Augmented Lagrangian if using the constraint $\sqrt{\dist\parentheses{\cdot}} = 0$, instead of $\dist\parentheses{\cdot}=0$.} Therefore, the \caps neural network can be seen as a method to solve the feature learning problem~\eqref{eq:featurelearning} by solving its Augmented Lagrangian~\eqref{eq:capsalm}.

%%%%%%%%%%%%%%%%%%%%%%%%%%%%%%%%%%%%%%%%%%%%%%%%%%%

{\bf Example~\ref{ex:pointcloudregistration}: Point Cloud Registration.} In point cloud registration, the known geometric model for the $i$-th measurement pairs is the rigid transformation $\MRgt_i \in \SOthree$ and $\vtgt_i \in \Real{3}$ between the two point clouds $\measone_i$ and $\meastwo_i$. Let the residual function $r(\cdot)$ be the Euclidean distance:
\bea
r(\MRgt_i,\vtgt_i,\kptone_{i,k},\kptpredict_{i,k}) = \norm{\kptpredict_{i,k} - \MRgt_i \kptone_{i,k} - \vtgt_i},
\eea
which should be zero without measurement noise. Under the \tls cost function $\rho(r) = \min \cbrace{r^2, \barcsq}$, the global minimizer of problem~\eqref{eq:featurelearning} is 
\bea
\kptpredict_{i,k} = \embedding(\kptone_{i,k},\measone_i,\kpttwo_i,\meastwo_i) = \begin{cases}
\displaystyle \argmin_{\kpttwo_{i,j} \in \kpttwo_i } r(\MRgt_i,\vtgt_i,\kptone_{i,k}, \kpttwo_{i,j}) & \text{if } \displaystyle \min_{\kpttwo_{i,j} \in \kpttwo_i } r(\MRgt_i,\vtgt_i,\kptone_{i,k}, \kpttwo_{i,j})  < \barc \\
\emptyset & \text{otherwise}
\end{cases}, \label{eq:nnsmap}
\eea
which states that the correspondence function $\embedding$ should output the nearest neighbor of $(\MRgt_i\kptone_{i,k} + \vtgt_i)$ in $\kpttwo_i$ if the Euclidean distance between the nearest neighbor and $(\MRgt_i\kptone_{i,k} + \vtgt_i)$ is close enough to be considered as an inlier, and outputs nothing otherwise (\ie,~$\kptone_{i,k}$ does not have a corresponding point in $\kpttwo_i$). Therefore, we can reformulate problem~\eqref{eq:featurelearning} as:
\bea
\text{find} & \embedding \label{eq:findcorr}\\
\subject & \embedding \text{ satisfies~\eqref{eq:nnsmap}}.
\eea
We then use the fact that $\embedding$ is a composition of a feature descriptor and nearest neighbor search in the feature space (\cf~eq.~\eqref{eq:nns} in Example~\ref{ex:pointcloudregistration}), and hence, problem~\eqref{eq:findcorr} is further equivalent to finding a descriptor $\describe$ such that:
\bea
\text{find} & \describe \\
\subject & \dist\parentheses{ \describe(\kptone_{i,k},\measone_i), \describe(\kptpredict_{i,k},\meastwo_i) } \leq \dist\parentheses{ \describe(\kptone_{i,k},\measone_i), \describe(\kpttwo_{i,j},\meastwo_i) }, \forall  \kpttwo_{i,j} \neq \kptpredict_{i,k}, \label{eq:nnsfeature}
\eea
which precisely states that the distance in the feature space between $\kptone_{i,k}$ and the corresponding keypoint $\kptpredict_{i,k}$ is smaller than the distance between $\kptone_{i,k}$ and any other point in $\kpttwo_i$. In fact, we can ask for stronger conditions on the feature descriptor $\describe$:
\bea
\text{find} & \describe \label{eq:findstrongfeature} \\
\subject & \dist\parentheses{ \describe(\kptone_{i,k},\measone_i),\describe(\kptpredict_{i,k},\meastwo_i) } \leq \pmargin, \label{eq:pmargin} \\
& \dist\parentheses{ \describe(\kptone_{i,k},\measone_i),\describe(\kpttwo_{i,j},\meastwo_i) } \geq \nmargin, \forall  \kpttwo_{i,j} \neq \kptpredict_{i,k},  \label{eq:nmargin} \\
& \dist\parentheses{ \describe(\kptone_{i,k},\measone_i),\describe(\kptpredict_{i,k},\meastwo_i) } \leq \pnmargin + \dist\parentheses{ \describe(\kptone_{i,k},\measone_i),\describe(\kpttwo_{i,j},\meastwo_i) }, \label{eq:pnmargin}
\eea
that says: (i) the feature distance between the matched keypoint pair $\kptone_{i,k}$ and $\kptpredict_{i,k}$ has to be smaller than a predefined margin $\pmargin > 0$ (eq.~\eqref{eq:pmargin}); (ii) the feature distance between $\kptone_{i,k}$ and all the other non-matched keypoints has to be larger than a predefined margin $\nmargin > \pmargin$ (eq.~\eqref{eq:nmargin}); (iii) the feature distance between non-matched keypoint pairs has to be at least $\pnmargin$ larger than the feature distance between matched keypoint pairs (eq.~\eqref{eq:pnmargin}). Obviously, conditions~\eqref{eq:pmargin}-\eqref{eq:pnmargin} are sufficient (but not necessary) for ensuring condition~\eqref{eq:nnsfeature}.

Again, problem~\eqref{eq:findstrongfeature} is a constrained optimization that is not suitable for neural network training. Therefore, we develop its Augmented Lagrangian (for the constraints related to the keypoint $\kptone_{i,k}$) to be:
\begin{multline}
\calL(\kptone_{i,k},\pslack,\nslack,\pnslack) = 
\pcoeff \parentheses{ \pmargin - \dist\parentheses{\describe(\kptone_{i,k}, \measone_i), \describe(\kptpredict_{i,k},\meastwo_i)}   - \pslack }^2 + \\
\sum_{ \kpttwo_{i,j} \neq \kptpredict_{i,k} } \ncoeff \parentheses{ \dist\parentheses{ \describe(\kptone_{i,k},\measone_i),\describe(\kpttwo_{i,j},\meastwo_i) } - \nmargin - \nslack }^2 + \\
\sum_{ \kpttwo_{i,j} \neq \kptpredict_{i,k} } \pncoeff \parentheses{ \dist\parentheses{ \describe(\kptone_{i,k},\measone_i),\describe(\kpttwo_{i,j},\meastwo_i) } - \dist\parentheses{\describe(\kptone_{i,k}, \measone_i), \describe(\kptpredict_{i,k},\meastwo_i)} - \pnmargin - \pnslack }^2, \label{eq:fcgfloss}
\end{multline}
where $\pslack,\nslack,\pnslack\geq 0$ are nonnegative slack variables. In eq.~\eqref{eq:fcgfloss}, the first two terms denote the \emph{contrastive loss}, while the last term denotes the \emph{triplet loss}. The \alm~\cite{Bertsekas99book-nonlinearprogramming} solves the following optimization:
\bea \label{eq:almfeatlearn}
\min_{ \describe ,\pslack\geq0,\nslack\geq0,\pnslack \geq 0} \sum_{i=1}^{\nrmeasurements} \sum_{k=1}^{\nrkpt_{a_i}} \calL(\kptone_{i,k},\pslack,\nslack,\pnslack).
\eea
Finally, by enforcing $\pslack=\nslack=\pnslack=0$, problem~\eqref{eq:almfeatlearn} recovers the metric learning problem in the \fcgf paper~\cite{Choy19iccv-FCGF}. Therefore, the \fcgf neural network can be seen as a method to solve the feature learning problem~\eqref{eq:featurelearning} by solving the Augmented Lagrangian~\eqref{eq:almfeatlearn}.
\end{proof}
%!TEX root = main.tex
\section{Application of \nameshort on Object Detection and Pose Estimation}
\label{sec:supp-application-objectdetect}

\begin{example}[Object Detection and Pose Estimation]
\label{ex:detectpose}
Given a collection of 3D models $\{\measone_i\}_{i=1}^{\nrmodels}$, where each model $\measone_i \in \Real{3\times \nrkpt_{a_i}}$ consists of a set of 3D keypoints, let $\measone \in \Real{3\times \nrkpt_a}, \nrkpt_a = \sum_{i=1}^{\nrmodels} \nrkpt_{a_i}$, be the concatenation of all 3D keypoints. In addition, given a corpus of 2D images $\{\meastwo_i\}_{i=1}^{\nrmeasurements}$, where each $\meastwo_i$ is an RGB image that contains the (partial, occluded) projections of the 3D models plus some background. Object detection and pose estimation seeks to jointly learn a keypoint prediction function $\embedding$ and estimate the poses of the 3D models $\model_i = \{ (\MR_{i,j}, \vt_{i,j}) \}_{j \in \calS \subset [\nrmodels]} \in (\SOthree \times \Real{3})^{|\calS|}$, where $\calS \subset [\nrmodels]$ is the subset of 3D models observed by the $i$-th 2D image ($|\calS|$ denotes the cardinality of the set). In particular, following~\cite{Zakharov2019dpod}, let $\embedding$ be a combination of UVW mapping and semantic ID masking,~\ie,~for each pixel in $\meastwo_i$, $\embedding$ predicts which 3D model it belongs to (from $1$ to $\nrmodels$, and $0$ for background), and what is the corresponding 3D coordinates in the specific model, thus deciding which point in $\measone$ is the corresponding 3D point.\footnote{There are many different ways to establish 2D-3D correspondences, see \pvnet~\cite{Peng19cvpr-pvnet}, \yolosixd~\cite{Tekin18cvpr-yolo6d} and references therein.}
\end{example}

{\bf \nameshort for Example~\ref{ex:detectpose}}. The teacher performs robust absolute pose estimation, \aka~\emph{perspective-$n$-point} (\pnp)~\cite{hartley2004book,kneip2014ECCV-UPnP}. A good candidate for the teacher is \ransac and its variants (\eg,~using \pthreep~\cite{gao03PAMI-P3P}). The student trains a 2D keypoint predictor under the supervision of camera poses. Recent works such as \yolosixd~\cite{Tekin18cvpr-yolo6d}, \pvnet~\cite{Peng19cvpr-pvnet}, and \dpod~\cite{Zakharov2019dpod} can all serve as the student network, despite using different methodologies. As for the \namefilter, similar to Example~\ref{ex:relativepose}, it can be designed based on the estimated inlier rate by \ransac. Alternatively, one can project the 3D models onto the 2D image using the estimated absolute poses and compute the overlap ratio (in terms of pixels) between the 2D projection and the estimated semantic ID mask. To initialize \nameshort, we can train a bootstrap predictor using synthetic datasets,~\ie,~by rendering synthetic projections of the 3D models under different simulated poses, which is common in~\cite{Zakharov2019dpod,Peng19cvpr-pvnet,Tekin18cvpr-yolo6d,chen19ICCVW-satellitePoseEstimation}.
%!TEX root = main.tex
\section{Detailed Experimental Data}
\label{sec:supp-experiments}
\subsection{Relative Pose Estimation}
In Section~\ref{sec:exp-relativepose}, Fig.~\ref{fig:relativepose-line-plot} plots the rotation statistics for running \nameshort on the \megadepth~\cite{Li18cvpr-megadepth} dataset for relative pose estimation. Here in Fig.~\ref{fig:supp-fig-line-plots}(a), we plot the translation statistics. In addition, the full statistics of \nameshort are tabulated in Table~\ref{tab:supp-megadepth-finetune}. Fig.~\ref{fig:supp-fig-megadepth-qualitative} visualizes 9 qualitative examples of relative pose estimation using \finalcaps on the \megadepth test set.

%!TEX root = main.tex
\renewcommand{\mpwthree}{5.9cm}
\renewcommand{\myvspace}{\vspace{2mm}}
\renewcommand{\myvspaceone}{\vspace{-1mm}}

\begin{figure}[h]
% \vspace{-2mm}
	\begin{center}
	\begin{minipage}{\columnwidth}
	\begin{tabular}{ccc}%
	\hspace{-4mm}
	  \begin{minipage}{\mpwthree}%
			\centering%
			\includegraphics[width=\columnwidth]{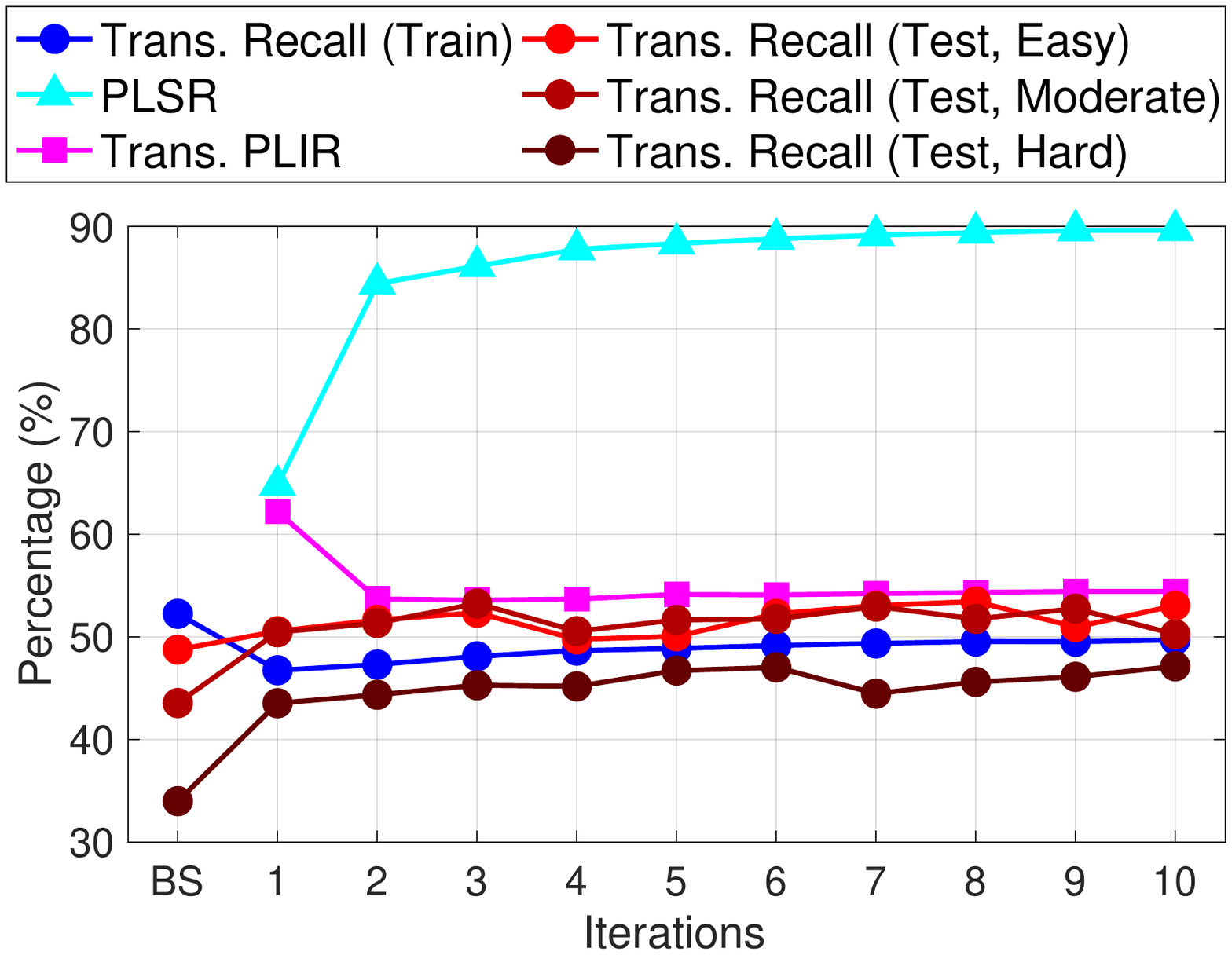}\\
			\myvspaceone
			{\smaller (a) \nameshort translation statistics on \megadepth.}
			\myvspace
		  \end{minipage}
	  &
	  \hspace{-4mm}
		\begin{minipage}{\mpwthree}%
			\centering%
			\includegraphics[width=\columnwidth]{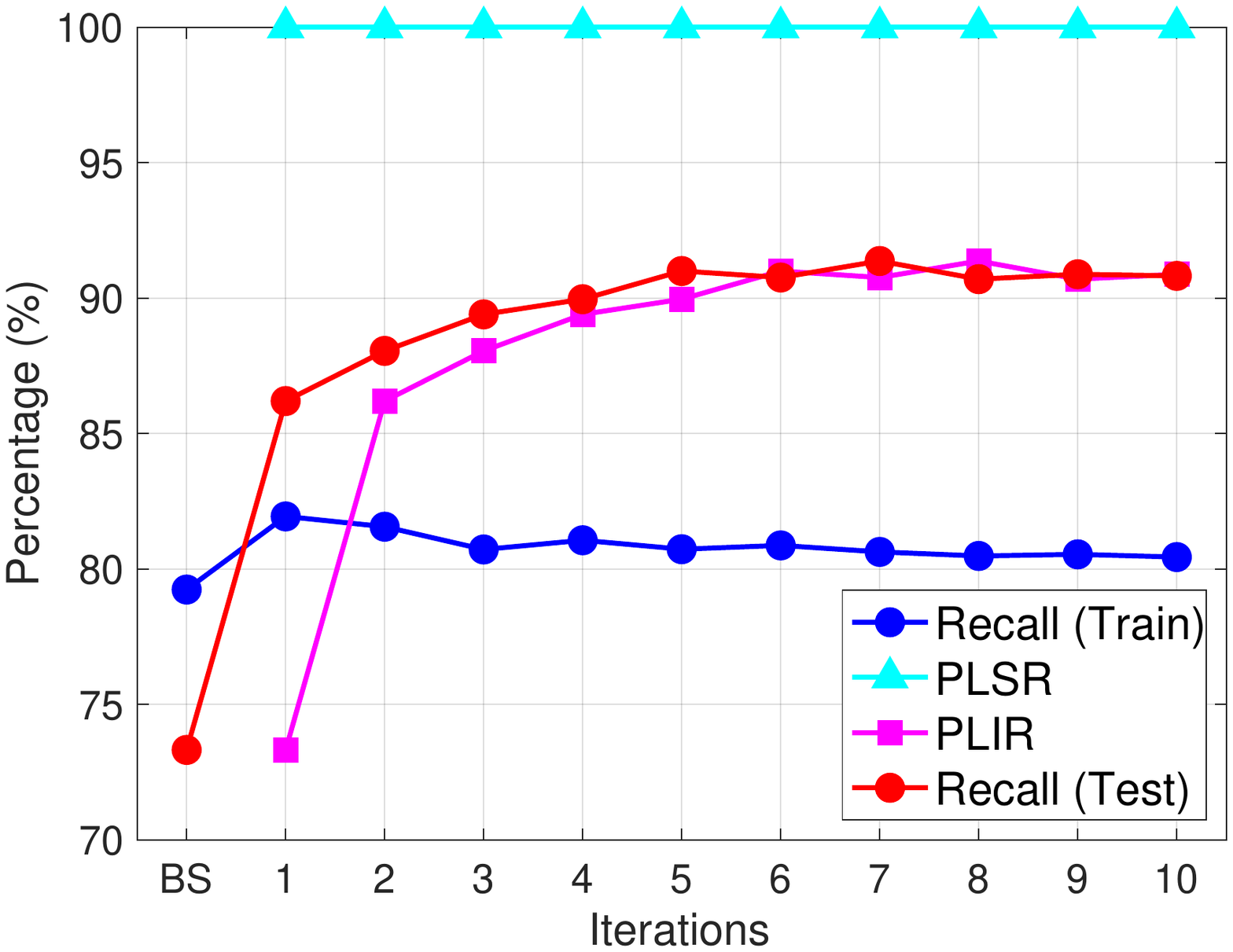}\\
			\myvspaceone
			{\smaller (b) \nameshort on \threedmatch with exchanged train and test.}
			\myvspace
		  \end{minipage}
	   &
	   \hspace{-4mm}
	   \begin{minipage}{\mpwthree}%
			\centering%
			\includegraphics[width=\columnwidth]{registration-line-plot-horn.pdf}\\
			\myvspaceone
			{\smaller (c) \nameshort on \threedmatch with \horn as teacher.}
			\myvspace
		  \end{minipage} \\

		\multicolumn{3}{c}{
			\begin{minipage}{\textwidth}%
			  	\caption{Supplementary statistics. (a) The translation statistics for using \nameshort on \megadepth~\cite{Li18cvpr-megadepth} (rotation statistics shown in Fig.~\ref{fig:relativepose-line-plot} in the main text). (b) Dynamics of \nameshort on \threedmatch~\cite{Zeng17cvpr-3dmatch} with training and test sets exchanged,~\ie,~we train \nameshort on the smaller test set ($1,623$ pairs), but test \sfcgf on the larger training set ($9,856$ pairs). (c) Dynamics of \nameshort on \threedmatch by replacing the original \ransactenk teacher with a non-robust Horn's method~\cite{horn87josa} as the teacher. }
			  	\label{fig:supp-fig-line-plots}
			\end{minipage}
		}
		\end{tabular}
	\end{minipage}
	\vspace{-4mm}
	\end{center}
\end{figure}
%!TEX root = main.tex
\renewcommand{\arraystretch}{1.2}
\begin{table}[h]
\centering
\begin{tabular}{cc|c|cccccccccc}
& & \sift & \multicolumn{10}{c}{\nameshort trained \caps (\scaps)} \\
 \multicolumn{2}{c|}{Statistics (\%)} & Bootstrap & $1$ & $2$ & $3$ & $4$ & $5$ & $6$ & $7$ & $8$ & $9$ & $10$ \\
 \hline 
 \hline 
 \multirow{5}{*}{\rotatebox{90}{ \hspace{-3mm} \emph{Train} }}  & \plsr & $**$ & $64.79$ & $84.44$ & $86.14$ & $87.78$ & $88.34$ & $88.80$ & $89.15$ & $89.41$ & $89.62$ & $89.64$ \\
 & Rot. \plir & $**$ & $92.50$ & $88.83$ & $88.35$ & $88.41$ & $88.25$ & $88.52$ & $88.50$ & $88.57$ & $88.43$ & $88.48$ \\ 
 & Rot. Recall & $87.75$ & $79.33$ & $79.69$ & $80.62$ & $80.70$ & $81.20$ & $81.34$ & $81.57$ & $81.57$ & $81.56$ & $81.68$ \\ 
 \cline{2-13}
 & Trans. \plir & $**$ & $62.20$ & $53.70$ & $53.58$ & $53.68$ & $54.13$ & $54.09$ & $54.22$ & $54.33$ & $54.43$ & $54.43$ \\ 
 & Trans. Recall & $52.25$ & $46.74$ & $47.30$ & $48.09$ & $48.66$ & $48.87$ & $49.16$ & $49.36$ & $49.54$ & $49.52$ & $49.70$ \\ 
 \hline 
 \hline 
 \multirow{6}{*}{\rotatebox{90}{ \hspace{-3mm} \emph{Test Recall} }} 
 & Rot., \emph{Easy}      & $80.88$ & $85.39$ & $85.49$ & $84.68$ & $85.69$ & $85.79$ & $85.79$ & $86.29$ & $\bm{87.09}$ & $85.49$ & $86.29$\\ 
 & Rot., \emph{Moderate}  & $58.06$ & $\bm{70.37}$ & $68.27$ & $\bm{70.37}$ & $69.77$ & $69.67$ & $69.27$ & $69.87$ & $68.87$ & $70.07$ & $69.17$ \\ 
 & Rot., \emph{Hard}      & $40.35$ & $48.36$ & $49.38$ & $50.31$ & $49.59$ & $50.10$ & $\bm{51.75}$ & $50.10$ & $50.72$ & $51.23$ & $51.33$\\ 
  \cline{2-13}
 & Trans., \emph{Easy}    & $48.75$ & $50.55$ & $51.65$ & $52.35$ & $49.75$ & $50.05$ & $52.25$ & $53.05$ & $\bm{53.45}$ & $50.95$ & $53.05$ \\ 
 & Trans., \emph{Moderate}& $43.54$ & $50.45$ & $51.35$ & $\bm{53.25}$ & $50.55$ & $51.65$ & $51.75$ & $52.95$ & $51.75$ & $52.75$ & $50.25$\\ 
 & Trans., \emph{Hard}    & $33.98$ & $43.53$ & $44.35$ & $45.28$ & $45.17$ & $46.71$ & $47.02$ & $44.46$ & $45.60$ & $46.10$ & $\bm{47.13}$\\ 
\end{tabular}
\vspace{2mm}
\caption{Train and test statistics of running \nameshort on \megadepth~\cite{Li18cvpr-megadepth}. \nameshort setting: \retrain = \false, \filterlabel = \true, \namefilter~criteria: number of matches larger than 100 and \ransac estimated inlier rate larger than $10\%$. Rotation statistics plotted in Fig.~\ref{fig:relativepose-line-plot} in the main text. Translation statistics plotted in Fig.~\ref{fig:supp-fig-line-plots}(a).}
\label{tab:supp-megadepth-finetune}
\end{table}
%!TEX root = main.tex
\renewcommand{\mpwthree}{5.9cm}
\renewcommand{\myvspace}{\vspace{2mm}}
\renewcommand{\myvspaceone}{\vspace{-1mm}}

\begin{figure}[h]
% \vspace{-2mm}
	\begin{center}
	\begin{minipage}{\columnwidth}
	\begin{tabular}{ccc}%
	\hspace{-4mm}
	  \begin{minipage}{\mpwthree}%
			\centering%
			\includegraphics[width=\columnwidth]{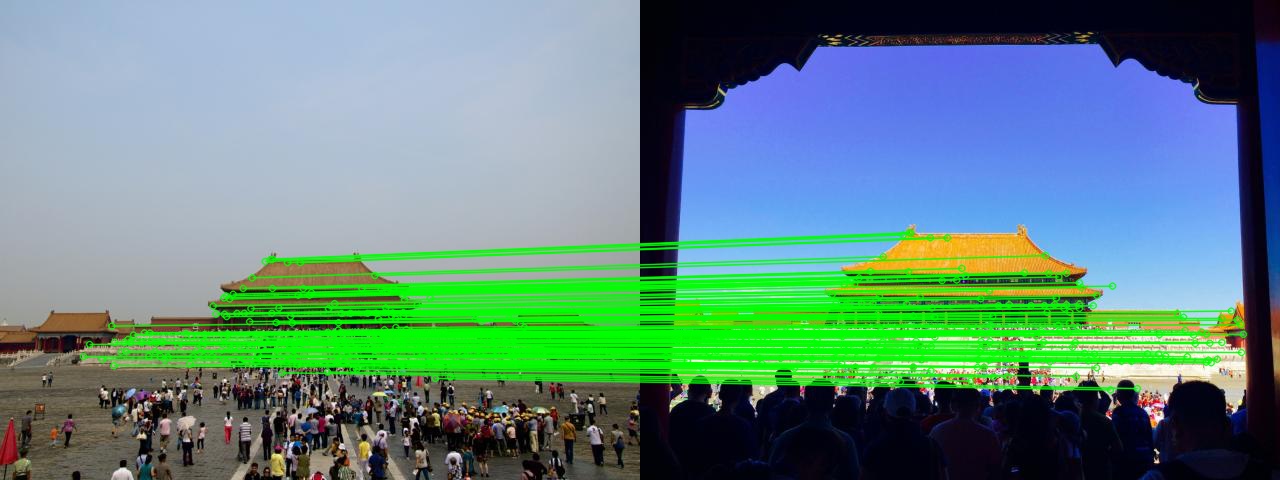}\\
		  \end{minipage}
	  &
	  \hspace{-4mm}
		\begin{minipage}{\mpwthree}%
			\centering%
			\includegraphics[width=\columnwidth]{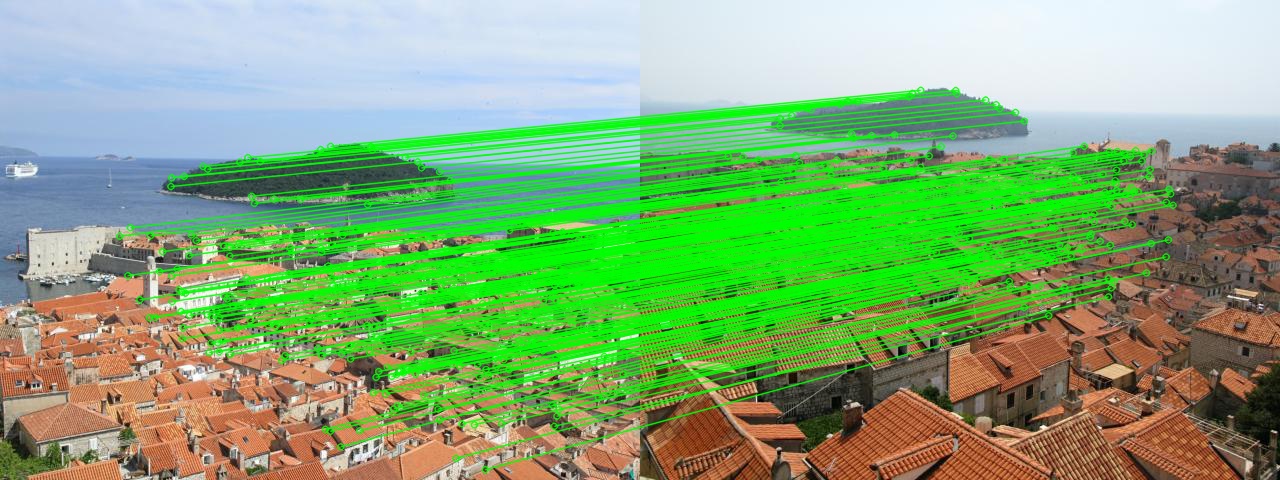}\\
		  \end{minipage}
	   &
	  \hspace{-4mm}
		\begin{minipage}{\mpwthree}%
			\centering%
			\includegraphics[width=\columnwidth]{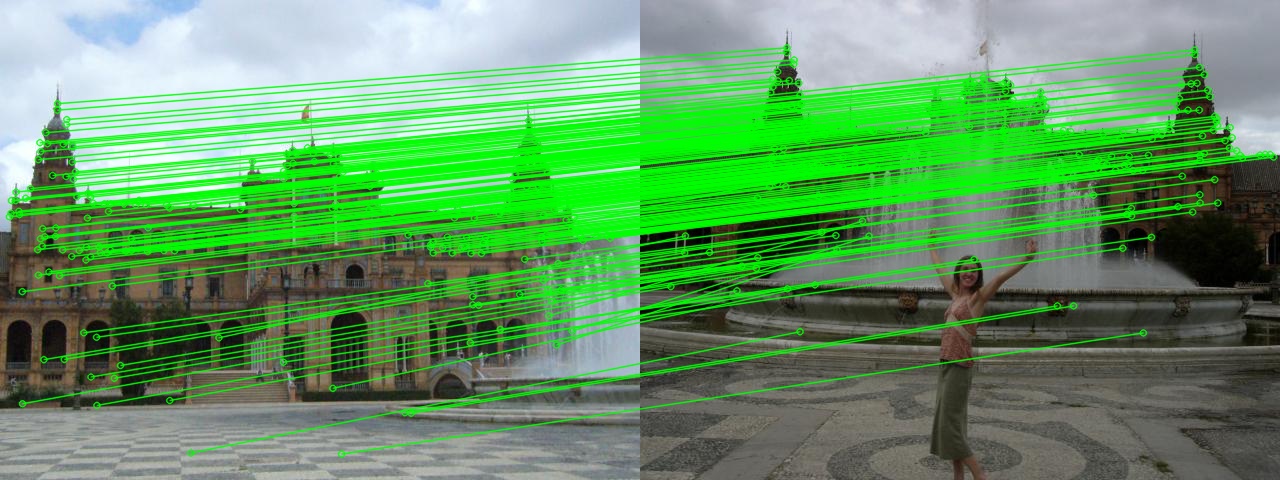}\\
		  \end{minipage} \\
	  \multicolumn{3}{c}{
			\begin{minipage}{\textwidth}%
			\centering
			  	{\smaller (a) \emph{Easy}}
			\end{minipage}
	  }\\
	  %%%%%%%%%%%%%%%%%%%%%%%%%%%%%%%%%%%%%%%%%%%%%%%%%%%%%%%%%%%%%%%%%%
	\hspace{-4mm}
	  \begin{minipage}{\mpwthree}%
			\centering%
			\includegraphics[width=\columnwidth]{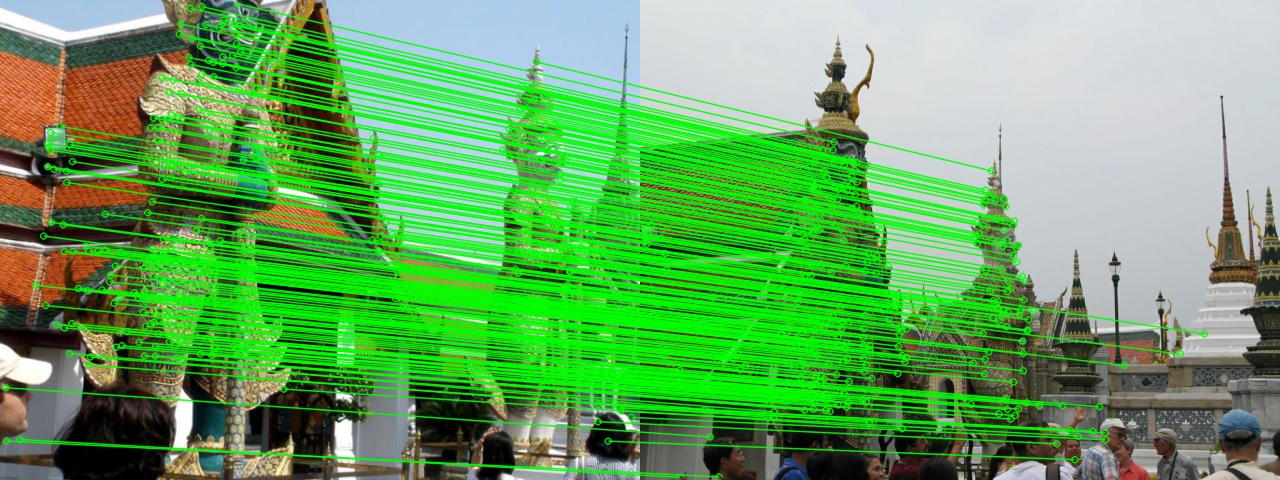}\\
		  \end{minipage}
	  &
	  \hspace{-4mm}
		\begin{minipage}{\mpwthree}%
			\centering%
			\includegraphics[width=\columnwidth]{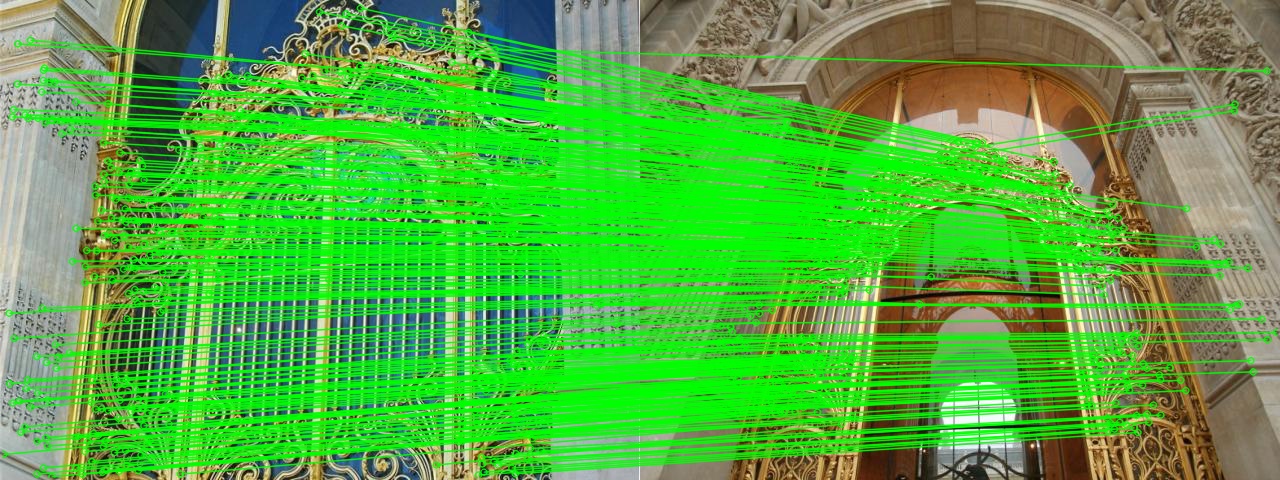}\\
		  \end{minipage}
	   &
	  \hspace{-4mm}
		\begin{minipage}{\mpwthree}%
			\centering%
			\includegraphics[width=\columnwidth]{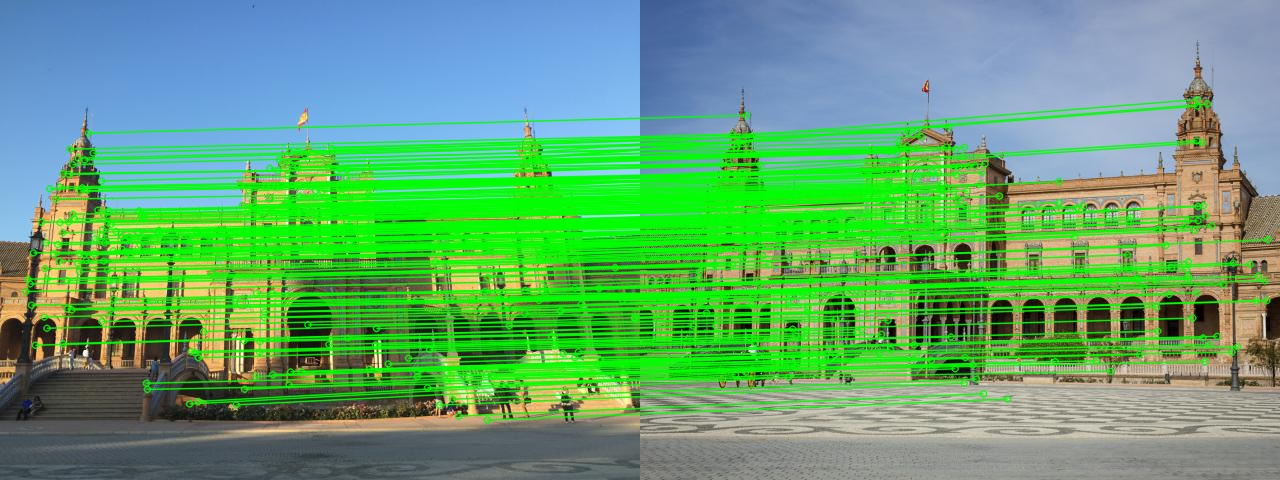}\\
		  \end{minipage} \\
	  \multicolumn{3}{c}{
			\begin{minipage}{\textwidth}%
			\centering
			  	{\smaller (b) \emph{Moderate}}
			\end{minipage}
	  }\\
	  %%%%%%%%%%%%%%%%%%%%%%%%%%%%%%%%%%%%%%%%%%%%%%%%%%%%%%%%%%%%%%%%%%

	  \hspace{-4mm}
	  \begin{minipage}{\mpwthree}%
			\centering%
			\includegraphics[width=\columnwidth]{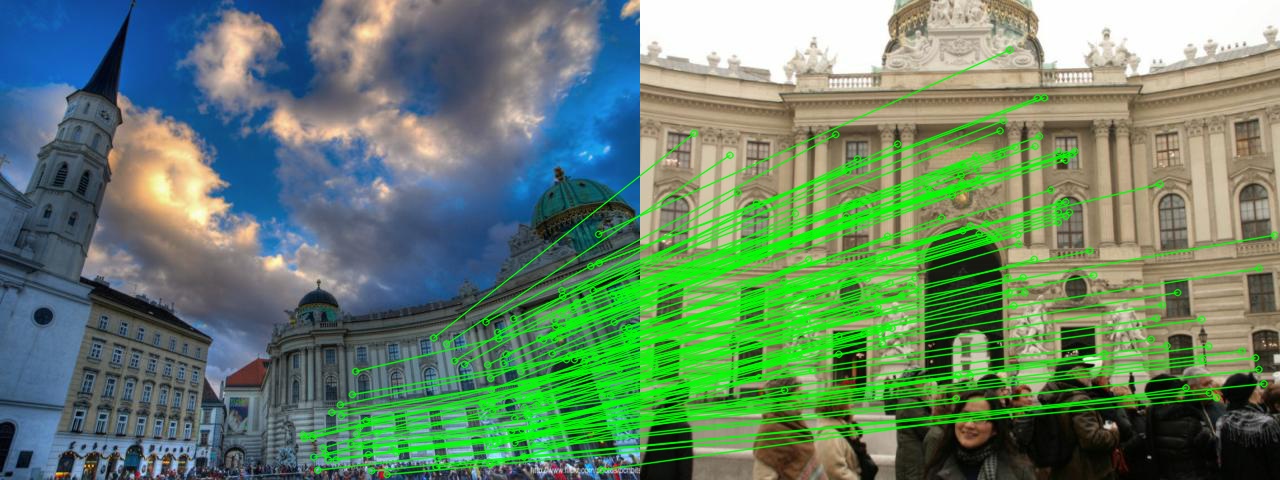}\\
		  \end{minipage}
	  &
	  \hspace{-4mm}
		\begin{minipage}{\mpwthree}%
			\centering%
			\includegraphics[width=\columnwidth]{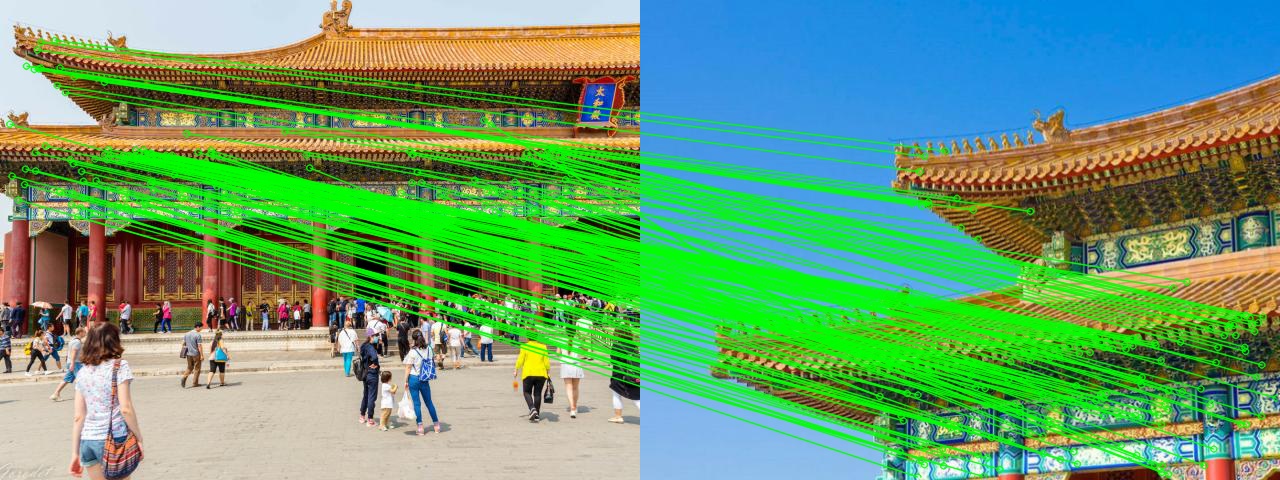}\\
		  \end{minipage}
	   &
	  \hspace{-4mm}
		\begin{minipage}{\mpwthree}%
			\centering%
			\includegraphics[width=\columnwidth]{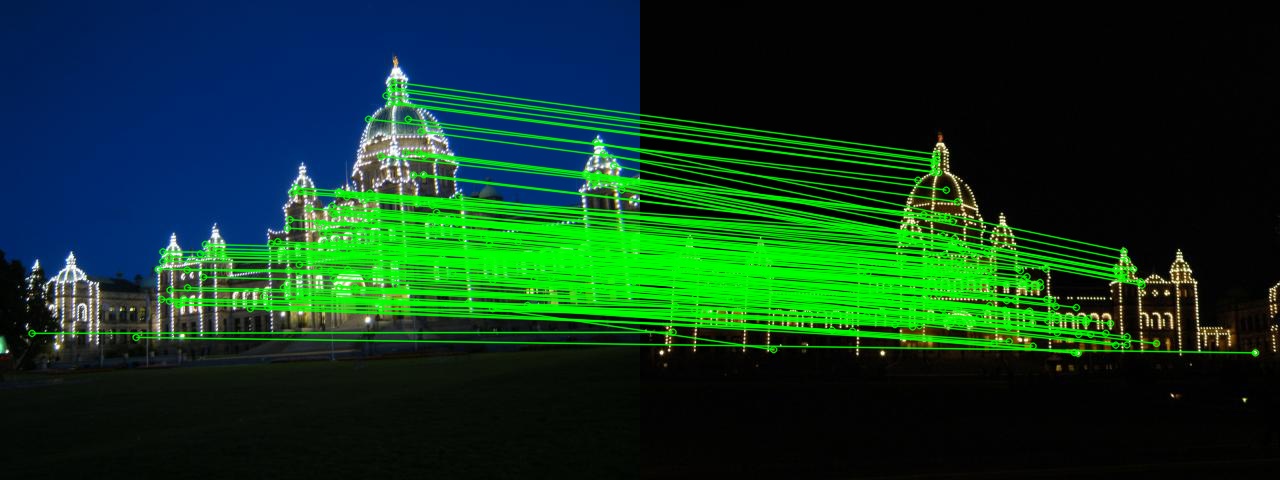}\\
		  \end{minipage} \\
	  \multicolumn{3}{c}{
			\begin{minipage}{\textwidth}%
			\centering
			  	{\smaller (c) \emph{Hard}}
			\end{minipage}
	  }\\
	  %%%%%%%%%%%%%%%%%%%%%%%%%%%%%%%%%%%%%%%%%%%%%%%%%%%%%%%%%%%%%%%%%%

		\multicolumn{3}{c}{
			\begin{minipage}{\textwidth}%
			  	\caption{Supplementary qualitative results for relative pose estimation on the \megadepth dataset~\cite{Li18cvpr-megadepth} using \finalcaps.}
			  	\label{fig:supp-fig-megadepth-qualitative}
			\end{minipage}
		}

		\end{tabular}
	\end{minipage}
	\vspace{-4mm}
	\end{center}
\end{figure}

\subsection{Point Cloud Registration}
In Section~\ref{sec:exp-registration}, Fig.~\ref{fig:registration-line-plot} plots the dynamics of runing \nameshort on the \threedmatch~\cite{Zeng17cvpr-3dmatch} dataset. Here we provide the full 
statistics in Table~\ref{tab:supp-3dmatch-finetune}.

%!TEX root = main.tex
\renewcommand{\arraystretch}{1.2}
\begin{table}[h]
\centering
\begin{tabular}{cc|c|cccccccccc}
& & \fpfh & \multicolumn{10}{c}{\nameshort trained \fcgf (\sfcgf)} \\
 \multicolumn{2}{c|}{Statistics (\%)} & Bootstrap & $1$ & $2$ & $3$ & $4$ & $5$ & $6$ & $7$ & $8$ & $9$ & $10$ \\
 \hline 
 \hline 
 \multirow{3}{*}{\rotatebox{90}{ \hspace{-3mm} \emph{Train} }}  & \plsr & $**$ & $69.98$ & $73.91$ & $95.53$ & $95.69$ & $95.74$ & $95.73$ & $95.75$ & $95.73$ & $95.76$ & $95.77$ \\
 & \plir & $**$ & $92.03$ & $93.42$ & $92.19$ & $92.82$ & $93.02$ & $93.25$ & $93.24$ & $93.43$ & $93.41$ & $93.39$ \\ 
 & Recall & $82.68$ & $89.14$ & $90.92$ & $91.14$ & $91.43$ & $91.76$ & $91.78$ & $91.95$ & $91.97$ & $91.95$ & $92.05$ \\ 
 \hline 
 \hline 
 \multirow{9}{*}{\rotatebox{90}{ \hspace{-3mm} \emph{Test Recall} }} 
 & Kitchen & $80.63$ & $98.42$ & $98.02$ & $98.22$ & $98.02$ & $98.22$ & $98.42$ & $98.02$ & $97.83$ & $98.62$ & $98.42$\\ 
 & Home 1 & $84.62$  & $92.31$ & $93.59$ & $91.03$ & $93.59$ & $92.95$ & $94.23$ & $94.23$ & $94.23$ & $94.23$ & $94.23$ \\ 
 & Home 2 & $69.23$ & $77.88$ & $74.04$ & $75.48$ & $75.00$ & $75.96$ & $73.08$ & $75.96$ & $76.92$ & $73.08$ & $75.00$\\ 
 & Hotel 1 & $88.05$ & $96.90$ & $97.35$ & $98.23$ & $97.79$ & $98.23$ & $99.12$ & $98.67$ & $98.67$ & $98.23$ & $98.67$ \\ 
 & Hotel 2 & $76.92$ & $87.50$ & $85.58$ & $86.54$ & $90.38$ & $89.42$ & $90.38$ & $90.38$ & $89.42$ & $89.42$ & $89.42$\\ 
 & Hotel 3 & $88.89$ & $85.19$  & $83.33$ & $83.33$ & $79.63$ & $81.48$ & $79.63$ & $85.19$ & $79.63$ & $77.78$ & $79.63$\\ 
 & Study & $71.23$ & $85.27$ & $86.30$ & $87.67$ & $86.99$ & $85.96$ & $86.99$ & $88.01$ & $86.99$ & $86.30$ & $87.33$ \\ 
 & MIT & $70.13$ & $79.22$ & $79.22$ & $80.52$ & $77.92$ & $77.92$ & $77.92$ & $80.52$ & $76.62$ & $79.22$ & $76.62$ \\ 
 \cline{2-13}
 & \emph{Overall} & $78.44$ & $90.57$ & $90.14$ & $90.63$ & $90.57$ & $90.57$ & $90.70$ & $\bm{91.37}$ & $90.82$ & $90.45$ & $90.82$
\end{tabular}
\vspace{2mm}
\caption{Train and test statistics of running \nameshort on \threedmatch~\cite{Zeng17cvpr-3dmatch}. \nameshort setting: \retrain = \false, \filterlabel = \true, \namefilter~overlap ratio threshold $\eta$: $\eta=30\%$ for iterations $\tau=1,2$, $\eta = 10\%$ for iterations $\tau=3,\dots,10$. Statistics plotted in Fig.~\ref{fig:registration-line-plot} in the main text.}
\label{tab:supp-3dmatch-finetune}
\end{table}

For qualitative results, in Fig.~\ref{fig:supp-fig-multiway} we showcase multiway registration results on various RGB-D datasets~\cite{sturm12iros,Choi15cvpr-robustrecon,choi2016arxiv,park2017iccv} in addition to Fig.~\ref{fig:registration-qualitative}. With \sfcgf, rich loop closures can be detected (in green lines), ensuring high-fidelity camera poses for dense reconstruction. It is worth noting that global registration with trained \sfcgf\!+\ransactenk, unlike \dgr, can easily run in parallel on a single graphics card due to its inexpensive memory cost. This results in at least $4\times$ speedup comparing to \dgr in practice when multi-thread loop closure detection is enabled~\cite{Zhou18arxiv-open3D}.
%!TEX root = main.tex
\renewcommand{\mpwthree}{5.9cm}
\renewcommand{\mpwtwo}{8.8cm}
\renewcommand{\myvspace}{\vspace{2mm}}
\renewcommand{\myvspaceone}{\vspace{-1mm}}

\begin{figure}[h]
% \vspace{-2mm}
	\begin{center}
	\begin{minipage}{\columnwidth}
	\begin{tabular}{ccc}%
	\hspace{-4mm}
	  \begin{minipage}{\mpwtwo}%
			\centering%
			\includegraphics[width=\columnwidth]{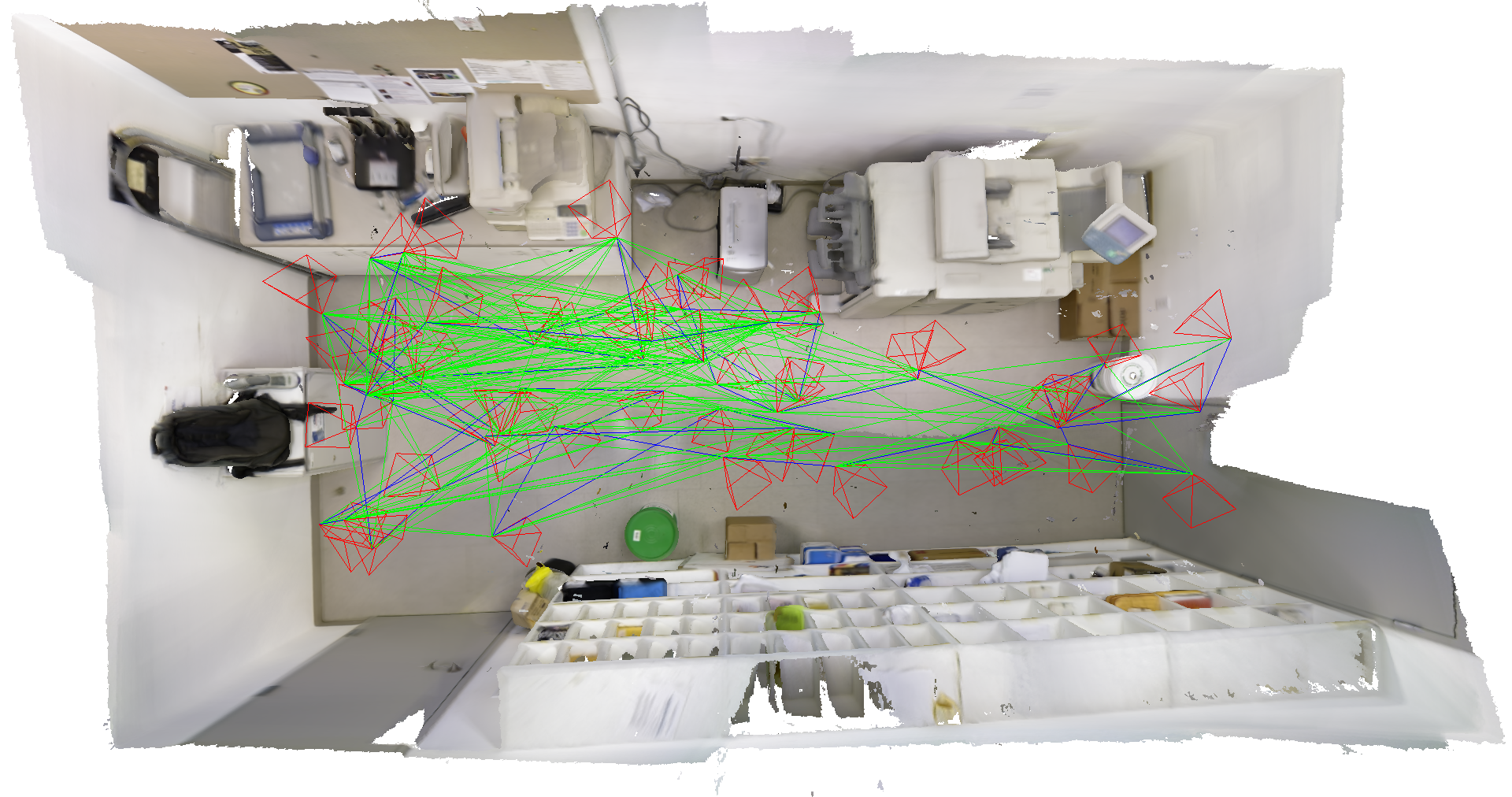}\\
			\myvspaceone
			{\smaller (a) \textit{copyroom} from \multiwayname~\cite{Choi15cvpr-robustrecon}.}
			\myvspace
		  \end{minipage}
	  &
	  \hspace{-4mm}
		\begin{minipage}{\mpwtwo}%
			\centering%
			\includegraphics[width=\columnwidth]{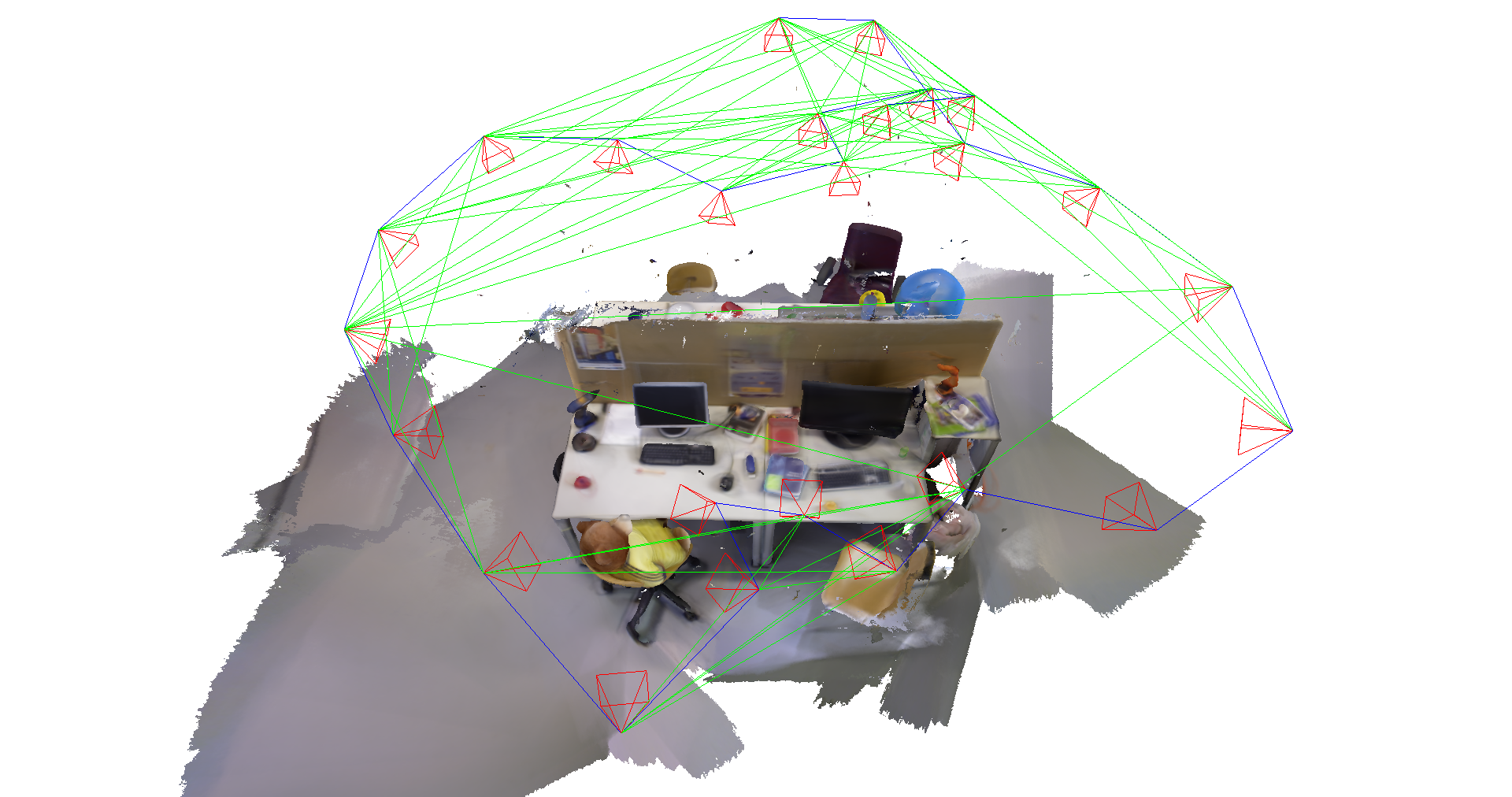}\\
			\myvspaceone
			{\smaller (b) \textit{long\_office} from \tum~\cite{sturm12iros}.}
			\myvspace
		  \end{minipage}\\	   

	   \begin{minipage}{\mpwtwo}%
			\centering%
			\includegraphics[width=\columnwidth]{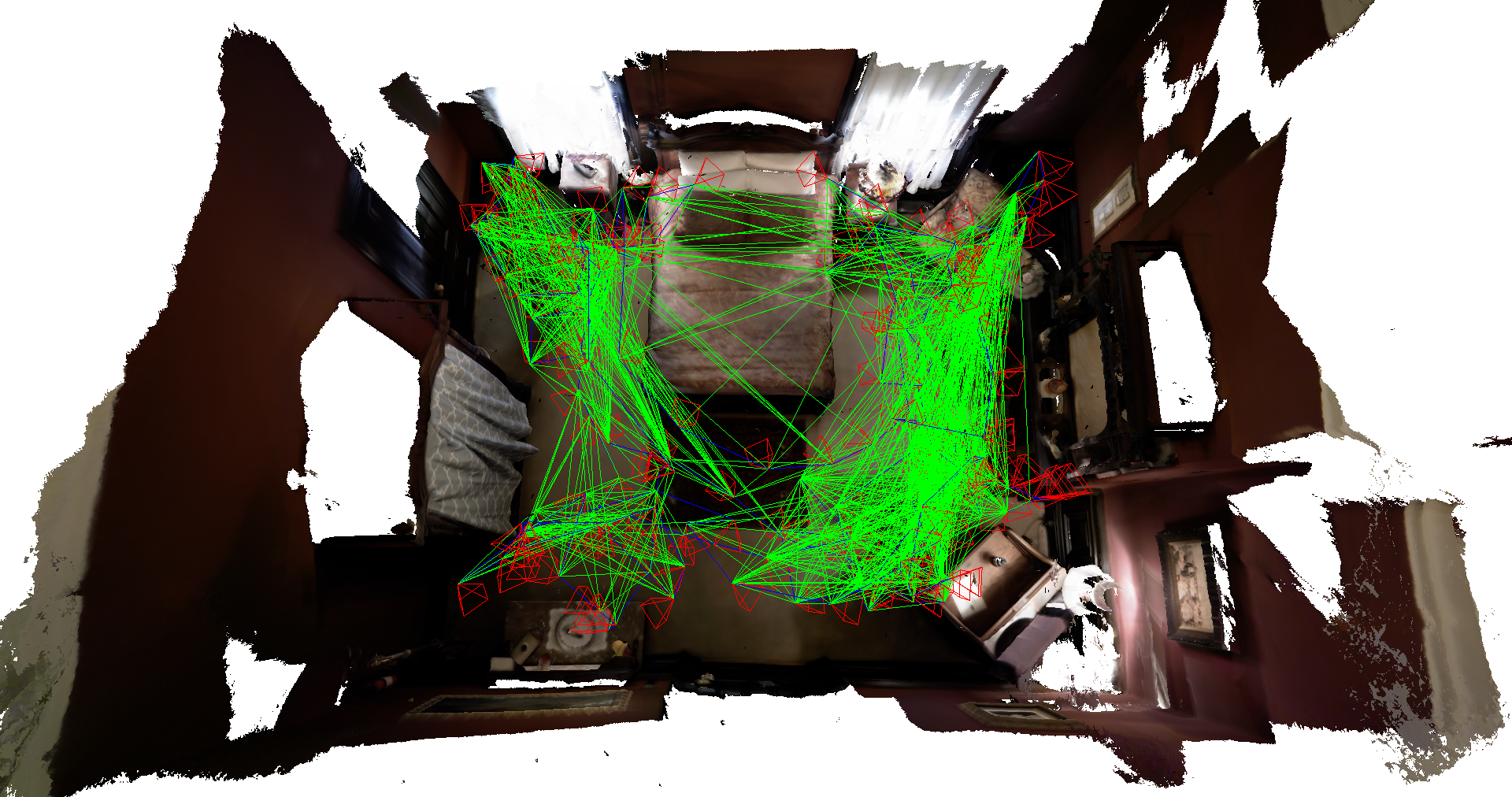}\\
			\myvspaceone
			{\smaller (c) \textit{bedroom} from \indoorlidarrgbd~\cite{park2017iccv}.}
			\myvspace
        \end{minipage} 
        &
        \hspace{-4mm}          
        \begin{minipage}{\mpwtwo}%
			\centering%
			\includegraphics[width=\columnwidth]{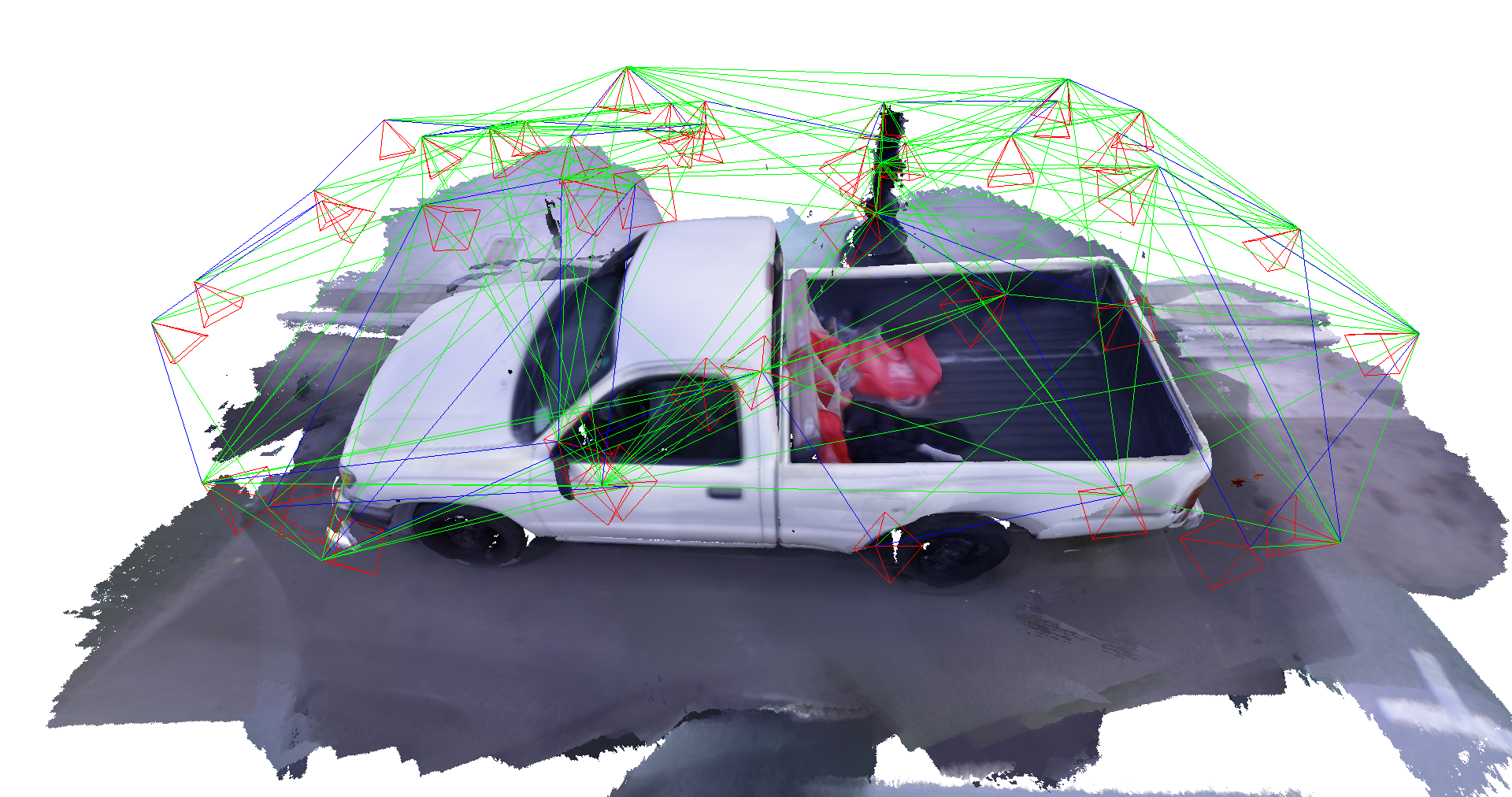}\\
			\myvspaceone
			{\smaller (d) \textit{truck} from \redwoodobjs~\cite{choi2016arxiv}.}
			\myvspace
        \end{minipage}
        \\

		\multicolumn{3}{c}{
			\begin{minipage}{\textwidth}%
			  	\caption{Supplementary qualitative results for 3D registration. Multi-way reconstruction using \sfcgf\!\!+\ransactenk as the global registration method succeeds on various unseen RGB-D datasets. Blue lines: odometry. Green lines: loop closures.}
			  	\label{fig:supp-fig-multiway}
			\end{minipage}
		}
		\end{tabular}
	\end{minipage}
	\vspace{-4mm}
	\end{center}
\end{figure}

\subsection{Ablation Study}
In Section~\ref{sec:exp-ablation}, Fig.~\ref{fig:ablation-dynamics} plots the dynamics of running \nameshort on \threedmatch with two different algorithmic settings: (a) set \retrain=\true and use \retrain instead of \finetune; (b) set \filterlabel = \false and turn off the \namefilter. Here we provide the full statistics for (a) and (b) in Table~\ref{tab:supp-3dmatch-retrain} and Table~\ref{tab:supp-3dmatch-finetune-nofilter}, respectively.

%!TEX root = main.tex
\renewcommand{\arraystretch}{1.2}
\begin{table}[h]
\centering
\begin{tabular}{cc|c|cccccccccc}
& & \fpfh & \multicolumn{10}{c}{\nameshort trained \fcgf (\sfcgf)} \\
 \multicolumn{2}{c|}{Statistics (\%)} & Bootstrap & $1$ & $2$ & $3$ & $4$ & $5$ & $6$ & $7$ & $8$ & $9$ & $10$ \\
 \hline 
 \hline 
 \multirow{3}{*}{\rotatebox{90}{ \hspace{-3mm} \emph{Train} }}  & \plsr & $**$ & $68.48$ & $95.68$ & $95.61$ & $95.61$ & $95.69$ & $95.64$ & $95.60$ & $95.61$ & $95.67$ & $95.65$\\
 & \plir & $**$ & $90.86$ & $91.16$ & $92.27$ &	$92.40$ & $92.29$ &$92.47$ & $92.52$ & $92.61$ & $92.95$ & $92.44$\\
 & Recall & $79.24$	& $89.53$ & $90.60$ & $90.69$ & $90.68$ & $90.79$ & $90.77$ & $90.88$ & $91.20$ & $90.72$ & $90.84$ \\
 \hline 
 \hline 
 \multirow{9}{*}{\rotatebox{90}{ \hspace{-3mm} \emph{Test Recall} }} & Kitchen & $**$ & $97.23$	& $97.63$ & $98.22$ & $97.83$ & $98.42$ & $97.83$	& $97.83$ & $97.23$ & $98.42$ & $98.22$\\
 & Home 1 & $**$ & $91.67$  & $93.59$ & $94.23$ & $95.51$ & $94.87$ & $93.59$ & $95.51$ & $95.51$ & $91.03$ & $93.59$\\
 & Home 2 & $**$ & $73.56$  & $71.63$ & $76.92$ & $73.56$ & $75.00$ & $74.04$ & $72.60$	& $76.44$ & $75.00$ & $75.00$\\
 & Hotel 1 & $**$ & $96.90$ &	$96.90$ & $96.90$ &	$96.46$&	$96.90$ &	$96.46$ &	$96.90$&	$98.23$ & $97.35$ & $96.90$\\ 
 & Hotel 2 & $**$ & $85.58$&	$89.42$&	$92.31$&	$88.46$&	$87.50$&	$90.38$&	$88.46$&	$88.46$ & $86.54$ & $91.35$\\ 
 & Hotel 3 & $**$ & $85.19$ &	$88.89$&	$83.33$&	$81.48$&	$83.33$&	$83.33$&	$83.33$&	$85.19$& $85.19$ & $83.33$\\ 
 & Study & $**$ &  $82.88$&		$84.59$&	$86.64$&	$88.36$&	$88.70$&	$87.67$&	$87.67$&	$86.30$ & $87.33$ & $86.64$\\ 
 & MIT & $**$ & $85.71$&	$83.12$&	$79.22$&	$79.22$&	$83.12$&	$80.52$&	$83.12$&	$77.92$ & $77.92$ & $84.42$\\ 
 \cline{2-13}
 & \emph{Overall} & $**$ & $89.34$&	$89.96$&	$91.07$&	$90.57$&	$\bm{91.19}$&	$90.57$&	$90.63$&	$90.70$ & $90.39$ & $90.94$
\end{tabular}
\vspace{2mm}
\caption{Train and test statistics of running \nameshort on \threedmatch~\cite{Zeng17cvpr-3dmatch}. \nameshort setting: \retrain = \true, \filterlabel = \true, \namefilter~overlap ratio threshold $\eta$: $\eta=10\%$ for all iterations $\tau=1,\dots,10$. Statistics plotted in Fig.~\ref{fig:ablation-dynamics}(a) in the main text.}
\label{tab:supp-3dmatch-retrain}
\end{table}
%!TEX root = main.tex
\renewcommand{\arraystretch}{1.2}
\begin{table}[h]
\centering
\begin{tabular}{cc|c|cccccccccc}
& & \fpfh & \multicolumn{10}{c}{\nameshort trained \fcgf (\sfcgf)} \\
 \multicolumn{2}{c|}{Statistics (\%)} & Bootstrap & $1$ & $2$ & $3$ & $4$ & $5$ & $6$ & $7$ & $8$ & $9$ & $10$ \\
 \hline 
 \hline 
 \multirow{3}{*}{\rotatebox{90}{ \hspace{-3mm} \emph{Train} }}  & \plsr & $**$ & $100.0$ & $100.0$ & $100.0$ & $100.0$ & $100.0$ & $100.0$ & $100.0$ & $100.0$ & $100.0$ & $100.0$ \\
 & \plir & $**$ & $79.24$ & $88.82$ & $90.86$ & $91.25$ & $91.63$ & $91.59$ & $91.93$ & $92.12$ & $91.89$ & $91.97$\\
 & Recall & $79.24$ & $88.82$ & $90.86$ & $91.25$ & $91.63$ & $91.59$ & $91.93$ & $92.12$ & $91.89$ & $91.97$ & $92.05$\\
 \hline 
 \hline 
 \multirow{9}{*}{\rotatebox{90}{ \hspace{-3mm} \emph{Test Recall} }} & Kitchen & $**$ & $97.43$ & $98.22$ & $98.62$ & $97.83$ & $98.62$ & $98.81$ & $98.22$ & $98.62$ & $98.22$ & $98.22$ \\
 & Home 1 & $**$ & $92.31$ & $94.23$ & $91.67$ & $94.23$ & $94.23$ & $92.95$ & $93.59$ & $93.59$ & $94.87$ & $92.95$\\
 & Home 2 & $**$ & $74.04$ & $75.00$ & $72.12$ & $77.40$ & $74.04$ & $74.04$ & $73.56$ & $74.04$ & $73.56$ & $73.08$\\
 & Hotel 1 & $**$ & $95.58$ & $98.23$ & $97.35$ & $97.79$ & $99.12$ & $98.23$ & $98.67$ & $96.90$ & $97.79$ & $97.35$\\
 & Hotel 2 & $**$ & $90.38$ & $93.27$ & $88.46$ & $90.38$ & $88.46$ & $87.50$ & $86.54$ & $88.46$ & $88.46$ & $89.42$\\
 & Hotel 3 & $**$ & $88.89$ & $85.19$ & $83.33$ & $87.04$ & $85.19$ & $85.19$ & $81.48$ & $85.19$ & $83.33$ & $81.48$\\
 & Study & $**$ & $84.59$ & $87.33$ & $87.67$ & $86.64$ & $88.01$ & $88.01$ & $87.67$ & $88.70$ & $88.70$ & $87.67$\\
 & MIT & $**$ & $76.62$	& $83.12$ & $77.92$ & $84.42$ & $79.22$ & $80.52$ & $80.52$	& $84.42$ & $83.12$ & $83.12$\\
 \cline{2-13}
 & \emph{Overall} & $**$ & $89.65$ & $\bm{91.44}$ & $90.26$ & $91.37$ & $91.19$ & $91.00$ & $90.63$ & $91.19$ & $91.13$ & $90.63$
\end{tabular}
\vspace{2mm}
\caption{Train and test statistics of running \nameshort on \threedmatch~\cite{Zeng17cvpr-3dmatch}. \nameshort setting: \retrain = \false, \filterlabel = \false. Statistics plotted in Fig.~\ref{fig:ablation-dynamics}(b) in the main text.}
\label{tab:supp-3dmatch-finetune-nofilter}
\end{table}

Additionally, we show results for two extra ablation experiments on the \threedmatch dataset for point cloud registration.

{\bf Exchange the training and test sets}. Because \nameshort requires no ground-truth pose labels, there is no fundamental difference between the training and test set, except that the training set ($9,856$ pairs) is much larger than the test set ($1,623$ pairs). Therefore, we ask the question: \emph{Can \nameshort learn an equally good feature representation from the much smaller test set?} Our answer is: \emph{it depends on the purpose}. We performed an experiment where we trained \nameshort on the test set, and tested the learned \sfcgf representation on the much larger training set. For \nameshort we used \retrain = \false and \filterlabel = \false. Fig.~\ref{fig:supp-fig-line-plots}(b) plots the dynamics and Table~\ref{tab:supp-3dmatch-train-test-flip} provides the full statistics. Two observations can be made: (i) Exchanging the training and test set has almost no effect on the recall of \sfcgf on the test set (\cf~Table~\ref{tab:supp-3dmatch-train-test-flip} vs Table~\ref{tab:supp-3dmatch-finetune}-\ref{tab:supp-3dmatch-finetune-nofilter}). This means that, if one only cares about the performance of the learned representation on the test set, then running \nameshort directly on the target test set is sufficient. (ii) Although exchanging the training and test set does not hurt the recall on the test set, it indeed decreases the recall on the training set by more than $10\%$. This suggests that a small training set has the shortcoming of overfitting and the learned representation fails to generalize to a larger dataset. Therefore, if one cares generalization of the learned representation, then a larger training set is still preferred. Nevertheless, this ablation study demonstrates the power of the alternating minimization nature of \nameshort, that is, \nameshort is able to find a sufficiently good local minimum. 

%!TEX root = main.tex
\renewcommand{\arraystretch}{1.2}
\begin{table}[h]
\centering
\begin{tabular}{cc|c|cccccccccc}
& & \fpfh & \multicolumn{10}{c}{\nameshort trained \fcgf (\sfcgf)} \\
 \multicolumn{2}{c|}{Statistics (\%)} & Bootstrap & $1$ & $2$ & $3$ & $4$ & $5$ & $6$ & $7$ & $8$ & $9$ & $10$ \\
 \hline 
 \hline 
 \multirow{11}{*}{\rotatebox{90}{ \hspace{-3mm} \emph{Train} }}  & \plsr & $**$ & $100.0$ & $100.0$ & $100.0$ & $100.0$ & $100.0$ & $100.0$ & $100.0$ & $100.0$ & $100.0$ & $100.0$ \\
 & \plir & $**$ & $73.32$ & $86.20$ & $88.05$ & $89.40$ & $89.96$ & $91.00$ & $90.76$ & $91.37$ & $90.70$ & $90.88$ \\ 
 & Recall & $73.32$ & $86.20$ & $88.05$ & $89.40$ & $89.96$ & $91.00$ & $90.76$ & $\bm{91.37}$ & $90.70$ & $90.88$ & $90.82$ \\ 
 \cline{2-13}
 & Kitchen & $**$ & $94.66$ & $96.84$ & $98.42$ & $98.81$ & $99.21$ & $99.60$ & $99.41$ & $99.01$ & $99.21$ & $99.21$\\ 
 & Home 1  & $**$ & $91.03$ & $89.74$ & $93.59$ & $95.51$ & $95.51$ & $94.87$ & $94.87$ & $95.51$ & $96.15$ & $95.51$\\ 
 & Home 2  & $**$ & $70.67$ & $70.67$ & $71.63$ & $69.23$ & $71.15$ & $70.19$ & $74.04$ & $72.60$ & $72.12$ & $72.60$\\ 
 & Hotel 1 & $**$ & $94.69$ & $96.02$ & $97.35$ & $98.67$ & $98.67$ & $99.12$ & $99.12$ & $99.12$ & $99.12$ & $99.12$\\ 
 & Hotel 2 & $**$ & $77.88$ & $79.81$ & $77.88$ & $80.77$ & $84.62$ & $83.65$ & $86.54$ & $83.65$ & $84.62$ & $83.65$ \\ 
 & Hotel 3 & $**$ & $83.33$ & $85.19$ & $81.48$ & $85.19$ & $88.89$ & $87.04$ & $85.19$ & $83.33$ & $84.19$ & $85.19$\\ 
 & Study & $**$ & $79.79$ & $84.93$ & $87.33$ & $86.64$ & $88.36$ & $87.33$ & $87.67$ & $87.33$ & $87.67$ & $86.99$\\ 
 & MIT & $**$ & $75.32$ & $75.32$ & $75.32$ & $79.22$ & $79.22$ & $80.52$ & $80.52$ & $77.92$ & $76.62$ & $79.22$\\ 
 \hline \hline
 \multicolumn{2}{c|}{ \emph{Test on train set} } & $79.24$ & $\bm{81.94}$ & $81.56$ & $80.72$ & $81.06$ & $80.73$ & $80.87$ & $80.63$ & $80.48$ & $80.54$ & $80.44$
\end{tabular}
\vspace{2mm}
\caption{Train and test statistics of running \nameshort on \threedmatch~\cite{Zeng17cvpr-3dmatch} with {\bf training and test sets exchanged},~\ie,~we train \nameshort on the smaller test set ($1,623$ pairs), but test \sfcgf on the larger training set ($9,856$ pairs). \nameshort setting: \retrain = \false, \filterlabel = \false. Statistics plotted in Fig.~\ref{fig:supp-fig-line-plots}(b). We see \nameshort demonstrates overfitting while training on the smaller test set: \sfcgf achieves equally good ($91.37\%$) recall on the test set, but only achieves below $82\%$ recall on the training set (while in Tables~\ref{tab:supp-3dmatch-finetune}-\ref{tab:supp-3dmatch-finetune-nofilter} \sfcgf has over $92\%$ recall on the training set). Statistics plotted in Fig.~\ref{fig:supp-fig-line-plots}(b).}
\label{tab:supp-3dmatch-train-test-flip}
\end{table}

{\bf Use a non-robust solver as the teacher}. All the experiments so far showed successes of the teacher-student loop, and the robustness of the \nameshort algorithm to imperfections of both the student and the teacher (noisy geometric pseudo-labels). However, we ask another question: \emph{Can we, intentionally, make \nameshort fail?} Our answer is: yes if we try badly. We performed an experiment running \nameshort on \threedmatch, this time replacing \ransactenk with the non-robust Horn's method~\cite{horn87josa}. We remark that Horn's method is a subroutine of \ransac and in practice nobody would use Horn's method alone in the presence of outlier correspondences. Nevertheless, for the purpose of ablation study, we adopted this pessimistic choice. Again, for \nameshort we used \retrain = \false, \filterlabel = \true with a constant overlap ratio threshold $\eta = 10\%$. Fig.~\ref{fig:supp-fig-line-plots}(c) shows the dynamics. We see that the \plir is always below $20\%$, meaning that 8 out of 10 geometric labels passed to \fcgf training are wrong. In this case, the learned \sfcgf representation keeps getting worse, as shown by the decreasing recalls on both the training and test set. Note that for testing, we actually used \ransactenk as the registration solver to be consistent with other experiments we performed on \threedmatch. However, even with \ransactenk, the test recall drops to below $30\%$. Therefore, this ablation study shows the necessity of a robust teacher for \nameshort to work. Fortunately, we have plenty of robust solvers, as discussed in the main text. So we think this is a strength of \nameshort, rather than a weakness.

\end{document}